\renewcommand*\backref[1]{\ifx#1\relax \else (Cited on #1) \fi}
\newcommand{\mtodo}[1]{\todo[color=red!30, inline]{ #1}} 
\theoremstyle{plain}
\newtheorem{theorem}{Theorem}[section]
\newtheorem{proposition}[theorem]{Proposition}
\newtheorem{lemma}[theorem]{Lemma}
\newtheorem{corollary}[theorem]{Corollary}
\newtheorem{example}[theorem]{Example}
\theoremstyle{definition}
\newtheorem{definition}[theorem]{Definition}
\theoremstyle{remark}
\newtheorem{remark}[theorem]{Remark}
\def\argmin{ \mathop{{\rm argmin}}}
\newcommand{\diam}{\mathrm{diam}\,}
\newcommand{\dom}{\mathrm{dom}\,}
\newcommand{\ri}{\mathrm{ri}\,}
\newcommand{\p}{\partial}
\newcommand{\R}{\mathbb{R}}
\newcommand{\rbar}{\overline{\mathbb R}}
\newcommand{\bN}{\mathbb{N}}
\newcommand{\ip}[2]{\langle #1,\, #2\rangle}
\newcommand{\set}[2]{\left\{#1\,\left\vert\; #2\right.\right\}}
\newcommand{\cN}{\mathcal{N}}
\newcommand{\1}{\mathds{1}}
\newcommand{\DCAR}{{DCAR}\xspace}
\newcommand{\CDCAR}{{CDCAR}\xspace}
\newcommand{\ADCAR}{{ADCAR}\xspace}
\newcommand{\round}{\mathrm{Round}_F}
\newcommand{\red}[1]{}
\newcommand{\blue}[1]{\textcolor{black}{#1}}
\newcommand{\green}[1]{\textcolor{black}{#1}}
\icmltitlerunning{Difference of submodular minimization via DC programming}
\begin{document}

\twocolumn[
\icmltitle{Difference of Submodular Minimization via DC Programming}



\icmlsetsymbol{equal}{*}

\begin{icmlauthorlist}
\icmlauthor{Marwa El Halabi}{S}
\icmlauthor{George Orfanides}{M}
\icmlauthor{Tim Hoheisel}{M}
\end{icmlauthorlist}

\icmlaffiliation{S}{Samsung - SAIT AI Lab, Montreal}
\icmlaffiliation{M}{Department of Mathematics and Statistics, McGill University}

\icmlcorrespondingauthor{Marwa El Halabi}{marwa.elhalabi@gmail.com}

\icmlkeywords{Difference of submodular functions, Difference of convex functions, DC programming, DCA, Complete DCA, Submodular-Supermodular procedure, SubSup, Lov\'asz extension}

\vskip 0.3in
]



\printAffiliationsAndNotice{}  

\begin{abstract}
Minimizing the difference of two submodular (DS) functions is a problem that naturally occurs in various machine learning problems. Although it is well known that a DS problem can be equivalently formulated as the minimization of the difference of two convex (DC) functions, existing algorithms do not fully exploit this connection.  A classical algorithm for DC problems is called the DC algorithm (DCA). We introduce variants of DCA and its complete form (CDCA) that we apply to the DC program corresponding to DS minimization. 
We extend existing convergence properties of DCA, and connect them to convergence properties on the DS problem. Our results on DCA match the theoretical guarantees satisfied by existing DS algorithms, while providing a more complete characterization of convergence properties. In the case of CDCA, we obtain a stronger local minimality guarantee. Our numerical results show that our proposed algorithms outperform existing baselines on two applications: speech corpus selection and feature selection.  
\end{abstract}

\section{Introduction}
We study the difference of submodular (DS) functions minimization problem
\begin{equation}\label{eq:DS}
\min_{X \subseteq V} F(X):= G(X) - H(X),
\end{equation}
\looseness=-1 where $G$ and $H$ are normalized submodular functions (see \cref{sec:prelim} for definitions). We denote the minimum of \eqref{eq:DS} by $F^\star$.
Submodular functions are set functions that satisfy a diminishing returns property, which naturally occurs in a variety of machine learning applications. Many of these applications involve DS minimization, 
such as feature selection, probabilistic inference \cite{Iyer2012a}, learning discriminatively structured graphical models \cite{Narasimhan2005a}, and learning decision rule sets   \cite{Yang2021}. In fact, this problem is ubiquitous as any set function can be expressed as a DS function, though finding a DS decomposition has exponential complexity in general \cite{Narasimhan2005a, Iyer2012a}. 

Unlike submodular functions which can be minimized in polynomial time, 
minimizing DS functions up to any constant factor multiplicative approximation requires exponential time, and obtaining any  positive polynomial time computable multiplicative approximation is NP-Hard \citep[Theorems 5.1 and 5.2]{Iyer2012a}. Even finding a local minimum (see \cref{def:localmin}) of DS functions is PLS complete \citep[Section 5.3]{Iyer2012a}.

DS minimization was first studied in \cite{Narasimhan2005a}, who proposed the submodular-supermodular (SubSup) procedure; an algorithm inspired by 
the convex-concave procedure \cite{Yuille2001}, which monotonically
reduces the objective function at every step and converges to a local minimum. \citet{Iyer2012a} extended the work of \cite{Narasimhan2005a} by proposing two other algorithms, the supermodular-submodular (SupSub) and  the modular-modular (ModMod) procedures, which have lower per-iteration cost than the SubSup method, while satisfying the same theoretical guarantees. 

The DS problem can be equivalently formulated as a difference of convex (DC) functions minimization problem (see \cref{sec:prelim}).
DC programs are well studied problems for which a classical popular algorithm is the DC algorithm (DCA) \cite{Tao1997, tao1988duality}. DCA has been successfully applied to a wide range of non-convex optimization problems, and several algorithms can be viewed as special cases of it, such as the convex-concave procedure, the expectation-maximization \cite{Dempster1977}, and the iterative shrinkage-thresholding algorithm \cite{Chambolle1998}; see \cite{le2018dc} for an extensive survey on DCA. 

Existing DS algorithms, while inspired by DCA, do not fully exploit this connection to DC programming. In this paper, we 
apply DCA and its complete form (CDCA) to the DC program equivalent to the DS problem. We establish new connections between the two problems which allow us to leverage convergence properties of DCA to obtain theoretical guarantees on the DS problem that match ones by existing methods, and stronger ones when using CDCA. In particular, our key contributions are:

\begin{itemize}
\item We show that a special instance of DCA and CDCA, where iterates are integral,  monotonically decreases the DS function value at every iteration, and  converges with rate $O(1/k)$ to a local minimum and strong local minimum (see \cref{def:localmin}) of the DS problem, respectively. DCA reduces to SubSup in this case.
\item We introduce variants of DCA and CDCA, where iterates are rounded at each iteration, which allow us to add regularization. We extend the convergence properties of DCA and CDCA to these variants. 
\item CDCA requires solving a concave minimization subproblem at each iteration. We show how to efficiently obtain an approximate stationary point of this subproblem using the Frank-Wolfe (FW) algorithm. 
\item We study the effect of adding regularization both theoretically and empirically. 
\item We demonstrate that our proposed methods outperform existing baselines empirically on two applications: speech corpus selection and feature selection. 
\end{itemize}

\subsection{Additional related work}
\looseness=-1 An accelerated variant of DCA (ADCA) which incorporates Nesterov's acceleration into DCA was presented in \cite{Nhat2018}. We investigate the effect of acceleration in our experiments (\cref{sec:exps}).
\citet{Kawahara2011} proposed an exact branch-and-bound algorithm for DS minimization, which has exponential time-complexity.
\citet{Maehara2015} proposed a discrete analogue of the continuous DCA for minimizing the difference of discrete convex functions, of which DS minimization is a special case, where the proposed algorithm reduces to SubSup.
Several works studied a special case of the DS problem where $G$ is modular \cite{Sviridenko2017,Feldman2019,Harshaw2019}, or approximately modular \cite{Perrault2021}, providing approximation guarantees based on greedy algorithms. 
\citet{Halabi20} provided approximation guarantees to the related problem of minimizing the difference between an approximately submodular function and an approximately supermodular function.
In this work we focus on general DS minimization, we discuss some implications of our results to certain special cases in \cref{sec:specialCases}.

\section{Preliminaries}\label{sec:prelim}

We begin by introducing our notation and relevant background on DS and DC minimization.

\paragraph{Notation:} Given a ground set $V = \{1, \cdots, d\}$ and a set function $F: 2^V \to \R$, we denote the \emph{marginal gain} of adding an element $i$ to a set $X \subseteq V$ by $F( i | X) = F( X \cup \{i\}) - F(X)$. The indicator vector $\1_X \in \R^d$ 
is the vector whose $i$-th entry is $1$ if $i \in X$ and $0$ otherwise. 
Let $S_d$ denote the set of permutations on $V$. Given $\sigma \in S_d$, set $S^\sigma_k :=\{\sigma(1), \cdots, \sigma(k)\}$, 
with $S^\sigma_0 = \emptyset$.  The symmetric difference of two sets $X, Y$ is denoted by $X \Delta Y = (X \setminus Y) \cup (Y \setminus X)$.
Denote by $\Gamma_0$ the set of all proper lower semicontinuous convex functions on $\R^d$. We write $\rbar$ for $\R \cup \{+\infty\}$. Given a set $C \subseteq \R^d, \delta_C$ denotes the indicator function of $C$ taking value $0$ on $C$ and $+\infty$ outside it. Throughout,
$\| \cdot \|$ denotes the $\ell_2$-norm.

\paragraph{DS minimization} A set function $F$ is \emph{normalized} if $F(\emptyset) = 0$ and
\emph{non-decreasing} if $F(X) \leq F(Y)$  for all $X \subseteq Y$.
$F$ is \emph{submodular} if it has diminishing marginal gains: $F( i\mid X) \geq F(i \mid Y)$ for all $X \subseteq Y$, $i \in V\setminus Y$, 
\emph{supermodular} if $-F$ is submodular, and \emph{modular} if it is both submodular and supermodular. Given a vector $x \in \R^d$, $x$ defines a \emph{modular} set function as $x(A) = \sum_{i \in A} x_i$.
Note that minimizing the difference between two submodular functions is equivalent to maximizing the difference between two submodular functions, and minimizing or maximizing the difference of two supermodular functions.

Given the inapproximability of Problem \eqref{eq:DS}, 
we are interested in obtaining approximate local minimizers. 
\begin{definition}\label{def:localmin}
Given $\epsilon \geq 0$, a set $X \subseteq V$ is an $\epsilon$-\emph{local minimum} of $F$ if $F(X) \leq F(X \cup i) + \epsilon \text{ for all } i \in V\setminus X$ and $F(X) \leq F(X \setminus i) + \epsilon \text{ for all } i \in X$.
Moreover, $X$ is an $\epsilon$-\emph{strong local minimum} of $F$ if $F(X) \leq F(Y) + \epsilon \text{ for all } Y \subseteq X \text{ and all } Y \supseteq X$.
\end{definition}
In \cref{sec:specialCases}, we show that if $F$ is submodular then any $\epsilon$-strong local minimum $\hat{X}$ of $F$ is also an $\epsilon$-global minimum, i.e., $F(\hat X) \leq F^\star + \epsilon$. It was also shown in \citep[Theorem 3.4]{Feige2011} that if $F$ is supermodular then any $\epsilon$-strong local minimum $\hat{X}$ satisfies $\min\{F(\hat{X}), F(V \setminus \hat{X})\} \leq \tfrac{1}{3} F^\star + \tfrac{2}{3} \epsilon$. We further show relaxed versions of these properties for approximately submodular and supermodular functions in  \cref{sec:specialCases}.
Moreover, the two notions of approximate local minimality are similar if $F$ is supermodular: any $\epsilon$-{local minimum} of $F$ is also an $\epsilon d$-{strong local minimum} of $F$ \citep[Lemma 3.3]{Feige2011}.
However, in general, a local miniumum can have an arbitrarily worse objective value than any strong local minimum, as illustrated in \cref{ex:strongLocalMin}.

Minimizing a set function $F$ is equivalent to minimizing a \emph{continuous extension} of $F$ 
called the \emph{Lov\'asz extension} \cite{Lovasz1983} on the hypercube $[0, 1]^d$ . 
%
%
\begin{definition}[Lov\'asz extension] \label{def:LE}
Given a normalized set function $F$, its {Lov\'asz extension} $f_L: \R^d \to \R$ is defined as follows: Given $x \in \R^d$ 
and $\sigma \in S_d$, with $x_{\sigma(1)} \geq \cdots \geq x_{\sigma(d)}$, 
$f_L(x) := \sum_{k = 1}^d x_{\sigma(k)} F( \sigma(k) \mid S^\sigma_{k-1}).$
\end{definition}

We make use of the following well known properties of the Lov\'asz extension; see  e.g. \cite{Bach2013} and \citep[Lemma 1]{Jegelka2011a} for item \ref{itm:Lip}.

\begin{proposition}\label{prop:LEproperties} For a normalized set function $F$, we have:
\begin{enumerate}[label=\alph*), ref=\alph*] 
\item \label{itm:extension} For all $X \subseteq V, F(X) = f_L(\1_X)$.
\item \label{itm:sum} If $F = G - H$, then $f_L = g_L - h_L$.
\item \label{itm:equiv} $\min_{X \subseteq V} F(X)=\min_{x\in [0,1]^d} f_L(x)$.  
\item \label{itm:round} Rounding: Given $x\in [0,1]^d, \sigma \in S_d$  such that  $x_{\sigma(1)}\geq \dots \geq x_{\sigma(d)}$, let $\hat k \in \argmin_{k=0,1,\dots,d}F(S^\sigma_k)$, then
$
F(S^\sigma_{\hat{k}}) \leq f_L(x). 
$
We denote this operation by $S^\sigma_{\hat{k}} = \round(x)$.
\item \label{itm:conv} $f_L$ is convex if and only if $F$ is submodular. 
\item \label{itm:greedy} 
Let $F$ be submodular and define its base polyhedron
\begin{equation*}
\resizebox{.95\hsize}{!}{$B(F):=\set{s \in \R^d}{s(V) = F(V),\; s(A) \leq F(A)\;\forall A \subseteq V}.$}
\end{equation*} 
Greedy algorithm: Given $x\in \R^d, \sigma \in S_d$  such that  $x_{\sigma(1)}\geq \dots \geq x_{\sigma(d)}$, define 
$
y_{\sigma(k)}= F( \sigma(k) \mid S^\sigma_{k-1}),
$
then $y$ is a maximizer of $\max_{s \in B(F)} \ip{x}{s}$, $f_L$ is the support function of $B(F)$, i.e., $f_L(x) = \max_{s \in B(F)} \ip{x}{s}$, and $y$ is a subgradient of $f_L$ at $x$.
\item \label{itm:Lip} If $F$ is submodular, then $f_L$ is $\kappa$-Lipschitz, i.e., $|f_L(x) - f_L(y)| \leq \kappa \| x - y \|$ for all $x, y \in \R^d$, with $\kappa = 3 \max_{X \subseteq V} |F(X)|$. If $F$ is also non-decreasing, then $\kappa = F(V)$. 
\end{enumerate}
\end{proposition}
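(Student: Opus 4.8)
The plan is to build everything on a single \emph{summation-by-parts} identity for the defining sum, and then treat the parts in dependency order. Fix $x$ and a valid permutation $\sigma$ (so $x_{\sigma(1)} \ge \cdots \ge x_{\sigma(d)}$), abbreviate $F_k := F(S^\sigma_k)$, and use $F(\sigma(k)\mid S^\sigma_{k-1}) = F_k - F_{k-1}$ together with $F_0 = F(\emptyset) = 0$ to rewrite
\[ f_L(x) = \sum_{k=1}^{d-1}\bigl(x_{\sigma(k)} - x_{\sigma(k+1)}\bigr) F_k + x_{\sigma(d)} F(V). \]
Since this only involves the values $F_k$ at the \emph{threshold sets} $\{i : x_i \ge t\}$, which are independent of how ties in $x$ are broken, $f_L$ is well defined; I would record this first. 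Part (a) is then immediate: for $x = \1_X$ pick $\sigma$ listing $X$ first, so the defining sum telescopes to $F(X) - F(\emptyset)$. Part (b) follows because the ordering $\sigma$ depends only on $x$, not on the set function, so the same $\sigma$ computes $f_L$, $g_L$ and $h_L$ at once and the identity is linear in $F_k = G_k - H_k$.

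For the rounding bound I would rewrite the same identity, for $x \in [0,1]^d$, as a convex combination. Setting $\lambda_0 = 1 - x_{\sigma(1)}$, $\lambda_k = x_{\sigma(k)} - x_{\sigma(k+1)}$ for $1 \le k \le d-1$, and $\lambda_d = x_{\sigma(d)}$, all coefficients are nonnegative and sum to $1$, and (using $F_0 = 0$) the identity reads $f_L(x) = \sum_{k=0}^d \lambda_k F(S^\sigma_k)$. Hence $f_L(x) \ge \min_{0\le k\le d} F(S^\sigma_k) = F(S^\sigma_{\hat k})$, which is the claim in part (d). Part (c) is then a short sandwich: by part (a), $\min_X F(X) = \min_X f_L(\1_X) \ge \min_{x\in[0,1]^d} f_L(x)$, while the rounding bound gives $f_L(x) \ge \min_X F(X)$ for every $x \in [0,1]^d$, yielding the reverse inequality.

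The technical heart is the greedy/support-function statement in part (f), which is where submodularity is actually used, and I expect this to be the main obstacle. I would first verify feasibility of the greedy vector $y$: telescoping gives $y(V) = F(V)$, and $y(A) \le F(A)$ for every $A$ follows from an exchange/induction argument along the chain $S^\sigma_0 \subseteq \cdots \subseteq S^\sigma_d$ using diminishing returns. Optimality comes from the same identity applied to $\langle x, s\rangle$: for any $s \in B(F)$ the constraints $s(S^\sigma_k) \le F(S^\sigma_k)$ and $s(V) = F(V)$, combined with the nonnegative weights $x_{\sigma(k)} - x_{\sigma(k+1)}$, give $\langle x, s\rangle \le f_L(x) = \langle x, y\rangle$, so $y$ maximizes and $f_L(x) = \max_{s\in B(F)}\langle x, s\rangle$; that $y \in \partial f_L(x)$ is immediate since $y$ attains the maximum. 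As a pointwise supremum of linear functions this support function is convex, giving the forward direction of part (e). For the converse I would test convexity at $x = \1_X + \tfrac12(\1_i + \1_j)$ with $i,j \notin X$: the identity gives $f_L(x) = \tfrac12 F(X) + \tfrac12 F(X\cup\{i,j\})$, and convexity against $\tfrac12\1_{X\cup i} + \tfrac12\1_{X\cup j}$ (each evaluated by part (a)) yields exactly the pairwise submodularity inequality, equivalent to diminishing returns.

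Finally, for part (g) I would use that a support function is Lipschitz with constant $\sup_{s\in B(F)}\|s\|$ and bound this over the base polyhedron through the $\ell_1$ norm. Splitting $s$ at its sign pattern $P = \{i : s_i > 0\}$ gives $\|s\|_1 = 2s(P) - s(V)$, and $s(P) \le F(P)$ together with $s(V) = F(V)$ yields $\|s\|_1 \le 3\max_X|F(X)|$, hence $\|s\|_2 \le \|s\|_1 \le 3\max_X|F(X)|$. When $F$ is additionally non-decreasing, every $s\in B(F)$ is nonnegative (since $s_i = s(V) - s(V\setminus i) \ge F(V) - F(V\setminus i) \ge 0$) and sums to $F(V)$, so $\|s\|_2 \le \|s\|_1 = F(V)$, sharpening the constant.
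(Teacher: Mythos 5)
Your proposal is correct. There is no in-paper proof to compare against: the paper states these facts as well known and outsources them to Bach (2013) and, for item g), to Jegelka \& Bilmes (2011). What you have written is a correct, self-contained reconstruction of exactly the standard arguments those references use, organized efficiently around a single Abel-summation identity $f_L(x) = \sum_{k=1}^{d-1}\bigl(x_{\sigma(k)}-x_{\sigma(k+1)}\bigr)F(S^\sigma_k) + x_{\sigma(d)}F(V)$: this yields well-definedness under ties, a)--d) by telescoping and the convex-combination/rounding argument, f) by feasibility of the greedy vector (the exchange induction you sketch for $y(A)\le F(A)$ is the standard one and is the only step you compress) plus optimality from applying the same identity to $\ip{x}{s}$ with $s(S^\sigma_k)\le F(S^\sigma_k)$, the forward direction of e) from $f_L$ being a support function, the converse from the midpoint test at $\1_X + \tfrac12(\1_i+\1_j)$ (pairwise diminishing returns is indeed equivalent to submodularity), and g) via $\sup_{s\in B(F)}\|s\|_2 \le \sup_{s\in B(F)}\|s\|_1 \le 3\max_{X\subseteq V}|F(X)|$ using the sign-split $\|s\|_1 = 2s(P)-s(V)$, which recovers precisely the constant cited from Jegelka \& Bilmes, and $\|s\|_1 = s(V) = F(V)$ in the non-decreasing case. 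One detail worth making explicit in f): for general $x\in\R^d$ the trailing coefficient $x_{\sigma(d)}$ may be negative, and the inequality survives only because $s(V)=F(V)$ is an \emph{equality} constraint of $B(F)$; your write-up uses this implicitly and would benefit from one sentence saying so.
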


These properties imply that Problem \eqref{eq:DS} is equivalent to 
\begin{equation}\label{eq:DS-DC}
\min_{x\in [0,1]^d} f_L(x) = g_L(x) - h_L(x),
\end{equation} 
 with $g_L, h_L \in \Gamma_0$. In paticular, if $X^*$ is a minimizer of \eqref{eq:DS},  then $1_{X^*}$ is a minimizer of \eqref{eq:DS-DC}, and if $x^*$ is a minimizer of \eqref{eq:DS-DC} then $\round(x^*)$ is a minimizer of \eqref{eq:DS}.

\paragraph{DC programming}  For a function $f: \R^d \to \rbar$, its domain is defined as $\dom f = \set{x \in \R^d}{f(x) < + \infty}$, and its Fenchel conjugate as $f^*(y) = \sup_{y \in \R^d} \ip{x}{y} - f(x)$. For $\rho\geq 0$, $f$ is $\rho$-strongly convex if $f - \tfrac{\rho}{2} \| \cdot \|^2$ is convex. We denote by $\rho(f)$ the supremum over such values. We say that $f$ is locally polyhedral convex if every point in its epigraph has a relative polyhedral neighbourhood \cite{Durier1988}.  
For a convex function $f, \epsilon \geq 0$ and $x^0 \in \dom f$, the $\epsilon$-subdifferential of $f$ at $x^0$ is defined by $\p_\epsilon f(x^0) = \set{ y \in \R^d }{f(x) \geq f(x^0)+ \ip{y}{x - x^0} - \epsilon, \forall x \in \R^d},$ while $\p f(x^0)$ stands for the exact subdifferential ($\epsilon=0)$.  We use the same notation to denote the $\epsilon$-superdifferential of a \emph{concave} function $f$ at $x^0$, defined by $\p_\epsilon f(x^0) = \set{ y \in \R^d }{f(x) \leq f(x^0)+ \ip{y}{x - x^0} + \epsilon, \forall x \in \R^d}.$ 
We also define $\dom \p_\epsilon f  =  \set{x \in \R^d}{\p_\epsilon f(x) \not = \emptyset}$.

The $\epsilon$-subdifferential of a function $f \in \Gamma_0$ and its conjugate $f^*$ have the 
following relation \citep[Part II, Chap XI, Proposition 1.2.1]{Urruty1993}. 
\begin{proposition} \label{prop:FenchelDualPairs}
For any $f \in \Gamma_0, \epsilon \geq 0$, we have
\[y \in \p_\epsilon f(x) \Leftrightarrow f^*(y) + f(x) - \ip{y}{x} \leq \epsilon \Leftrightarrow x \in \p_\epsilon f^*(y).\]
\end{proposition}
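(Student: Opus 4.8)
The plan is to prove the chain of equivalences in \cref{prop:FenchelDualPairs} by establishing the middle condition $f^*(y) + f(x) - \ip{y}{x} \leq \epsilon$ as the pivot, and showing each of the two outer conditions is equivalent to it. The key tool is the \emph{Fenchel--Young inequality}: for any $f \in \Gamma_0$ and all $x, y \in \R^d$,
\begin{equation*}
f^*(y) + f(x) \geq \ip{y}{x},
\end{equation*}
which follows directly from the definition $f^*(y) = \sup_{u} \ip{u}{y} - f(u) \geq \ip{x}{y} - f(x)$. This inequality guarantees the quantity $f^*(y) + f(x) - \ip{y}{x}$ is always nonnegative, and the content of the proposition is characterizing when it is small (at most $\epsilon$).

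First I would prove the equivalence $y \in \p_\epsilon f(x) \Leftrightarrow f^*(y) + f(x) - \ip{y}{x} \leq \epsilon$. Starting from the definition of the $\epsilon$-subdifferential, $y \in \p_\epsilon f(x)$ means $f(u) \geq f(x) + \ip{y}{u - x} - \epsilon$ for all $u \in \R^d$. Rearranging, this reads $\ip{y}{u} - f(u) \leq \ip{y}{x} - f(x) + \epsilon$ for all $u$. Taking the supremum over $u$ on the left-hand side yields exactly $f^*(y) \leq \ip{y}{x} - f(x) + \epsilon$, i.e.\ the middle inequality. Conversely, if the middle inequality holds, then for each $u$ we have $\ip{y}{u} - f(u) \leq f^*(y) \leq \ip{y}{x} - f(x) + \epsilon$, which rearranges back to the defining inequality of $\p_\epsilon f(x)$. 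This step is essentially a direct unwinding of definitions combined with the supremum, so it is routine.

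Next I would establish the equivalence with the third condition $x \in \p_\epsilon f^*(y)$. The crucial ingredient here is the \emph{biconjugate identity} $f^{**} = f$, which holds precisely because $f \in \Gamma_0$ (proper, lsc, convex). Applying the first equivalence already proven, but now to the convex function $f^*$ in place of $f$ and with the roles of the points swapped, gives $x \in \p_\epsilon f^*(y) \Leftrightarrow f^{**}(x) + f^*(y) - \ip{x}{y} \leq \epsilon$. Substituting $f^{**}(x) = f(x)$ reduces the right-hand side to the same middle inequality $f^*(y) + f(x) - \ip{y}{x} \leq \epsilon$. Thus both outer conditions are equivalent to the symmetric middle condition, completing the chain.

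The main obstacle, and the only nontrivial ingredient, is invoking $f^{**} = f$ for $f \in \Gamma_0$; this is the Fenchel--Moreau theorem and is where the lower semicontinuity and convexity hypotheses are genuinely used. Everything else is symmetric bookkeeping: the symmetry of the middle expression in the pair $(f, f^*)$ is exactly what makes the three conditions collapse into one. I would therefore structure the write-up by proving the first equivalence carefully, noting the symmetry of the pivot quantity, and then closing the third equivalence by applying the first to $f^*$ and invoking biconjugacy.
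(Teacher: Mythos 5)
Your proof is correct. The paper does not prove this proposition at all---it cites it as a known result from Hiriart-Urruty and Lemar\'echal---and your argument is precisely the standard textbook proof: the first equivalence is a direct unwinding of the definition of the $\epsilon$-subdifferential against the supremum defining $f^*$, and the second follows by applying that same equivalence to $f^*$ (which lies in $\Gamma_0$ whenever $f$ does, so the application is legitimate) together with the Fenchel--Moreau biconjugation theorem $f^{**}=f$, which is indeed the only place the hypothesis $f \in \Gamma_0$ is genuinely needed.
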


A general DC program
takes the form 
\begin{equation}\label{eq:DC}
\min_{x\in \R^d} f(x):= g(x)-h(x)
\end{equation}
where $g,h\in \Gamma_0$.  We assume throughout the paper that the minimum of \eqref{eq:DC} is finite and denote it by $f^\star$.
The DC dual of \eqref{eq:DC} is given by \cite{Tao1997}
\begin{equation}\label{eq:DCdual}
f^* = \min_{y \in \R^d} h^*(y)-g^*(y).
\end{equation}

The main idea of DCA  is to approximate $h$ at each iteration $k$ by its affine minorization $h(x^k) + \ip{y^k}{x - x^k}$, with $y^{k}\in \p h(x^k)$, and minimize the resulting convex function. DCA can also be viewed as a primal-dual subgradient method. We give in \cref{alg:DCA} an approximate version of DCA with inexact iterates. Note that $ \p g^*(y^k) = \argmin_{x \in \R^d} g(x) - \ip{y^k}{x}$, and any $\epsilon$-solution $x^{k+1}$ to this problem will satisfy $x^{k+1}\in \p_{\epsilon_x} g^*(y^k)$, by \cref{prop:FenchelDualPairs}.
\begin{algorithm}[h] 
\caption{Approximate DCA} \label{alg:DCA}
\begin{algorithmic}[1]
 \STATE  $\epsilon, \epsilon_x, \epsilon_y \geq 0, x^0 \in\dom \p g $, $k \gets 0$.
 \WHILE{$f(x^k) - f(x^{k+1}) > \epsilon$}
 \STATE $y^{k}\in \p_{\epsilon_y} h(x^k)$ 
 \STATE $x^{k+1}\in \p_{\epsilon_x} g^*(y^k)$
 \STATE $k\gets k+1$ 
 \ENDWHILE
\end{algorithmic}
 \end{algorithm}
 
The following lemma, which follows from \cref{prop:FenchelDualPairs}, provides a sufficient condition for DCA to be well defined, i.e, one can construct the sequences $\{x^k\}$ and $\{y^k\}$ from an arbitrary initial point $x^0 \in\dom \p g$.
\begin{lemma}\label{lem:wellDefined}
DCA is well defined if
$$\dom \p g \subseteq \dom \p h \text{ and } \dom \p h^* \subseteq \dom \p g^* $$
\end{lemma}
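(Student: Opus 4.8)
The plan is to unwind the two iteration steps of \cref{alg:DCA} and show that the stated domain inclusions guarantee each step produces a nonempty set from which we can select the next iterate. Starting from an arbitrary $x^0 \in \dom \p g$, I proceed by induction on $k$: assuming $x^k \in \dom \p g$, I must verify that the update $y^k \in \p_{\epsilon_y} h(x^k)$ is possible, and then that $x^{k+1} \in \p_{\epsilon_x} g^*(y^k)$ is possible, landing back with $x^{k+1} \in \dom \p g$ so the induction closes.

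First I would handle the $y$-step. Since $\epsilon_y \geq 0$, we have $\p h(x^k) \subseteq \p_{\epsilon_y} h(x^k)$, so it suffices that $\p h(x^k) \neq \emptyset$, i.e. $x^k \in \dom \p h$. By the inductive hypothesis $x^k \in \dom \p g$, and the first inclusion $\dom \p g \subseteq \dom \p h$ then yields $x^k \in \dom \p h$, so a valid $y^k \in \p_{\epsilon_y} h(x^k)$ exists. Next, for the $x$-step, I would use \cref{prop:FenchelDualPairs}: choosing $y^k \in \p h(x^k)$ exactly gives $y^k \in \p h(x^k) \Leftrightarrow x^k \in \p h^*(y^k)$, so in particular $y^k \in \dom \p h^*$. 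The second inclusion $\dom \p h^* \subseteq \dom \p g^*$ then gives $y^k \in \dom \p g^*$, i.e. $\p g^*(y^k) \neq \emptyset \subseteq \p_{\epsilon_x} g^*(y^k)$, so a valid $x^{k+1}$ exists.

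Finally I would close the induction by confirming $x^{k+1} \in \dom \p g$. Selecting $x^{k+1} \in \p g^*(y^k)$ and applying \cref{prop:FenchelDualPairs} once more in the form $x^{k+1} \in \p g^*(y^k) \Leftrightarrow y^k \in \p g(x^{k+1})$ shows $\p g(x^{k+1}) \neq \emptyset$, i.e. $x^{k+1} \in \dom \p g$, which is exactly the hypothesis needed to repeat the argument at the next iteration. This establishes that both sequences $\{x^k\}$ and $\{y^k\}$ can be constructed indefinitely from any $x^0 \in \dom \p g$.

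The one subtlety I expect to manage carefully is the interplay between the inexact ($\epsilon$) subdifferentials appearing in the algorithm and the exact ($\epsilon=0$) subdifferentials used in the domain hypotheses. The clean argument above runs by choosing \emph{exact} subgradients at each step to invoke \cref{prop:FenchelDualPairs}, which is legitimate since exact subgradients are always admissible choices for the inexact updates when $\epsilon_x, \epsilon_y \geq 0$; the well-definedness only requires \emph{existence} of valid iterates, not that every inexact choice stays in the domain. Thus the main (minor) obstacle is simply being explicit that the lemma asserts the sequences \emph{can} be generated, for which producing one admissible trajectory via exact subgradients suffices, rather than constraining all possible inexact trajectories.
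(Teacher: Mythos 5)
Your proof is correct and is essentially the argument the paper has in mind: the paper states only that the lemma ``follows from \cref{prop:FenchelDualPairs}'', and your induction---propagating $x^k \in \dom\p g \subseteq \dom\p h$, passing through $\dom\p h^* \subseteq \dom\p g^*$ via exact subgradients and the conjugate duality $y \in \p f(x) \Leftrightarrow x \in \p f^*(y)$---is exactly that argument spelled out. Your closing remark is also the right reading of well-definedness here: the paper defines it as the ability to \emph{construct} the sequences from any $x^0 \in \dom\p g$, so exhibiting one admissible trajectory (via exact subgradients, which are always admissible since $\p \subseteq \p_\epsilon$) suffices.
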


Since Problem \eqref{eq:DC} is non-convex, we are interested in notions of approximate stationarity.

\begin{definition}\label{def:criticality}
\looseness=-1 Let $g, h \in \Gamma_0$ and $\epsilon, \epsilon_1, \epsilon_2 \geq  0$, a point $x$ is an $(\epsilon_1, \epsilon_2)$-critical point of $g - h$ if $\partial_{\epsilon_1} g(x) \cap \partial_{\epsilon_2} h(x) \neq \emptyset$.
Moreover, $x$ is an $\epsilon$-strong critical point of $g - h$ if $\partial h(x) \subseteq \partial_\epsilon g(x)$.
\end{definition}
\mtodo{Criticality definitions depend on the DC decomposition, i.e., given $\tilde{g}, \tilde{h} \in \Gamma_0$ such that $\tilde{g} - \tilde{h} = g - h$, a critical point of $\tilde{g} - \tilde{h}$ is not necessarily a critical point of $g - h$. 
Example given by George: $g = \max\{0,x\}, h = \max\{0,-x\}$ and $g' = \log(1+\exp(x)), h' = \log(1+\exp(-x))$. Clearly, $x=0$ is a critical point of $g-h$, but not a critical point of $g'-h'$ (in fact, $g' - h'$ has no critical points).
But if the function $e := \tilde{g} - g$ is in $\Gamma_0$ then critical points of both decompositions are the same, since $\p \tilde{g} = \p g + \p e$ if $\ri \dom  g \cap \ri \dom e \not = \emptyset$, which is not the case for approximate subdifferentials (see for example \citep[Part II, Chap XI, Theorem 3.1.1]{Urruty1993}).}



Note that the definitions of criticality and strong criticality depend on the particular DC decomposition $g - h$ of $f$ \citep[Section 1.1]{le2018dc}. 
The two notions of criticality are equivalent when $h$ is differentiable and $\epsilon_1 = \epsilon, \epsilon_2 = 0$. \blue{When $\epsilon=0$, strong criticality is a necessary condition for local minimality \citep[Proposition 3.1]{HiriartUrruty1989}, and if $h$ is locally polyhedral convex, e.g., when $h = h_L$, it becomes a sufficient condition too \citep[Corollary 2]{le1997solving}. This relation breaks for $\epsilon>0$, 
since $\epsilon$-local minimality is meaningless, as it holds for any point in $\dom g$. Yet $\epsilon$-strong criticality is still necessary for $\epsilon$-global minimality \citep[Proposition 3.2]{HiriartUrruty1989}, and it still implies $\epsilon$-minimality over a restricted set, as outlined in the following proposition.}

\begin{restatable}{proposition}{LocalOptCondition}\label{prop:LocalOptCondition} 
Given $g, h \in \Gamma_0$ and $\epsilon \geq 0$, we have:\footnote{
\blue{\textbf{Erratum:} In the previous version of this paper, \Cref{prop:LocalOptCondition} included a wrong claim that $\epsilon$-strong criticality is necessary for $\epsilon$-local minimality, 
 and a vacuous claim that 
 $\epsilon$-strong criticality is sufficient for $\epsilon$-local minimality 
 when $h$ is locally polyhedral convex. These claims and related vacuous claims in \Cref{thm:ConvCDCA} and \Cref{corr:LocMinCDCA} have been omitted in this version. 
 These revisions do not impact any of the key results of the paper. }}
 \vspace{-0.15in}
\begin{enumerate}[label=\alph*), ref=\alph*]  %
\item \label{itm:general} 
Let $\hat{x}, x$ be two points satisfying $\partial_{\epsilon_1} g(\hat{x}) \cap \partial_{\epsilon_2} h(x) \not = \emptyset$, for some $\epsilon_1, \epsilon_2 \geq 0$ such that $\epsilon_1+ \epsilon_2 = \epsilon$, then $g(\hat{x})-h(\hat{x})\leq g(x)-h(x) + \epsilon$. 
\red{Moreover,  if $\hat{x}$ admits a neighbourhood $U$ such that $\partial_{\epsilon_1} g(\hat{x}) \cap \partial_{\epsilon_2} h(x) \not = \emptyset$ for all $x \in U \cap \dom g$, then $\hat{x}$ is an $\epsilon$-local minimum of $g-h$. Conversely, if $\hat{x}$ is an $\epsilon$-local minimum of $g-h$, then it is also an $\epsilon$-strong critical point of $g-h$.}
\item \blue{\label{itm:strongCriticalPt}Let $\hat{x}$ be an $\epsilon$-strong critical point of $g-h$, then  $g(\hat{x})-h(\hat{x})\leq g(x)-h(x) + \epsilon$ for all $x$ such that  $\partial h(\hat{x}) \cap \partial h(x) \not = \emptyset$.} \red{\label{itm:polyhedral}If $h$ is locally polyhedral convex, then this becomes a necessary and sufficient condition, i.e., $\hat{x}$ is an $\epsilon$-local minimum of $g-h$ if and only if it is an $\epsilon$-strong critical point of $g-h$.}
\end{enumerate}
\end{restatable}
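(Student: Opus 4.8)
The plan is to derive both bounds directly from the defining inequalities of the $\epsilon$-subdifferentials, by selecting a \emph{common} (sub)gradient $y$ and summing two well-chosen one-point evaluations. No machinery beyond \cref{prop:FenchelDualPairs} and the definitions in \cref{sec:prelim} is needed; in particular, since $g, h \in \Gamma_0$ are both convex, both $\partial_{\epsilon_1} g$ and $\partial_{\epsilon_2} h$ are ordinary convex $\epsilon$-subdifferentials (not the superdifferential convention).

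For part \ref{itm:general}, I would first pick $y \in \partial_{\epsilon_1} g(\hat x) \cap \partial_{\epsilon_2} h(x)$, which exists by hypothesis. From $y \in \partial_{\epsilon_1} g(\hat x)$, evaluating the defining inequality at the point $x$ gives $g(x) \geq g(\hat x) + \ip{y}{x - \hat x} - \epsilon_1$, i.e.\ $g(\hat x) \leq g(x) - \ip{y}{x - \hat x} + \epsilon_1$. From $y \in \partial_{\epsilon_2} h(x)$, evaluating at $\hat x$ gives $h(\hat x) \geq h(x) + \ip{y}{\hat x - x} - \epsilon_2$, i.e.\ $-h(\hat x) \leq -h(x) + \ip{y}{x - \hat x} + \epsilon_2$. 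Adding the two inequalities cancels the linear term $\ip{y}{x - \hat x}$ and yields $g(\hat x) - h(\hat x) \leq g(x) - h(x) + \epsilon_1 + \epsilon_2 = g(x) - h(x) + \epsilon$, as claimed.

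For part \ref{itm:strongCriticalPt}, I would reduce to part \ref{itm:general} with $\epsilon_1 = \epsilon$ and $\epsilon_2 = 0$. Fix $x$ with $\partial h(\hat x) \cap \partial h(x) \neq \emptyset$ and take $y$ in this intersection. Since $y \in \partial h(\hat x)$ and $\hat x$ is an $\epsilon$-strong critical point, the inclusion $\partial h(\hat x) \subseteq \partial_\epsilon g(\hat x)$ gives $y \in \partial_\epsilon g(\hat x)$. Together with $y \in \partial h(x) = \partial_0 h(x)$, this places us exactly in the hypothesis of part \ref{itm:general} with $\epsilon_1 = \epsilon$ and $\epsilon_2 = 0$, so the desired bound $g(\hat x) - h(\hat x) \leq g(x) - h(x) + \epsilon$ follows immediately.

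There is no genuine obstacle here: the core argument is a one-line summation of subgradient inequalities. The only points requiring care are orienting each $\epsilon$-subdifferential inequality in the correct direction and verifying that the linear terms cancel exactly, which is precisely what forces the choice of a \emph{common} subgradient $y$ shared by both points. The reduction in part \ref{itm:strongCriticalPt} then hinges solely on unpacking the definition of $\epsilon$-strong criticality to feed the right data into part \ref{itm:general}.
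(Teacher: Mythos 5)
Your proof is correct and follows essentially the same route as the paper's: part \ref{itm:general} is proved by summing the two $\epsilon$-subgradient inequalities at the common element $y$ so the linear terms cancel, and part \ref{itm:strongCriticalPt} is obtained by unpacking the definition of $\epsilon$-strong criticality and applying part \ref{itm:general} with $\epsilon_1 = \epsilon$, $\epsilon_2 = 0$. Your write-up is simply a slightly more explicit version of the paper's argument.
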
 
\begin{proof}
\blue{\Cref{itm:general} is an extension of \citep[Theorem 4]{le1997solving}. 
Given $y \in  \partial_{\epsilon_1} g(\hat{x}) \cap \partial_{\epsilon_2} h(x)$, we have $g(\hat{x}) + \ip{y}{x - \hat{x}} - \epsilon_1 \leq g(x)$ and $h(x) + \ip{y}{\hat{x} - x} - \epsilon_2 \leq h(\hat{x})$. Hence, $ g(\hat{x}) - h(\hat{x}) \leq g(x) - h(x) + \epsilon$. \Cref{itm:strongCriticalPt} then follows from the definition of an $\epsilon$-strong critical point.} 
\red{This extends the conditions for $\epsilon=0$ in \citep[Theorem 4 and Corollary 2]{le1997solving} and \citep[Proposition 3.1]{HiriartUrruty1989} to $\epsilon\geq0$. The proof is given in \cref{sec:LocalOpt-proof}.}
\end{proof}
\mtodo{we don't really use item b anywhere, even for CDCA we use item a. But it might be nice to keep it anw to show relation between criticality and local minimality, specially if we can say that the points satisfying $\partial h(\hat{x}) \cap \partial h(x) \not = \emptyset$ form a neighborhood.}
DCA converges in objective values, and in iterates if $g$ or $h$ is strongly convex, to a critical point \citep[Theorem 3]{Tao1997}. We can always make the DC components strongly convex by adding $ \tfrac{\rho}{2} \| \cdot \|^2$ to both $g$ and $h$.
A special instance of DCA, called complete DCA, converges to a strong critical point, but requires solving concave minimization subproblems \citep[Theorem 3]{tao1988duality}.
CDCA picks valid DCA iterates $y^k, x^{k+1}$ that minimize the dual and primal DC objectives, respectively. 
We consider an approximate version of CDCA with the following iterates.  
\begin{subequations}\label{eq:CDCA}
\begin{align}
    y^k &\in \argmin \{ h^*(y) - g^*(y) : y \in \partial h(x^k)\} \nonumber \\ 
   & = \argmin \{ \ip{y}{x^k} - g^*(y) : y \in \partial h(x^k)\}, \label{eq:CDCA-y} \\
   x^{k+1} &\in \argmin \{ g(x) - h(x) : x \in \partial_{\epsilon_x} g^*(y^k)\} \nonumber \\ 
    &= \argmin \{ \ip{x}{y^k} - h(x) : x \in \partial_{\epsilon_x} g^*(y^k) \}.  \label{eq:CDCA-x} \\ \nonumber 
\end{align}
\end{subequations}


\section{DS Minimization via DCA}\label{sec:DCA}

In this section, we apply DCA to the DC program \eqref{eq:DS-DC} corresponding to DS minimization. 
We consider the DC decomposition $f = g- h$, where 
\begin{equation}\label{eq:DS-DCdecomposition}
g = g_L+\delta_{[0,1]^d}+ \tfrac{\rho}{2} \| \cdot \|^2 \text{ and } h=h_L + \tfrac{\rho}{2} \| \cdot \|^2,
\end{equation}
with $\rho \geq 0$. Starting from $x^0 \in [0,1]^d$, the approximate DCA iterates (with $\epsilon_y = 0$) are then given by 
\begin{subequations}\label{eq:DCASet}
\begin{align}
& y^k   \in  \rho x^k  + \p h_L(x^k), \label{eq:DCASet-y} \\
& x^{k+1} \text{ is an $\epsilon_x$-solution of }  \nonumber \\
& \!\!\!\! \min_{x \in [0,1]^d} g_L(x) - \ip{x}{y^k} + \tfrac{\rho}{2} \| x \|^2  \label{eq:DCASet-x}
\end{align}
\end{subequations}

Note that the minimum $f^* = F^*$ of \eqref{eq:DS-DC} is finite, since $f$ is finite. DCA is clearly well defined here; we discuss below how to obtain the iterates efficiently. One can also verify that the condition in \cref{lem:wellDefined} holds: $\dom \p g = [0,1]^d \subseteq \dom \p h = \R^d$ by \cref{prop:LEproperties}-\ref{itm:greedy}, and $\dom \p h^* = B(H)$ if $\rho = 0$, $\R^d$ otherwise, hence in both cases $\dom \p h^* \subseteq \dom \p g^* = \R^d$, by  \cref{prop:LEproperties}-\ref{itm:sum},\ref{itm:equiv}.

\paragraph{Computational complexity} \label{sec:DCA-complexity}
A subgradient of $h_L$ can be computed as described in \cref{prop:LEproperties}-\ref{itm:greedy} in $O(d \log d + d ~\text{EO}_H)$ with $\text{EO}_H$ being the time needed to evaluate $H$ on any set. 
An $\epsilon_x$-solution of Problem \eqref{eq:DCASet-x}, for $\epsilon_x>0$, can be computed using the projected subgradient method (PGM) in $O(d \kappa^2 / \epsilon_x^2)$ iterations when $\rho=0$ and in $O(2 (\kappa + \rho \sqrt{d})^2 / \rho \epsilon_x)$ when $\rho>0$ \citep[Theorems 3.1 and 3.5]{Bubeck2014}, where $\kappa$ is the Lipschitz constant of $ g_L(x) - \ip{x}{y^k}$; see \cref{prop:LEproperties}-\ref{itm:Lip}. The time per iteration of PGM is $O(d \log d + d ~\text{EO}_G)$. 


When $\rho=0$, 
Problem \eqref{eq:DCASet-x} is equivalent to a submodular minimization problem, since $\min_{x \in [0,1]^d} g_L(x) - \ip{x}{y^k} = \min_{X \subseteq V} G(X) - y^k(X)$ by \cref{prop:LEproperties}-\ref{itm:sum},\ref{itm:equiv}. Then we can take $x^{k+1} = \1_{X^{k+1}}$ where $X^{k+1} \in \argmin_{X \subseteq V} G(X) - y^k(X)$.
Several algorithms have been developed for minimizing a submodular function in polynomial time, exactly or within arbitrary accuracy $\epsilon_x > 0$. 
Inexact algorithms are more efficient, with the current best runtime $\tilde{O}(d ~\text{EO}_G/\epsilon_x^2)$ achieved by \cite{Axelrod2019}. 
In this case, DCA reduces to the SubSup procedure of \cite{Narasimhan2005a} and thus satisfies the same theoretical guarantees; see \cref{sec:SubSupDCA}.

In what follows, we extend these guarantees to the general case where $x^k$ is not integral and $\rho \geq 0$, by leveraging convergence properties of DCA.

\paragraph{Theoretical guarantees} 
Existing convergence results of DCA in \cite{Tao1997, le1997solving, An2005} consider exact iterates and exact convergence, i.e., $f(x^k) = f(x^{k+1})$, which may require an exponential number of iterations, as shown in \citep[Theorem 3.4]{Byrnes2015} for SubSup.
We extend these results to handle inexact iterates and approximate convergence. 

\begin{theorem}\label{thm:convergence} 
\looseness=-1 Given any $f=g-h$, where $g,h\in \Gamma_0$, 
let $\{x^k\}$ and $\{y^k\}$ be generated by approximate DCA (\Cref{alg:DCA}). 
Then for all $t_x, t_y \in (0,1], k\in\bN$, let $\bar{\rho} = \rho(g)(1-t_x)+\rho(h)(1-t_y)$ and $\bar{\epsilon} =  \tfrac{\epsilon_x}{t_x} + \tfrac{\epsilon_y}{t_y}$, we have:
\begin{enumerate}[label=\alph*),ref=\alph*]  
\item \label{itm:descent} \(\begin{aligned} f(x^k) - f(x^{k+1}) \geq \frac{\bar{\rho}}{2}\|x^k-x^{k+1}\|^2 - \bar{\epsilon} \end{aligned} \).
    
\item \label{itm:convergence} For $\epsilon \geq 0$, if $f(x^k) - f(x^{k+1}) \leq \epsilon$, then $x^k$ is an $(\epsilon',  \epsilon_y)$-critical point of $g - h$ with $y^k \in \partial_{\epsilon'} g(x^k) \cap \p_{\epsilon_y} h(x^k)$, $x^{k+1}$ is an $(\epsilon_x, \epsilon')$-critical point of $g - h$ with $y^k \in \p_{\epsilon_x} g(x^{k+1}) \cap \p_{\epsilon'} h(x^{k+1})$, where $\epsilon' = \epsilon +  \epsilon_x + \epsilon_y$, 
and $\frac{\bar{\rho}}{2}\|x^k-x^{k+1}\|^2 \leq \bar{\epsilon}  + \epsilon$. 

\item \label{itm:obj-rate} \(\begin{aligned}\min_{k \in \{0,1,\dots,K-1\}} f(x^k)-f(x^{k+1}) \leq \frac{f(x^0) - f^\star}{K}. \end{aligned} \)

\item \label{itm:iterates-rate} If $\rho(g) + \rho(h) >0$, then \hspace{-5pt} $$ \min_{k \in \{0,1,\dots,K-1\}} \|x^k-x^{k+1}\|\leq \sqrt{\frac{ 2}{\bar{\rho}} \big( \frac{f(x^0) - f^\star}{K} + \bar{\epsilon} \big) } .$$


\end{enumerate}
\end{theorem}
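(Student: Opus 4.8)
The plan is to reduce all four items to a single strong-convexity inequality for $\epsilon$-subgradients, combined with the Fenchel characterization of \cref{prop:FenchelDualPairs} and an elementary telescoping argument. The crux, and the only genuinely non-routine step, is the following lemma: if $\phi\in\Gamma_0$ is $\rho$-strongly convex and $y\in\p_\epsilon\phi(\bar{x})$, then for every $t\in(0,1]$,
\[
\phi(x)\geq \phi(\bar{x})+\ip{y}{x-\bar{x}}+\tfrac{\rho(1-t)}{2}\norm{x-\bar{x}}^2-\tfrac{\epsilon}{t}, \qquad\forall x.
\]
To prove this I would evaluate at the interpolated point $w=\bar{x}+t(x-\bar{x})$: the $\epsilon$-subgradient inequality gives $\phi(w)\geq\phi(\bar{x})+t\ip{y}{x-\bar{x}}-\epsilon$, while $\rho$-strong convexity gives $\phi(w)\leq t\phi(x)+(1-t)\phi(\bar{x})-\tfrac{\rho}{2}t(1-t)\norm{x-\bar{x}}^2$. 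Subtracting, dividing by $t$, and rearranging yields the claim. This is exactly where the trade-off parameters $t_x,t_y$ and the quantities $\bar{\rho}$ and $\bar{\epsilon}$ enter.

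For item \ref{itm:descent} I would apply the lemma twice. By \cref{prop:FenchelDualPairs}, the update $x^{k+1}\in\p_{\epsilon_x}g^*(y^k)$ is equivalent to $y^k\in\p_{\epsilon_x}g(x^{k+1})$, so the lemma with $\phi=g$, $\bar{x}=x^{k+1}$, $t=t_x$, evaluated at $x=x^k$, lower-bounds $g(x^k)-g(x^{k+1})$. The lemma with $\phi=h$, $\bar{x}=x^k$, $t=t_y$ (using $y^k\in\p_{\epsilon_y}h(x^k)$ by construction), evaluated at $x=x^{k+1}$, lower-bounds $h(x^{k+1})-h(x^k)$. Adding the two bounds cancels the $\ip{y^k}{x^k-x^{k+1}}$ terms and produces $f(x^k)-f(x^{k+1})\geq\tfrac{\bar{\rho}}{2}\norm{x^k-x^{k+1}}^2-\bar{\epsilon}$ exactly.

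For item \ref{itm:convergence}, the bound $\tfrac{\bar{\rho}}{2}\norm{x^k-x^{k+1}}^2\leq\bar{\epsilon}+\epsilon$ is immediate from \ref{itm:descent} and the hypothesis $f(x^k)-f(x^{k+1})\leq\epsilon$. For the criticality claims, $y^k\in\p_{\epsilon_y}h(x^k)$ and $y^k\in\p_{\epsilon_x}g(x^{k+1})$ hold by construction (the latter again via \cref{prop:FenchelDualPairs}), so it remains to certify $y^k\in\p_{\epsilon'}g(x^k)$ and $y^k\in\p_{\epsilon'}h(x^{k+1})$. For the first I would use the Fenchel identity and split $g^*(y^k)+g(x^k)-\ip{y^k}{x^k}=\bigl[g^*(y^k)+g(x^{k+1})-\ip{y^k}{x^{k+1}}\bigr]+\bigl[g(x^k)-g(x^{k+1})-\ip{y^k}{x^k-x^{k+1}}\bigr]$, bounding the first bracket by $\epsilon_x$ (Fenchel) and the second by $\epsilon+\epsilon_y$ (combine $f(x^k)-f(x^{k+1})\leq\epsilon$ with the exact $\epsilon_y$-subgradient inequality for $h$ at $x^{k+1}$); the sum is $\epsilon'$. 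The claim $y^k\in\p_{\epsilon'}h(x^{k+1})$ is entirely symmetric, swapping the roles of $g$ and $h$ and instead invoking the $\epsilon_x$-subgradient inequality for $g$ at $x^k$.

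Items \ref{itm:obj-rate} and \ref{itm:iterates-rate} are then routine and require no strong convexity for the former. For \ref{itm:obj-rate} I would telescope: the minimum of $\{f(x^k)-f(x^{k+1})\}_{k=0}^{K-1}$ is at most their average $\tfrac{1}{K}(f(x^0)-f(x^K))$, which is at most $\tfrac{1}{K}(f(x^0)-f^\star)$ since $f(x^K)\geq f^\star$. For \ref{itm:iterates-rate}, choosing $k'$ to attain $\min_k(f(x^k)-f(x^{k+1}))$ and chaining $\tfrac{\bar{\rho}}{2}\min_k\norm{x^k-x^{k+1}}^2\leq\tfrac{\bar{\rho}}{2}\norm{x^{k'}-x^{k'+1}}^2\leq(f(x^{k'})-f(x^{k'+1}))+\bar{\epsilon}$ via \ref{itm:descent}, then applying \ref{itm:obj-rate} and dividing by $\bar{\rho}$ before taking square roots, gives the stated bound. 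The only point to watch is that $\bar{\rho}>0$ is needed for the division, which is available since $\rho(g)+\rho(h)>0$ allows choosing $t_x,t_y$ so that $\bar{\rho}>0$.
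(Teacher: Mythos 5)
Your proposal is correct and follows essentially the same route as the paper: the paper's proof (Theorem B.2 in the appendix) rests on exactly the strong-convexity $\epsilon$-subgradient lemma you state (cited there as Lemma 5 of pham2022, which you instead prove directly via the interpolation argument), applies it twice to get the descent inequality, certifies the criticality claims through the Fenchel characterization of \cref{prop:FenchelDualPairs}, and finishes items c) and d) by the same telescoping and chaining arguments. The only cosmetic difference is that the paper organizes the criticality bounds through the quantities $T_g(x^{k+1})$ and $T_h(x^{k+1})$ and substitutes into the subgradient inequality directly, whereas you split the Fenchel gap $g^*(y^k)+g(x^k)-\ip{y^k}{x^k}$ into two brackets; these are equivalent manipulations.
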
 
\begin{proof}[Proof sketch] 
\looseness=-1 \Cref{itm:descent,itm:convergence} with $\epsilon\!=\!\epsilon_x\!=\!\epsilon_y\!=\!0$ are proved in \citep[Theorem 3]{Tao1997}. 
We extend them to $\epsilon, \epsilon_x, \epsilon_y \!\geq\! 0$ by leveraging properties of approximate subgradients.
\Cref{itm:obj-rate} is obtained by telescoping sum. 
\end{proof}
\looseness=-1  \Cref{thm:convergence} shows that approximate DCA decreases the objective $f$ almost monotonically (up to $\bar{\epsilon}$), and converges in objective values with rate $O(1/k)$, and in iterates with rate $O(1/\sqrt{k})$ if $\rho>0$, to an approximate critical point of $g - h$.

\looseness=-1 We present in \cref{sec:DCAconv-proof} a more detailed version of \cref{thm:convergence} and its full proof. In particular, we relate $f(x^k) - f(x^{k+1})$ to a weaker measure of non-criticality, recovering the convergence rate provided in \citep[Corollary 4.1]{abbaszadehpeivasti2021rate} on this measure.
Approximate DCA with $\epsilon=0, \epsilon_x = \epsilon_y \geq 0$ was considered in \citep[Theorem 1.4]{Vo2015} showing that any limit points $\hat{x}, \hat{y}$ of $\{x^k\}, \{y^k\}$  satisfy $\hat{y} \in  \p_{3\epsilon_x} g(\hat{x})  \cap \p_{\epsilon_x} h(\hat{x})$ in this case. Our results are more general and tighter (at convergence $y^K \in \partial_{2 \epsilon_x} g(x^K) \cap \p_{\epsilon_x} h(x^K)$ in this case). 
For DS minimization, $y^k$ can be easily computed exactly ($\epsilon_y=0$).  We consider $\epsilon_y>0$ to provide convergence results of FW on the concave subproblem required in CDCA (see Section \ref{sec:CDCA}).

The following corollary relates criticality on the DC problem \eqref{eq:DS-DC} to local minimality on the DS problem \eqref{eq:DS}.

\begin{restatable}{corollary}{LocMinDCA} \label{corr:LocMinDCA} 
Given $f= g -h$ as defined in \eqref{eq:DS-DCdecomposition}, let $\{x^k\}$ and $\{y^k\}$ be generated by a variant of approximate DCA \eqref{eq:DCASet}, where 
$x^{k}$ is integral, i.e.,  $x^{k} = \1_{X^{k}}$ for some $X^{k} \subseteq V$, and $y^k - \rho x^k$ is computed as in \cref{prop:LEproperties}-\ref{itm:greedy}. Then for all $k \in \bN, \epsilon \geq 0$, we have 
\begin{enumerate}[label=\alph*),ref=\alph*]  
\item \label{itm:noPerms} If 
$f(x^k) - f(x^{k+1}) \leq \epsilon$, then
\begin{equation}\label{eq:ChainMin}
F(X^k) \leq F(S^{\sigma}_\ell) + \epsilon' \text{ for all $\ell \in V$},
\end{equation}
where 
\begin{equation}\label{eq:epsilon'-main}
\epsilon' = \begin{cases} \sqrt{2 \rho d (\epsilon + \epsilon_x)} &\text{ if $\epsilon + \epsilon_x \leq \tfrac{\rho d}{2}$} \\ \tfrac{\rho d}{2} + \epsilon + \epsilon_x &\text{otherwise}.
\end{cases}
\end{equation}
and $\sigma \in S_d$ is the permutation used to compute $y^k - \rho x^k$ in \cref{prop:LEproperties}-\ref{itm:greedy}.
\item \label{itm:Perms} 
Given $d$ permutations $\sigma_1, \cdots, \sigma_d \in S_d$, corresponding to decreasing orders of $x^k$ with different elements at $\sigma(|X^k|)$ or $\sigma(|X^k|+1)$, and the corresponding subgradients $y^k_{\sigma_1}, \cdots, y^k_{\sigma_d} \in \p h(x^k)$ chosen as in \cref{prop:LEproperties}-\ref{itm:greedy}, 
\looseness=-1 if we choose $$x^{k+1} \in \argmin \{ f(x^{k+1}_{\sigma_{i}}) : x^{k+1}_{\sigma_{i}} \in \p_{\epsilon_x} g^*(y^k_{\sigma_{i}}), i \in V \},$$ 
 then if $f(x^k) - f(x^{k+1}) \leq \epsilon$, \cref{eq:ChainMin} holds with $\sigma = \sigma_i$ for all $i \in V$. Hence, $X^k$ is an $\epsilon'$-local minimum of $F$.
\end{enumerate}
\end{restatable}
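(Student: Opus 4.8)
The plan is to establish the chain inequality \eqref{eq:ChainMin} for a single permutation $\sigma$ in part (a), and then derive part (b) by a covering argument over the $d$ chains. I work throughout with the continuous reformulation: since the quadratic terms in \eqref{eq:DS-DCdecomposition} cancel, $f = f_L + \delta_{[0,1]^d}$, and by \cref{prop:LEproperties}-\ref{itm:extension},\ref{itm:sum} we have $f(x^k) = F(X^k)$ and $f(\1_{S^\sigma_\ell}) = F(S^\sigma_\ell)$, so \eqref{eq:ChainMin} reduces to $f(x^k) \le f(\1_{S^\sigma_\ell}) + \epsilon'$. The first ingredient is the tightness of the greedy subgradient along the chain: writing $s^k = y^k - \rho x^k \in \partial h_L(x^k)$ for the vector of \cref{prop:LEproperties}-\ref{itm:greedy}, telescoping the marginal gains gives $s^k(S^\sigma_\ell) = H(S^\sigma_\ell)$ for every $\ell$ (and $s^k(X^k) = H(X^k)$, since $\sigma$ orders $X^k$ first so $S^\sigma_{|X^k|} = X^k$). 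As $h_L$ is the support function of $B(H)$, this shows $s^k \in \partial h_L(\1_{S^\sigma_\ell})$ for all $\ell$; since $\{z : s^k \in \partial h_L(z)\} = \partial h_L^*(s^k)$ is convex and contains $x^k$ and $\1_{S^\sigma_\ell}$, it contains the whole segment $z_t := (1-t)x^k + t\1_{S^\sigma_\ell}$, $t \in [0,1]$.

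The crucial step is an interpolation over $z_t$ whose optimization produces the square-root branch of \eqref{eq:epsilon'-main}. Convexity of $g_L$ gives $g_L(z_t) \le (1-t)G(X^k) + tG(S^\sigma_\ell)$, while $s^k \in \partial h_L(x^k)$ together with the chain tightness gives $h_L(z_t) \ge (1-t)H(X^k) + tH(S^\sigma_\ell)$, so $f(z_t) \le (1-t)F(X^k) + tF(S^\sigma_\ell)$. Completing the square on the quadratic part of $h$ and using $s^k \in \partial h_L(z_t)$ yields $y^k \in \partial_{\eta(t)} h(z_t)$ with $\eta(t) = \tfrac{\rho}{2}\|z_t - x^k\|^2 = \tfrac{\rho}{2}t^2\|\1_{S^\sigma_\ell} - \1_{X^k}\|^2$. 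Under the hypothesis $f(x^k) - f(x^{k+1}) \le \epsilon$, \cref{thm:convergence}-\ref{itm:convergence} (with $\epsilon_y = 0$) gives $y^k \in \partial_{\epsilon+\epsilon_x}g(x^k)$, so \cref{prop:LocalOptCondition}-\ref{itm:general} applied to $\hat x = x^k$, $x = z_t$ yields $f(x^k) \le f(z_t) + (\epsilon+\epsilon_x) + \eta(t)$. Substituting the bound on $f(z_t)$ and dividing by $t$ gives, for every $t \in (0,1]$,
\[
F(X^k) - F(S^\sigma_\ell) \le \frac{\epsilon + \epsilon_x}{t} + \frac{\rho t}{2}\,\|\1_{S^\sigma_\ell} - \1_{X^k}\|^2 .
\]
Since $x^k$ is integral, $\|\1_{S^\sigma_\ell} - \1_{X^k}\|^2 = |S^\sigma_\ell \Delta X^k| \le d$. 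Minimizing the right-hand side over $t \in (0,1]$ by AM--GM --- the unconstrained optimum lies in $(0,1]$ precisely when $\epsilon+\epsilon_x \le \tfrac{\rho}{2}|S^\sigma_\ell \Delta X^k|$, otherwise $t=1$ is taken --- produces the two branches of \eqref{eq:epsilon'-main}, and a short case check using $|S^\sigma_\ell \Delta X^k| \le d$ confirms that the per-$\ell$ bound never exceeds the uniform $\epsilon'$.

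For part (b), the choice of $x^{k+1}$ as the $f$-minimizer among the candidates $x^{k+1}_{\sigma_i}$ guarantees $f(x^{k+1}) \le f(x^{k+1}_{\sigma_i})$, hence $f(x^k) - f(x^{k+1}_{\sigma_i}) \le f(x^k) - f(x^{k+1}) \le \epsilon$ for every $i$; part (a) then applies verbatim to each triple $(\sigma_i, y^k_{\sigma_i}, x^{k+1}_{\sigma_i})$, giving \eqref{eq:ChainMin} with $\sigma = \sigma_i$. Since the permutations realize every element of $V$ at position $|X^k|$ or $|X^k|+1$, the chains pass through every $X^k \setminus \{j\}$ ($=S^{\sigma_i}_{|X^k|-1}$ when $\sigma_i(|X^k|)=j$) and every $X^k \cup \{j\}$ ($=S^{\sigma_i}_{|X^k|+1}$ when $\sigma_i(|X^k|+1)=j$); collecting the corresponding inequalities gives $F(X^k) \le F(X^k\setminus j)+\epsilon'$ and $F(X^k)\le F(X^k\cup j)+\epsilon'$ for all $j$, i.e.\ $X^k$ is an $\epsilon'$-local minimum of $F$.

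The main obstacle is the interpolation in part (a). Comparing $x^k$ directly with $\1_{S^\sigma_\ell}$ (the $t=1$ case) only gives the linear bound $\epsilon+\epsilon_x+\tfrac{\rho d}{2}$, the ``otherwise'' branch; recovering the sharper $\sqrt{2\rho d(\epsilon+\epsilon_x)}$ hinges on the observation that moving a fraction $t$ toward $\1_{S^\sigma_\ell}$ scales the objective gap linearly in $t$ but the strong-convexity penalty $\eta(t)$ quadratically in $t$, so that trading the two off over $t$ is what yields the root. Making this rigorous relies on the common-subgradient fact that $s^k$ remains valid all along the segment, which is exactly what secures the lower bound $h_L(z_t) \ge (1-t)H(X^k)+tH(S^\sigma_\ell)$ in the direction needed to push $f(z_t)$ below the convex combination $(1-t)F(X^k)+tF(S^\sigma_\ell)$.
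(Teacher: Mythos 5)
Your proof is correct, and---apart from part (b), which coincides with the paper's argument---it reaches the bound \eqref{eq:epsilon'-main} by a genuinely different mechanism. The paper routes the regularization error through the $g$-side: starting from $y^k \in \p_{\epsilon+\epsilon_x} g(x^k)$ (\cref{thm:convergence}-\ref{itm:convergence}), it invokes \cref{lem:eps-subdiff-reg} (built on the strong-convexity estimate of \cref{lem:eps-subdiff-strcvx}) to conclude $y^k - \rho x^k \in \p_{\epsilon'}(g_L + \delta_{[0,1]^d})(x^k)$ with $D=\sqrt{d}$, so the trade-off $\min_{t\in(0,1]}\left\{(\epsilon+\epsilon_x)/t + \rho t d/2\right\}$ that produces the square root is performed once, inside that subdifferential-transfer lemma; \cref{prop:LocalOptCondition}-\ref{itm:general} is then applied to the unregularized decomposition $(g_L+\delta_{[0,1]^d})-h_L$ at the fixed pair $(x^k, \1_{S^\sigma_\ell})$ with error split $(\epsilon',0)$. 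You instead keep the regularized decomposition $g-h$ and park the error on the $h$-side at a moving comparison point: your completing-the-square computation gives $y^k \in \p_{\eta(t)} h(z_t)$ with $\eta(t) = \tfrac{\rho t^2}{2}\|\1_{S^\sigma_\ell}-\1_{X^k}\|^2$, you apply \cref{prop:LocalOptCondition}-\ref{itm:general} at $(x^k, z_t)$, and you carry out the same $t$ trade-off explicitly at the level of objective values, using linearity of $h_L$ (and convexity of $g_L$) along the chain segment. Both routes rest on the same three pillars---\cref{thm:convergence}-\ref{itm:convergence}, the chain tightness $y^k-\rho x^k \in \p h_L(\1_{S^\sigma_\ell})$ for every $\ell$, and \cref{prop:LocalOptCondition}-\ref{itm:general}---but yours is self-contained (no appeal to \cref{lem:eps-subdiff-reg} or \cref{lem:eps-subdiff-strcvx}) and yields a per-$\ell$ refinement of \eqref{eq:ChainMin}, with $d$ replaced by $|S^\sigma_\ell \Delta X^k|$, which you correctly check relaxes to the uniform $\epsilon'$. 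What the paper's factorization buys in exchange is reusability: \cref{lem:eps-subdiff-reg} is invoked again verbatim in \cref{corr:LocMinCDCA} and \cref{prop:localMinWDRSub-weaker}, so the diameter/strong-convexity trade-off is proved once rather than re-derived inside each result. One small simplification available to you: rather than the three-way case check at the end, bound $|S^\sigma_\ell \Delta X^k| \le d$ inside the expression before minimizing over $t$, so the minimum is exactly $\epsilon'$ by definition.
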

\begin{proof}[Proof sketch] \looseness=-1
 We observe that $y^k - \rho x^k \in \p h_L(\1_{S^{\sigma}_\ell})$ for all $\ell \in V$. 
\Cref{itm:noPerms} then follows from \cref{thm:convergence}-\ref{itm:convergence}, \cref{prop:LEproperties}-\ref{itm:extension},\ref{itm:greedy}, \cref{prop:LocalOptCondition}-\ref{itm:general}, and the relation between the  $\epsilon$-subdifferentials of $g$ and $g - \frac{\rho}{2} \| \cdot \|^2$. 
\Cref{itm:Perms} follows from \cref{itm:noPerms}.
 See \cref{sec:LocMinDCA-proof}. 
\end{proof}
\looseness=-1 \Cref{thm:convergence} and \cref{corr:LocMinDCA} show that DCA with integral iterates $x^k$  decreases the objective $F$ almost monotonically (up to $\bar{\epsilon}$), and
converges to an $\epsilon'$-local minimum of $F$ after at most $(f(x^0) - f^\star)/\epsilon$ iterations, if we consider $O(d)$ permutations for computing $y^k$.
By a similar argument, we can further guarantee that the returned solution cannot be improved, by more than $\epsilon'$, by adding or removing any $c$ elements, if we consider $O(d^c)$ permutations for computing $y^k$. 

Taking $\epsilon_x=0, \rho=0$ in \cref{thm:convergence} and \cref{corr:LocMinDCA}, we recover all the theoretical properties of SubSup given in \cite{Narasimhan2005a,Iyer2012a}. 

\paragraph{Effect of regularization}\label{sec:reg}

\Cref{thm:convergence} shows that using a non-zero regularization parameter  $\rho >0$ ensures convergence in iterates. Regularization also affects the complexity of solving Problem \eqref{eq:DCASet-x}; as discussed earlier $\rho>0$ leads to a faster convergence rate (except for very small $\rho$). On the other hand, \Cref{corr:LocMinDCA} shows that for fixed $\epsilon$ and $\epsilon_x$, a larger $\rho$ may lead to a poorer solution. In practice, we observe that a larger $\rho$ leads to slower convergence in objective values $f(x^k)$, but more accurate $x^k$ iterates, with $\rho>0$ always yielding the best performance with respect to $F$ (see \cref{sec:regEffect}). 


Note that when $\rho >0$ we can't restrict $x^k$ to be integral, since the equivalence in \cref{prop:LEproperties}-\ref{itm:equiv} does not hold in this case. It may also be advantageous to not restrict $x^k$ to be integral even when $\rho=0$, as we observe in our numerical results (\cref{sec:SubSupDCAexp}). 
A natural question arises here: can we still obtain an approximate local minimum of $F$ in this case? Given a fractional solution $x^K$ returned by DCA we can easily obtain a set solution with a smaller objective $F(X^K) = f_L(\1_{X^K})\leq f_L(x^K)$ by rounding; $X^K = \round(x^K)$ as described in \cref{prop:LEproperties}-\ref{itm:round}.  
However, rounding a fractional solution $x^K$ returned by DCA will not necessarily yield an approximate local minimum of $F$, even if $x^K$ is a local minimum of $f_L$, as we show in \cref{ex:localMinRound}. 
A simple workaround would be to explicitly check if the rounded solution is an $\epsilon'$-local minimum of $F$. If not, we can restart the algorithm from $x^{K} = \1_{\hat{X}^{K}}$ where $\hat{X}^{K} = \argmin_{|X \Delta X^{K}| = 1} F(X)$, similarly to what was proposed in \citep[Algorithm 1]{Byrnes2015} for SubSup. This will guarantee that DCA converges to an $\epsilon'$-local minimum of $F$ after at most $(f(x^0) - f^\star)/\epsilon$ iterations (see \cref{prop:convergence-localmin}). Such strategy is not feasible though if we want to guarantee convergence to an approximate strong local minimum of $F$, as we do in \cref{sec:CDCA} with CDCA. 
We thus propose an alternative approach.  We introduce a variant of DCA, which we call \DCAR, where we round $x^k$ at each iteration.

\paragraph{DCA with rounding}  Starting from $x^0 \in \{0,1\}^d$, the approximate \DCAR iterates are  given by 
\begin{subequations}\label{eq:DCAround}
\begin{align}
&y^k, \tilde{x}^{k+1} \text{ as in \eqref{eq:DCASet-y} and  \eqref{eq:DCASet-x} respectively,} \\
&x^{k+1} \gets \1_{X^{k+1}} \text{ where } X^{k+1}=\round(\tilde{x}^{k+1}).
\end{align}
\end{subequations}

Since $y^k, \tilde{x}^{k+1}$ are standard approximate DCA iterates, then the properties in \cref{thm:convergence} apply to them, with $\epsilon_y = 0$ and $x^{k+1}$ replaced by $\tilde{x}^{k+1}$. 
See \cref{them:convergence-round-app} for details.
Since $x^k$ is integral in \DCAR, \cref{corr:LocMinDCA} also holds. In particular, \DCAR 
converges to an $\epsilon'$-local minimum of $F$ after at most $(f(x^0) - f^\star)/\epsilon$ iterations, if we consider $O(d)$ permutations for computing $y^k$, with $\epsilon'$ defined in \eqref{eq:epsilon'-main}.

\section{DS Minimization via CDCA}\label{sec:CDCA}

As discussed in \cref{sec:prelim}, CDCA is a special instance of DCA which is guaranteed to converge to a strong critical point. 
In this section, we apply CDCA to the DC program \eqref{eq:DS-DC} corresponding to DS minimization, and show that the stronger guarantee on the DC program translates into a stronger guarantee on the DS problem. We use the same decomposition in \eqref{eq:DS-DCdecomposition}. 

\paragraph{Computational complexity} \looseness=-1 CDCA requires solving a concave minimization problem for each iterate update. The constraint polytope $\partial h(x^k) = \rho x^k  + \p h_L(x^k)$ in Problem \eqref{eq:CDCA-y} can have a number of vertices growing exponentially with the number of equal entries in $x^k$. Thus, it is not possible to efficiently obtain a global solution of Problem \eqref{eq:CDCA-y} in general. However, we can efficiently obtain an approximate critical point.  Denote the objective
\begin{equation}\label{eq:phik}
\phi_k(w) = \ip{w}{x^k} - g^*(w).
\end{equation}
We use an approximate version of the FW algorithm, which 
starting from $w^0 \in \p h(x^k)$, has the following iterates:
 \begin{subequations}\label{eq:FW-concave}
\begin{align}
  s^{t} &\in \p_{\epsilon} \phi_k(w^t) \supseteq  x^k - \p_\epsilon g^*(w^t),\\
v^t &\in \argmin \{ \ip{s^{t}}{w} : w \in \partial h(x^k)\}, \label{eq:FW-LO} \\
w^{t+1} &= (1 - \gamma_t) w^t + \gamma_t v^t, 
\end{align}
\end{subequations}
where $\epsilon \geq 0$ and we use the greedy step size $\gamma_t = \argmin_{\gamma \in [0,1]} \phi_k((1 - \gamma) w^t + \gamma v^t) = 1$.
We observe that with this step size, FW is a special case of DCA (with DC components $g' = \delta_{\p h(x^k)}$ and $h' = - \phi_k$). Hence, \cref{thm:convergence} applies to it (with $\epsilon_x=0, \epsilon_y=\epsilon$). In paticular, FW converges to a critical point with rate $O(1/t)$. Convergence results of FW for nonconvex problems are often presented in terms of the FW gap defined as $\mathrm{gap}(w^t) := \max_{w \in \p h(x^k)} \ip{s^t}{w^t - w}$ \cite{LacosteJulien2016}. Our results imply the following bound on the FW gap (see \cref{sec:FWconv-proof} for details).


 \begin{restatable}{corollary}{FWconvergence} \label{corr:FWconvergence}
Given any $f=g-h$, where $g,h\in \Gamma_0$, and $\phi_k$ as defined in \eqref{eq:phik},
let $\{w^{t} \}$ be generated by approximate FW \eqref{eq:FW-concave} 
with $\gamma_t=1$. Then for all $T \in \bN$, we have
\[ \min_{t \in \{0, \cdots, T-1\} } \mathrm{gap}(w^t) \leq \frac{\phi_k(w^0) - \min_{ w \in \partial h(x^k)} \phi_k(w)}{T} + \epsilon\]
 \end{restatable}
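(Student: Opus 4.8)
The plan is to exploit the identification, already noted in the main text, of approximate FW with step size $\gamma_t = 1$ as an instance of approximate DCA applied to the DC pair $g' = \delta_{\p h(x^k)}$ and $h' = -\phi_k$ (both in $\Gamma_0$, as $\phi_k$ is concave and $\p h(x^k)$ is nonempty closed convex), run with $\epsilon_x = 0$ and $\epsilon_y = \epsilon$. Once this identification is checked, the $O(1/T)$ rate on the per-step objective decrease follows immediately from \cref{thm:convergence}-\ref{itm:obj-rate}, and it only remains to lower bound this decrease by the FW gap.

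First I would verify that the FW recursion \eqref{eq:FW-concave} coincides with the DCA recursion for $f' = g' - h'$. The superdifferential step $s^t \in \p_\epsilon \phi_k(w^t)$ is equivalent, by the super/subdifferential sign correspondence, to $-s^t \in \p_\epsilon h'(w^t)$, so $y^t := -s^t$ is a valid $\epsilon_y$-subgradient of the convex function $h'$. The primal DCA step then reads $w^{t+1} \in \p (g')^*(y^t) = \argmax_{w \in \p h(x^k)} \ip{y^t}{w} = \argmin_{w \in \p h(x^k)} \ip{s^t}{w}$, since $(g')^*$ is the support function $\sigma_{\p h(x^k)}$; this is precisely the linear-oracle point $v^t$, and with $\gamma_t = 1$ we get $w^{t+1} = v^t$, matching \eqref{eq:FW-concave}. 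Because $f'$ agrees with $\phi_k$ on $\p h(x^k)$ and is $+\infty$ off it, \cref{thm:convergence}-\ref{itm:obj-rate} yields
\[ \min_{t \in \{0, \dots, T-1\}} \big( \phi_k(w^t) - \phi_k(w^{t+1}) \big) \leq \frac{\phi_k(w^0) - \min_{w \in \p h(x^k)} \phi_k(w)}{T}. \]

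It then remains to show $\mathrm{gap}(w^t) \leq \phi_k(w^t) - \phi_k(w^{t+1}) + \epsilon$ for each $t$. Since $v^t$ minimizes $\ip{s^t}{\cdot}$ over $\p h(x^k)$, we have $\mathrm{gap}(w^t) = \ip{s^t}{w^t - v^t} = \ip{s^t}{w^t - w^{t+1}}$. Evaluating the defining inequality of $s^t \in \p_\epsilon \phi_k(w^t)$ at $w = w^{t+1}$ gives $\phi_k(w^{t+1}) \leq \phi_k(w^t) + \ip{s^t}{w^{t+1} - w^t} + \epsilon$, which rearranges to $\phi_k(w^t) - \phi_k(w^{t+1}) \geq \mathrm{gap}(w^t) - \epsilon$. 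Substituting into the displayed rate and taking the minimum over $t$ gives the claim.

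I expect the only delicate point to be the bookkeeping of the concave/convex and super/subdifferential conventions in the DCA identification — specifically that $s^t$ enters as $-y^t$ with $y^t \in \p_\epsilon h'(w^t)$, so that the FW linear oracle reproduces the primal DCA step. Everything else reduces to a direct application of \cref{thm:convergence}-\ref{itm:obj-rate} and a single use of the $\epsilon$-superdifferential inequality, so I would not anticipate further technical difficulty.
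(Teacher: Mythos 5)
Your proposal is correct and follows essentially the same route as the paper's proof: both identify approximate FW with $\gamma_t=1$ as approximate DCA applied to $g'=\delta_{\p h(x^k)}$, $h'=-\phi_k$ (with $\epsilon_x=0$, $\epsilon_y=\epsilon$) and then invoke \cref{thm:convergence}-\ref{itm:obj-rate}. The only cosmetic difference is that the paper expresses the gap bound through the quantities $T_{g'}(w^{t+1})=\mathrm{gap}(w^t)$ and $T_{h'}(w^{t+1})\leq\epsilon$ from the detailed version of the theorem, whereas you derive the equivalent inequality $\phi_k(w^t)-\phi_k(w^{t+1})\geq \mathrm{gap}(w^t)-\epsilon$ directly from the $\epsilon$-superdifferential inequality.
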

\cref{corr:FWconvergence} extends the result of \citep[Lemma 2.1]{Yurtsever2022}\footnote{The result therein is stated for $\phi_k$ continuously differentiable, but it does not actually require differentiability.} to handle approximate supergradients of $\phi_k$.
A subgradient of $h_L$ and an approximate subgradient of $g^*$ can be computed as discussed in \cref{sec:DCA}. The following proposition shows that the linear minimization problem \eqref{eq:FW-LO} can be exactly solved in $O(d \log d + d ~\text{EO}_H)$ time. 

 \begin{restatable}{proposition}{FWLO} \label{prop:FWLO}
 Given $s, x \in \R^d$, let $a_1 >  \cdots > a_m$ denote the unique values of $x$ taken at sets $A_1 \cdots, A_m$, i.e., $A_1 \cup \cdots \cup A_m = V$ and for all $ i \in \{1, \cdots, m\},  j \in A_i$, $x_j = a_i$, and 
let  $\sigma \in S_d$ be a decreasing order of $x$, where we break ties according to $s$, i.e., $x_{\sigma(1)} \geq \cdots \geq x_{\sigma(d)}$ and $s_{\sigma(|C_{i-1}| + 1)} \geq \cdots \geq s_{\sigma(|C_{i}|)}$, where $C_i = A_1 \cup \cdots \cup A_i$ for all $i \in \{1, \cdots, m\}$.
Define $w_{\sigma(k)} = H(\sigma(k) \mid S^\sigma_{k-1})$ for all $k \in V$, then $w$ is a maximizer of $\max_{w \in \p h_L(x)} \ip{s}{w}$.
 \end{restatable}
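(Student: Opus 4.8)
The plan is to prove this via a secondary-objective (lexicographic) perturbation argument that reduces the constrained maximization over the face $\p h_L(x)$ to a single application of the greedy algorithm of \cref{prop:LEproperties}-\ref{itm:greedy}. The starting point is that, since $h_L$ is the support function of the base polyhedron $B(H)$ by \cref{prop:LEproperties}-\ref{itm:greedy}, its subdifferential at $x$ is the exposed face of $B(H)$ in the direction $x$, namely
\[
\p h_L(x) = \argmax_{w \in B(H)} \ip{x}{w}.
\]
In particular every $w' \in \p h_L(x)$ lies in $B(H)$ and satisfies $\ip{x}{w'} = h_L(x)$; this is the standard characterization of the subdifferential of a support function, consistent with the fact that the greedy vector is already known to be a subgradient.

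First I would define $w$ exactly as in the statement and verify $w \in \p h_L(x)$. Because $\sigma$ is a decreasing order of $x$ (the tie-breaking by $s$ only refines an order already valid for $x$), \cref{prop:LEproperties}-\ref{itm:greedy} applies verbatim and shows that $w$, with $w_{\sigma(k)} = H(\sigma(k) \mid S^\sigma_{k-1})$, maximizes $\ip{x}{\cdot}$ over $B(H)$, hence $w \in \p h_L(x)$. The crux is then to show that this particular $w$ also maximizes $\ip{s}{\cdot}$ over the whole face. To this end I would introduce the perturbed direction $x + \epsilon s$ for small $\epsilon > 0$. The key elementary claim is that for all $\epsilon$ in some interval $(0,\epsilon_0)$ the permutation $\sigma$ is a (weakly) decreasing order of $x + \epsilon s$ as well: across two distinct level sets $A_i, A_j$ the strict gap between the corresponding values $a_i \neq a_j$ dominates the $\epsilon s$-term, while within a single level set $A_i$ the $x$-entries are equal and the entries of $s$ are already sorted decreasingly along $\sigma$ by the stated tie-breaking condition $s_{\sigma(|C_{i-1}|+1)} \geq \cdots \geq s_{\sigma(|C_i|)}$. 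Thus $(x+\epsilon s)_{\sigma(1)} \geq \cdots \geq (x+\epsilon s)_{\sigma(d)}$. Since the greedy formula for $w$ depends only on $\sigma$ and not on $\epsilon$, \cref{prop:LEproperties}-\ref{itm:greedy} applied to the direction $x + \epsilon s$ gives that the same vector $w$ maximizes $\ip{x+\epsilon s}{\cdot}$ over $B(H)$.

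To finish, take any $w' \in \p h_L(x)$. Then $w' \in B(H)$, so the maximality just established gives $\ip{x+\epsilon s}{w} \geq \ip{x+\epsilon s}{w'}$; using $\ip{x}{w} = \ip{x}{w'} = h_L(x)$, cancelling the common term and dividing by $\epsilon > 0$ yields $\ip{s}{w} \geq \ip{s}{w'}$. As $w' \in \p h_L(x)$ was arbitrary, $w$ maximizes $\ip{s}{\cdot}$ over $\p h_L(x)$, which is exactly the claim. The runtime then follows immediately: computing $\sigma$ by sorting $x$ with the $s$-refined tie-breaking costs $O(d \log d)$, and evaluating the greedy marginals costs $d$ queries of $H$, for a total of $O(d \log d + d~\text{EO}_H)$.

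I expect the only delicate point to be the elementary claim that a single small $\epsilon$ makes $\sigma$ a valid decreasing order of $x + \epsilon s$; this relies on the tie-breaking hypothesis exactly as stated, and one checks it holds uniformly for every $\epsilon$ below $\min_i (a_i - a_{i+1}) / (2 \max_j |s_j|)$ (the minimal strict gap between consecutive distinct values, rescaled by the size of $s$). Everything else is a direct invocation of the support-function/greedy machinery recorded in \cref{prop:LEproperties}-\ref{itm:greedy} together with the standard identity for the subdifferential of a support function, so no separate factorization of the face into base polyhedra of contracted functions is needed.
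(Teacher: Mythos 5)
Your proof is correct, and it takes a genuinely different route from the paper's. The paper, after noting (as you do) that the greedy vector $w$ is feasible, argues directly on the face: by the optimality conditions for $\max_{v \in B(H)} \ip{x}{v}$ \citep[Proposition 4.2]{Bach2013}, every $w' \in \p h_L(x)$ must be tight on the chain of level sets, $w'(C_i) = H(C_i)$ for all $i$; a summation-by-parts (Abel) rearrangement of $\ip{s}{w - w'}$ within each block $A_i$, combined with the tie-breaking monotonicity $s_{\sigma(|C_{i-1}|+1)} \geq \cdots \geq s_{\sigma(|C_i|)}$ and the base-polyhedron inequalities $w'(S^\sigma_\ell) \leq H(S^\sigma_\ell)$, then shows $\ip{s}{w - w'} \geq 0$ term by term. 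You instead use a perturbation (lexicographic) device: for $0 < \epsilon < \min_i (a_i - a_{i+1})/(2 \max_j |s_j|)$ the same permutation $\sigma$ orders $x + \epsilon s$ decreasingly, so the identical greedy vector $w$ maximizes $\ip{x + \epsilon s}{\cdot}$ over all of $B(H)$; cancelling the common value $\ip{x}{w} = \ip{x}{w'} = h_L(x)$ shared by all points of the face and dividing by $\epsilon$ yields $\ip{s}{w} \geq \ip{s}{w'}$. Your route buys economy of means: it needs only the greedy property of \cref{prop:LEproperties}-\ref{itm:greedy} plus the standard identity $\p h_L(x) = \argmax_{v \in B(H)} \ip{x}{v}$ for support functions of compact convex sets (an identity the paper also uses implicitly when it asserts that every $w' \in \p h_L(x)$ maximizes $\ip{x}{\cdot}$ over $B(H)$), and it dispenses with both the face characterization and the algebra on marginals. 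The paper's route buys explicitness: it exhibits exactly which constraints of $B(H)$ are active on the face (tightness on $C_1 \subseteq \cdots \subseteq C_m$), making transparent where the tie-breaking enters, and it requires no perturbation scale (your threshold degenerates when $s = 0$ or $m = 1$, though both cases are trivially fine and your argument handles them). Both proofs are complete and yield the same $O(d \log d + d~\text{EO}_H)$ runtime.
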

\begin{proof}[Proof sketch]
By \cref{prop:LEproperties}-\ref{itm:greedy}, we have that $w \in \p h_L(x)$ and that any feasible solution 
is a maximizer of $\max_{w \in  B(H)} \ip{w}{s}$. The claim then follows by the optimality conditions of this problem given in \citep[Proposition 4.2]{Bach2013}.
The full proof is in \cref{sec:FWLO-proof}.
\end{proof}

Note that Problem \eqref{eq:CDCA-x} reduces to a unique solution $x^{k+1} = \nabla g^*(y^k)$ when $\rho>0$, since $g^*$ is differentiable in this case. When $\rho=0$, the constraint $\p g^*(y^k) = \argmin_{x \in [0,1]^d} g_L(x) - \ip{y^k}{x}$ is the convex hull of minimizers of $g_L(x) - \ip{y^k}{x}$ on $\{0,1\}^d$ \citep[Proposition 3.7]{Bach2013}, which can be exponentially many. One such trivial example is when the objective is zero so that the set of minimizers is $\{0,1\}^d$, in which case Problem \eqref{eq:CDCA-x} is as challenging as the original DC problem. Fortunately, in what follows we show that solving Problem \eqref{eq:CDCA-x} is not necessary to obtain an approximate strong local minimum of $F$; it is enough to pick any approximate subgradient of $g^*(y^k)$ as in DCA. 

\looseness=-1 \paragraph{Theoretical guarantees} Since CDCA is a special case of DCA,  all the guarantees discussed in \cref{sec:DCA} 
apply. In addition, CDCA is known to converge to a strong critical point \citep[Theorem 3]{tao1988duality}. We extend this to the variant with inexact iterates and approximate convergence. 

\begin{restatable}{theorem}{ConvCDCA} \label{thm:ConvCDCA}
Given any $f=g-h$, where $g,h\in \Gamma_0$, let $\{x^k\}$ and $\{y^k\}$ be generated by variant of approximate CDCA \eqref{eq:CDCA}, where $x^{k+1}$ is any point in $\p_{\epsilon_x} g^*(y^k)$ (not necessarily a solution of Problem \eqref{eq:CDCA-x}).
Then, for $\epsilon \geq 0$, 
if $f(x^k) - f(x^{k+1}) \leq \epsilon$, $x^k$ is an $(\epsilon + \epsilon_x)$-strong critical point of $g - h$. 
\red{Moreover, if $h$ is locally polyhedral, then $x^k$ is also an $(\epsilon + \epsilon_x)$-local minimum of $f$. This is the case for $h$ given by \eqref{eq:DS-DCdecomposition} when $\rho=0$.} 
\end{restatable}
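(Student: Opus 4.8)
The plan is to verify the strong criticality condition $\partial h(x^k) \subseteq \partial_{\epsilon + \epsilon_x} g(x^k)$ from \Cref{def:criticality} directly. First I would translate membership in the approximate subdifferential into a Fenchel--Young gap via \Cref{prop:FenchelDualPairs}: a point $y$ lies in $\partial_{\epsilon+\epsilon_x} g(x^k)$ precisely when $g^*(y) + g(x^k) - \ip{y}{x^k} \leq \epsilon + \epsilon_x$. So the goal becomes bounding this gap uniformly over all $y \in \partial h(x^k)$.

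The key reduction is that the gap is largest (equivalently, the quantity $\ip{y}{x^k} - g^*(y)$ is smallest) at $y^k$ among $y \in \partial h(x^k)$, which is exactly the defining property of the CDCA update \eqref{eq:CDCA-y}. Concretely, for any $y \in \partial h(x^k)$ the optimality of $y^k$ gives $g^*(y) - \ip{y}{x^k} \leq g^*(y^k) - \ip{y^k}{x^k}$, hence
\[
g^*(y) + g(x^k) - \ip{y}{x^k} \leq g^*(y^k) + g(x^k) - \ip{y^k}{x^k}.
\]
Thus it suffices to control the single Fenchel--Young gap at $y^k$, i.e.\ to show $y^k \in \partial_{\epsilon+\epsilon_x} g(x^k)$.

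That last step I would obtain by invoking \Cref{thm:convergence}-\ref{itm:convergence}, observing that CDCA is a valid instance of approximate DCA (\Cref{alg:DCA}) with $\epsilon_y = 0$, since $y^k \in \partial h(x^k) = \partial_0 h(x^k)$. Under the hypothesis $f(x^k) - f(x^{k+1}) \leq \epsilon$, that part of \Cref{thm:convergence} yields $y^k \in \partial_{\epsilon'} g(x^k)$ with $\epsilon' = \epsilon + \epsilon_x + \epsilon_y = \epsilon + \epsilon_x$. Combining this with the reduction above, every $y \in \partial h(x^k)$ satisfies the required gap bound, so $\partial h(x^k) \subseteq \partial_{\epsilon+\epsilon_x} g(x^k)$, which is exactly $(\epsilon+\epsilon_x)$-strong criticality of $g-h$ at $x^k$.

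I do not anticipate a serious obstacle, since the heavy lifting is delegated to \Cref{thm:convergence}; the one point that needs care is the uniform reduction from ``all $y \in \partial h(x^k)$'' to ``just $y^k$'', which hinges on reading \eqref{eq:CDCA-y} as minimizing the Fenchel--Young slack over $\partial h(x^k)$ and requires no strong convexity or differentiability of $g$ or $h$. (The local-minimality refinement for locally polyhedral $h$ is not part of the statement to be proved and would instead be routed through \Cref{prop:LocalOptCondition}.)
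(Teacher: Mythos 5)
Your proposal is correct and follows essentially the same route as the paper: the paper packages your ``uniform reduction'' step as the converse direction of \cref{lem:strongCriticalityCDCA} (using the optimality of $y^k$ in \eqref{eq:CDCA-y} together with \cref{prop:FenchelDualPairs} to pass from the single Fenchel--Young gap at $y^k$ to all of $\partial h(x^k)$), and then, exactly as you do, invokes \cref{thm:convergence}-\ref{itm:convergence} with $\epsilon_y=0$ to get $x^k \in \partial_{\epsilon+\epsilon_x} g^*(y^k)$. Your inline treatment proves only the direction of that lemma actually needed, but the ingredients and their order are identical.
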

\begin{proof}[Proof sketch]
    We extend a result in \citep[Theorem 2.3]{tao1988duality} which shows that if $x^k \in \p_\epsilon g^*(y^k)$ where $y^k$ is a solution of Problem \eqref{eq:CDCA-y} then $x^k$ is an $\epsilon$-strong critical point of $g - h$, from $\epsilon = 0$ to any $\epsilon>0$. The theorem then follows from \cref{thm:convergence}-\ref{itm:convergence}.
\end{proof}

The full proof is given in \cref{sec:ConvCDCA-proof}. 
It does not require that $x^{k+1}$ is a solution of Problem \eqref{eq:CDCA-x}. However it does require that $y^k$ is a solution of Problem \eqref{eq:CDCA-y}. Whether a similar result holds when $y^k$ is only an approximate critical point is an interesting question for future work.



\looseness=-1 The next corollary relates strong criticality on the DC problem \eqref{eq:DS-DC} to strong local minimality on the DS problem \eqref{eq:DS}.

\begin{restatable}{corollary}{LocMinCDCA}\label{corr:LocMinCDCA} \looseness=-1  Given $f= g -h$ as defined in \eqref{eq:DS-DCdecomposition},  $\varepsilon \geq 0$, let $\hat{X} \subseteq V$ and $\hat{x}=\1_{\hat{X}}$. If $\hat{x}$ is an $\varepsilon$-strong critical point of $g - h$, then
$\hat{X}$ is an $\varepsilon'$-strong local minimum of $F$, where $\varepsilon' = \sqrt{2 \rho d \varepsilon}$ if $\varepsilon \leq \tfrac{\rho d}{2}$ and $\tfrac{\rho d}{2} + \varepsilon $ otherwise. 
\red{Conversely, if $\hat{X}$ is an $\varepsilon$-strong local minimum of $F$, then $\hat{x}$ is an $\varepsilon$-local minimum of $f$, and  hence also an $\varepsilon$-strong critical point of $g - h$.}
\end{restatable}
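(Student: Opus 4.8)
The plan is to translate the $\varepsilon$-strong criticality $\partial h(\hat x)\subseteq\partial_\varepsilon g(\hat x)$ (\cref{def:criticality}) into a single shifted subgradient inequality for the Lov\'asz extension $g_L$, then to exploit the piecewise-linear structure of $f_L$ along the segments joining $\1_{\hat X}$ to $\1_Y$, and finally to minimize a scalar trade-off that produces the exact constant $\varepsilon'$. First I would fix an arbitrary $u\in\partial h_L(\hat x)$. Since the decomposition \eqref{eq:DS-DCdecomposition} gives $\partial h(\hat x)=\rho\hat x+\partial h_L(\hat x)$, strong criticality yields $u+\rho\hat x\in\partial_\varepsilon g(\hat x)$. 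Writing out this $\varepsilon$-subgradient inequality for $x\in[0,1]^d$ (where the indicator in $g$ vanishes and $\hat x=\1_{\hat X}\in[0,1]^d$) and completing the square — the cross term $\rho\langle\hat x,x-\hat x\rangle$ combines with $\tfrac{\rho}{2}\|\hat x\|^2-\tfrac{\rho}{2}\|x\|^2$ into $-\tfrac{\rho}{2}\|x-\hat x\|^2$ — gives
\[ g_L(x)\ \ge\ g_L(\hat x)+\langle u,\,x-\hat x\rangle-\tfrac{\rho}{2}\|x-\hat x\|^2-\varepsilon\qquad\forall\,x\in[0,1]^d, \]
which I denote $(\ast)$.

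Next, for each candidate $Y$ with $Y\subseteq\hat X$ or $Y\supseteq\hat X$, I would pick a permutation $\sigma$ whose decreasing order lists the smaller of the two sets, then their symmetric difference, then the remaining elements. This $\sigma$ is a valid decreasing order for every point of the segment $x_t:=(1-t)\1_{\hat X}+t\1_Y$, $t\in[0,1]$. Taking $u$ to be the greedy subgradient of $h_L$ associated with $\sigma$ (\cref{prop:LEproperties}-\ref{itm:greedy}), we have $u\in\partial h_L(\hat x)$, and since $h_L$ is the support function of $B(H)$ with $u$ a maximizer along every $x_t$, we get $\langle u,x_t\rangle=h_L(x_t)$; moreover $f_L$ is affine on the segment, so $f_L(x_t)=(1-t)f_L(\hat x)+t\,f_L(\1_Y)$. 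Evaluating $(\ast)$ at $x=x_t$, substituting $\langle u,x_t-\hat x\rangle=h_L(x_t)-h_L(\hat x)$ and $\|x_t-\hat x\|^2=t^2\,|Y\,\Delta\,\hat X|$, and using $f_L=g_L-h_L$ (\cref{prop:LEproperties}-\ref{itm:sum}) collapses everything to $t\,f_L(\hat x)\le t\,f_L(\1_Y)+\tfrac{\rho}{2}t^2 m+\varepsilon$, where $m:=|Y\,\Delta\,\hat X|$.

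Dividing by $t>0$ and minimizing $\tfrac{\rho m}{2}t+\tfrac{\varepsilon}{t}$ over $t\in(0,1]$ then gives, via $F(X)=f_L(\1_X)$ (\cref{prop:LEproperties}-\ref{itm:extension}),
\[ F(\hat X)\ \le\ F(Y)+\min\!\Big(\sqrt{2\rho m\varepsilon},\ \tfrac{\rho m}{2}+\varepsilon\Big), \]
the first branch being attained when the unconstrained optimizer $t^\star=\sqrt{2\varepsilon/(\rho m)}\le1$, i.e.\ when $\varepsilon\le\tfrac{\rho m}{2}$. Finally I would take the worst case over all admissible $Y$, i.e.\ over $m\in\{1,\dots,d\}$ and both chain directions. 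The bound $\sqrt{2\rho m\varepsilon}$ is increasing in $m$, so it is largest at $m=d$; a short case split shows it dominates the small-$m$ branch $\tfrac{\rho m}{2}+\varepsilon$ exactly when $\varepsilon\le\tfrac{\rho d}{2}$ (both branches meet at $2\varepsilon$ on the boundary $m=2\varepsilon/\rho$), whereas for $\varepsilon>\tfrac{\rho d}{2}$ every $m\le d$ lies in the $t=1$ regime and the maximum is $\tfrac{\rho d}{2}+\varepsilon$ at $m=d$. This reproduces the stated $\varepsilon'$; since the bound holds for all $Y\subseteq\hat X$ and all $Y\supseteq\hat X$, $\hat X$ is an $\varepsilon'$-strong local minimum of $F$ (\cref{def:localmin}).

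I expect the main obstacle to be the segment argument in the middle step: verifying that one greedy subgradient $u$ is simultaneously tight for $h_L$ at every $x_t$ and that $f_L$ is genuinely affine along the chain, including the tie-breaking in $\sigma$ needed to keep the ordering valid across the whole interval $t\in[0,1]$ (in particular at the two endpoints where entries coincide). Once this is in place, the scalar optimization and the case analysis yielding the $\sqrt{2\rho d\varepsilon}$ versus $\tfrac{\rho d}{2}+\varepsilon$ dichotomy are routine. I would also note that this argument refines \cref{prop:LocalOptCondition}-\ref{itm:general}: rather than bounding $\tfrac{\rho}{2}\|\1_Y-\hat x\|^2$ by its worst value $\tfrac{\rho d}{2}$ directly, the division by $t$ trades the quadratic penalty against $\varepsilon$ and is precisely what produces the improved $\sqrt{\,\cdot\,}$ constant for small $\varepsilon$.
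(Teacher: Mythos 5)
Your proof is correct, and it establishes exactly what the compiled statement asserts: the ``conversely'' clause is wrapped in the paper's \texttt{\textbackslash red\{\}} macro, which expands to nothing (it was removed in the erratum), so only the forward direction requires proof. Your route shares the paper's two structural ideas --- any $Y\subseteq\hat X$ or $Y\supseteq\hat X$ admits a decreasing order in common with $\hat x$, so the associated greedy vector $u$ lies in $\partial h_L(\1_{\hat X})\cap\partial h_L(\1_Y)$, and strong criticality then places $u+\rho\hat x$ in $\partial_\varepsilon g(\hat x)$ --- but it extracts the constant $\varepsilon'$ by a genuinely different mechanism. The paper converts $u+\rho\hat x\in\partial_\varepsilon g(\hat x)$ into $u\in\partial_{\varepsilon'}\bigl(g_L+\delta_{[0,1]^d}\bigr)(\hat x)$ via two general lemmas (\cref{lem:eps-subdiff-strcvx} and \cref{lem:eps-subdiff-reg}, where the hypercube diameter $D=\sqrt d$ and the minimization of $\tfrac{\rho t}{2}D^2+\tfrac{\varepsilon}{t}$ over $t\in(0,1]$ produce $\varepsilon'$), and then invokes \cref{prop:LocalOptCondition}-\ref{itm:general} with $\epsilon_1=\varepsilon'$, $\epsilon_2=0$. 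You instead keep the quadratic term exact --- your completed-square inequality needs no auxiliary lemma, only the explicit form of \eqref{eq:DS-DCdecomposition} --- then evaluate along the segment $x_t=(1-t)\1_{\hat X}+t\1_Y$, use that $g_L$ and $h_L$ (hence $f_L$) are linear on the order cone containing that segment, and let the interpolation parameter $t$ play precisely the role of the paper's auxiliary $t$; the two scalar minimizations coincide. What each buys: your argument is self-contained (no appeal to the strongly-convex $\epsilon$-subdifferential lemma or to \cref{prop:LocalOptCondition}) and yields the slightly sharper per-set bound $F(\hat X)\le F(Y)+\min_{t\in(0,1]}\bigl(\tfrac{\rho m}{2}t+\tfrac{\varepsilon}{t}\bigr)$ with $m=|Y\,\Delta\,\hat X|\le d$, whereas the paper's lemma-based packaging is reusable across results (the same lemmas drive \cref{corr:LocMinDCA} and \cref{prop:localMinWDRSub-weaker}). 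The points you flag as delicate all go through: the fixed $\sigma$ remains a valid decreasing order of $x_t$ for every $t\in[0,1]$ since only the inequalities $x_{\sigma(1)}\ge\cdots\ge x_{\sigma(d)}$ are required (ties at the endpoints are harmless), the greedy vector is tight for the support function at every $x_t$ by \cref{prop:LEproperties}-\ref{itm:greedy}, and the worst case over $m$ is indeed at $m=d$ because $\min_{t\in(0,1]}\bigl(\tfrac{\rho m}{2}t+\tfrac{\varepsilon}{t}\bigr)$ is non-decreasing in $m$.
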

\begin{proof}[Proof sketch]
We observe that for any $x=\1_X$ corresponding to $X \subseteq \hat{X}$ or $X \supseteq \hat{X}$, we have $\p h_L(\hat{x}) \cap \p h_L(x) \not = \emptyset$. The proof \red{of the forward direction}then follows from \cref{prop:LocalOptCondition}-\ref{itm:general} and the relation between the  $\varepsilon$-subdifferentials of $g$ and $g - \frac{\rho}{2} \| \cdot \|^2$. 
\red{For the converse direction, we argue that there exists a neighborhood $B_\delta(\hat{x})$ of $\hat{x}$, 
such that any $X = \round(x)$ for $x \in B_\delta(\hat{x})$, satisfies $X \subseteq \hat{X}$ or $X \supseteq \hat{X}$. The claim then follows from \cref{prop:LEproperties}-\ref{itm:round},\ref{itm:extension} and \cref{prop:LocalOptCondition}-\ref{itm:general}.}
See \cref{sec:LocMinCDCA-proof} for details. 
\end{proof}

\looseness=-1 \Cref{thm:ConvCDCA} and \cref{corr:LocMinCDCA} imply that CDCA with integral iterates $x^k$ 
converges to an $\epsilon'$-strong local minimum of $F$ after at most $(f(x^0) - f^\star)/\epsilon$ iterations, with $\epsilon'$ as in \eqref{eq:epsilon'-main}. 

\paragraph{Effect of regularization} The parameter $\rho$ has the same effect on CDCA as discussed in \cref{sec:reg} for DCA (\cref{corr:LocMinCDCA} shows, like in \cref{corr:LocMinDCA},  that for fixed $\epsilon$ and $\epsilon_x$, a larger $\rho$ may lead to a poorer solution). 
Also, as in DCA, when $\rho >0$ we can't restrict $x^k$ in CDCA to be integral.  Moreover, rounding only once at convergence is not enough to obtain even an approximate local minimum of $F$, as shown in \cref{ex:localMinRound}. Checking if a set is an approximate strong local minimum of $F$ is computationally infeasible, thus it cannot be explicitly enforced. Instead, we propose a variant of CDCA, which we call \CDCAR, where we round $x^k$ at each iteration.

\paragraph{CDCA with rounding}  Starting from $x^0 \in \{0,1\}^d$, the approximate \CDCAR iterates are given by 
\begin{subequations}\label{eq:CDCAR}
\begin{align}
&y^k, \tilde{x}^{k+1} \text{ as in \eqref{eq:CDCA-y} and  \eqref{eq:CDCA-x} respectively,} \\
& x^{k+1} \gets \1_{X^{k+1}} \text{ where } X^{k+1}=\round(\tilde{x}^{k+1}).
\end{align}
\end{subequations}
\looseness=-1 Since \CDCAR is a special case of \DCAR, all the properties of \DCAR discussed in \cref{sec:DCA} apply. In addition, since $y^k, \tilde{x}^{k+1}, $ are standard approximate CDCA iterates, \cref{thm:ConvCDCA} applies to them, with $x^{k+1}$ replaced by $\tilde{x}^{k+1}$. 
Since $x^k$ is integral in \CDCAR, \cref{corr:LocMinCDCA} holds. In particular,  
\DCAR converges to an $\epsilon'$-strong local minimum of $F$ after at most $(f(x^0) - f^\star)/\epsilon$ iterations, with $\epsilon'$ defined in \eqref{eq:epsilon'-main}. 
See \cref{corr:CDCARconv} for details.

\looseness=-1 The guarantees of DCA and CDCA are equivalent when $F$ is submodular and similar when $F$ is supermodular. 
As stated in \cref{sec:prelim}, if $F$ is supermodular then any $\epsilon'$-{local minimum} of $F$ is also an $\epsilon' d$-{strong local minimum}. 
And when $h$ is differentiable, which is the case in DS minimization only if $H$ is modular and thus $F$ is submodular, then approximate weak and strong criticality of $f$ are equivalent. In this case, both DCA and CDCA return an $\epsilon'$-global minimum of $F$ if $x^k$ is integral; see \Cref{sec:specialCases}.
However, in general the objective value achieved by a set satisfying the guarantees in \cref{corr:LocMinDCA} can be arbitrarily worse than any strong local minimum of $F$ as illustrated in \cref{ex:strongLocalMin}. This highlights the importance of the stronger guarantee achieved by CDCA.

\section{Experiments}\label{sec:exps}

\begin{figure*}
\centering
\begin{subfigure}{.37\textwidth}
 \centering
\includegraphics[scale=0.34]{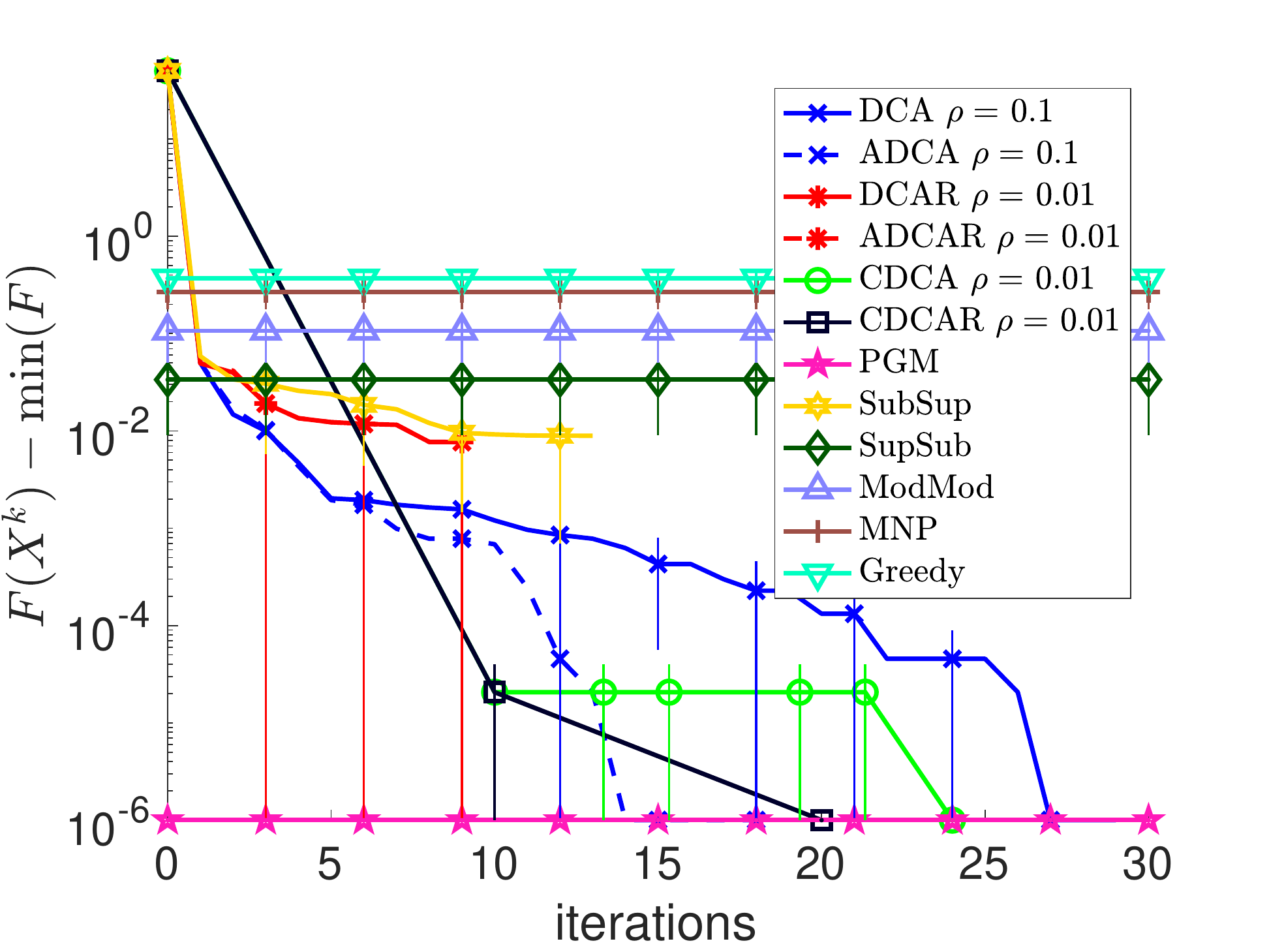}
\end{subfigure}\hspace{15pt}
\begin{subfigure}{.37\textwidth}
 \centering
\includegraphics[scale=0.34]{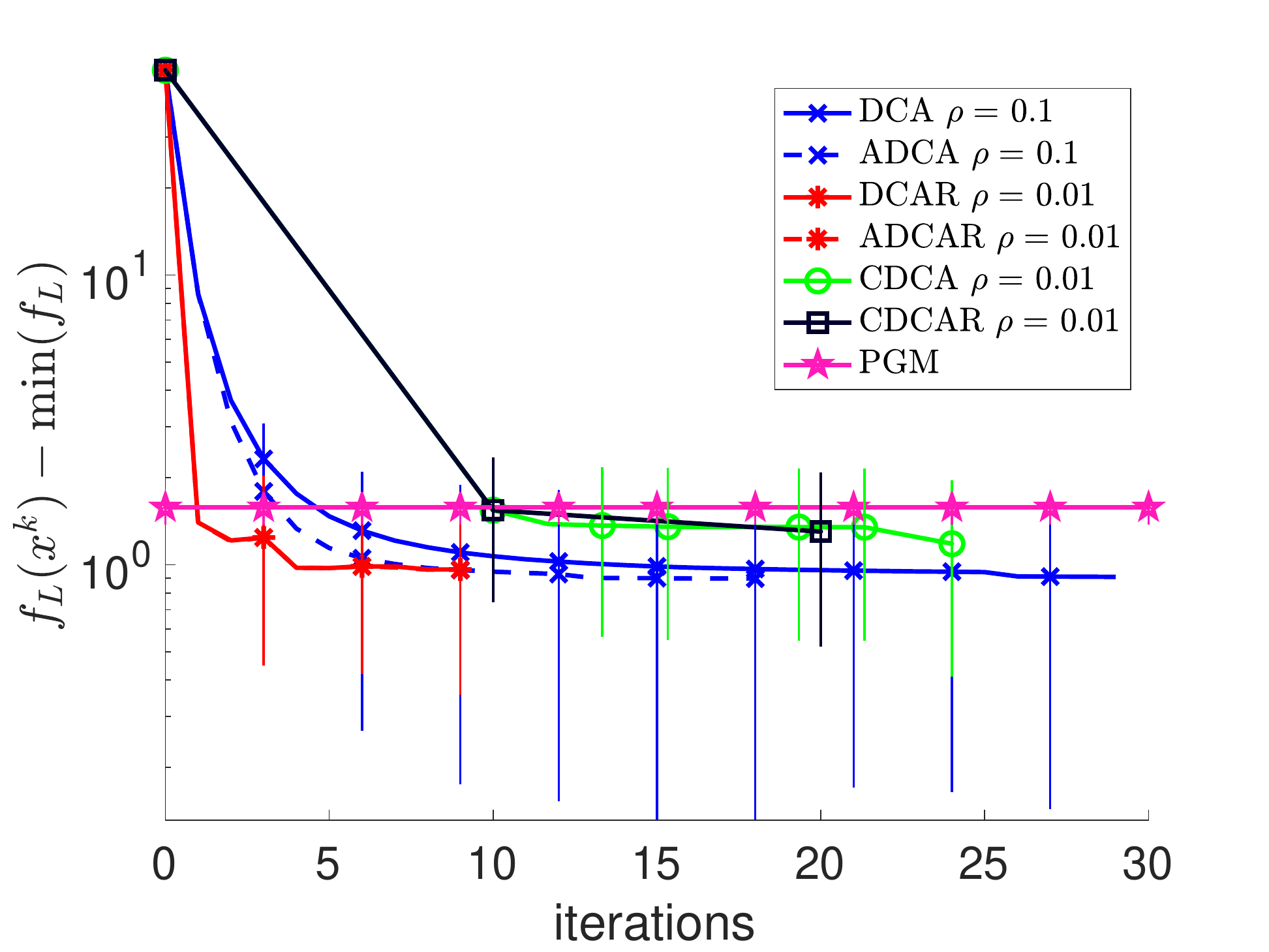}
\end{subfigure}\\ 
\begin{subfigure}{.37\textwidth}
 \centering
\includegraphics[scale=0.34]{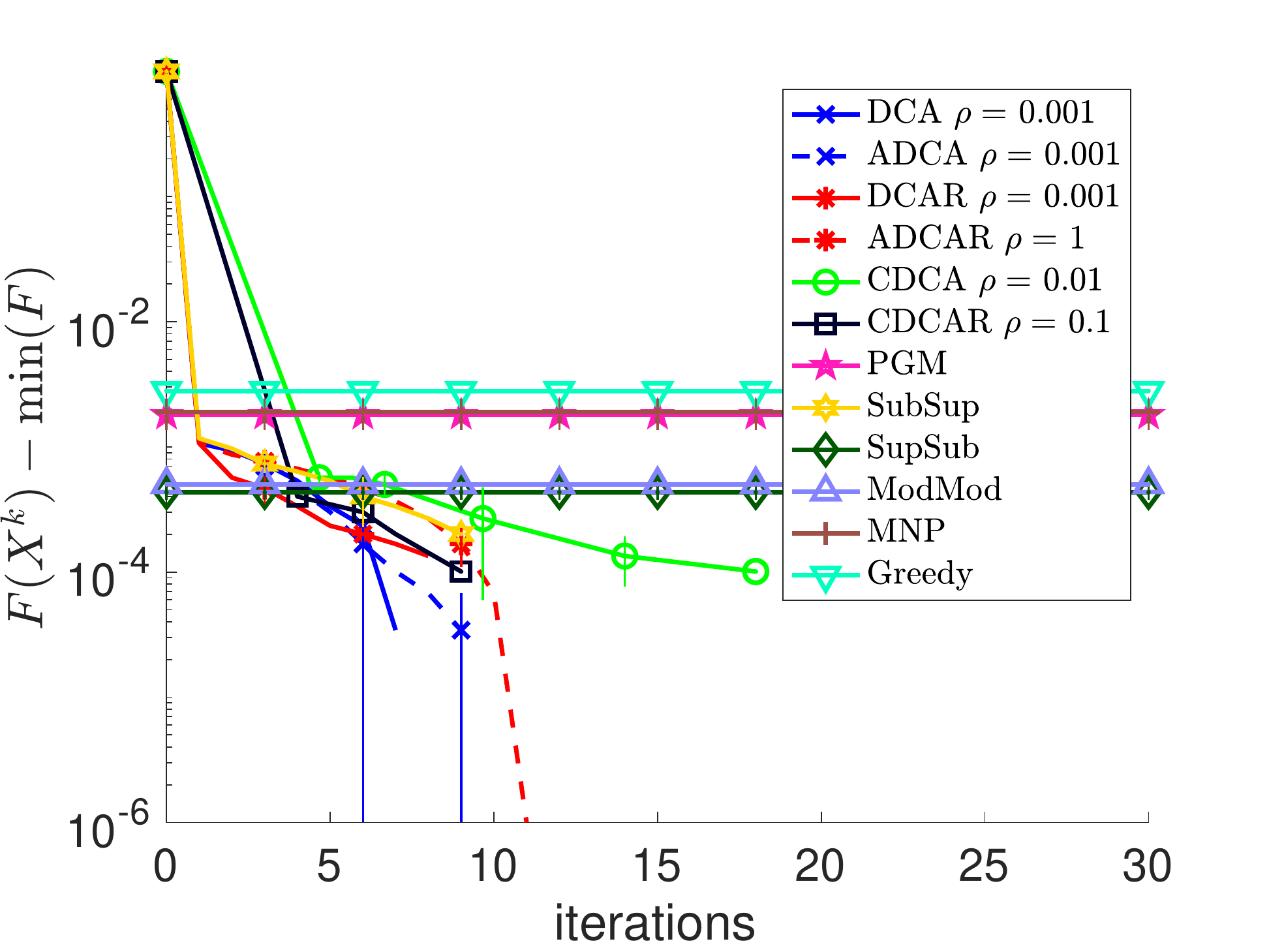}
\end{subfigure}\hspace{15pt}
\begin{subfigure}{.37\textwidth}
 \centering
\includegraphics[scale=0.34]{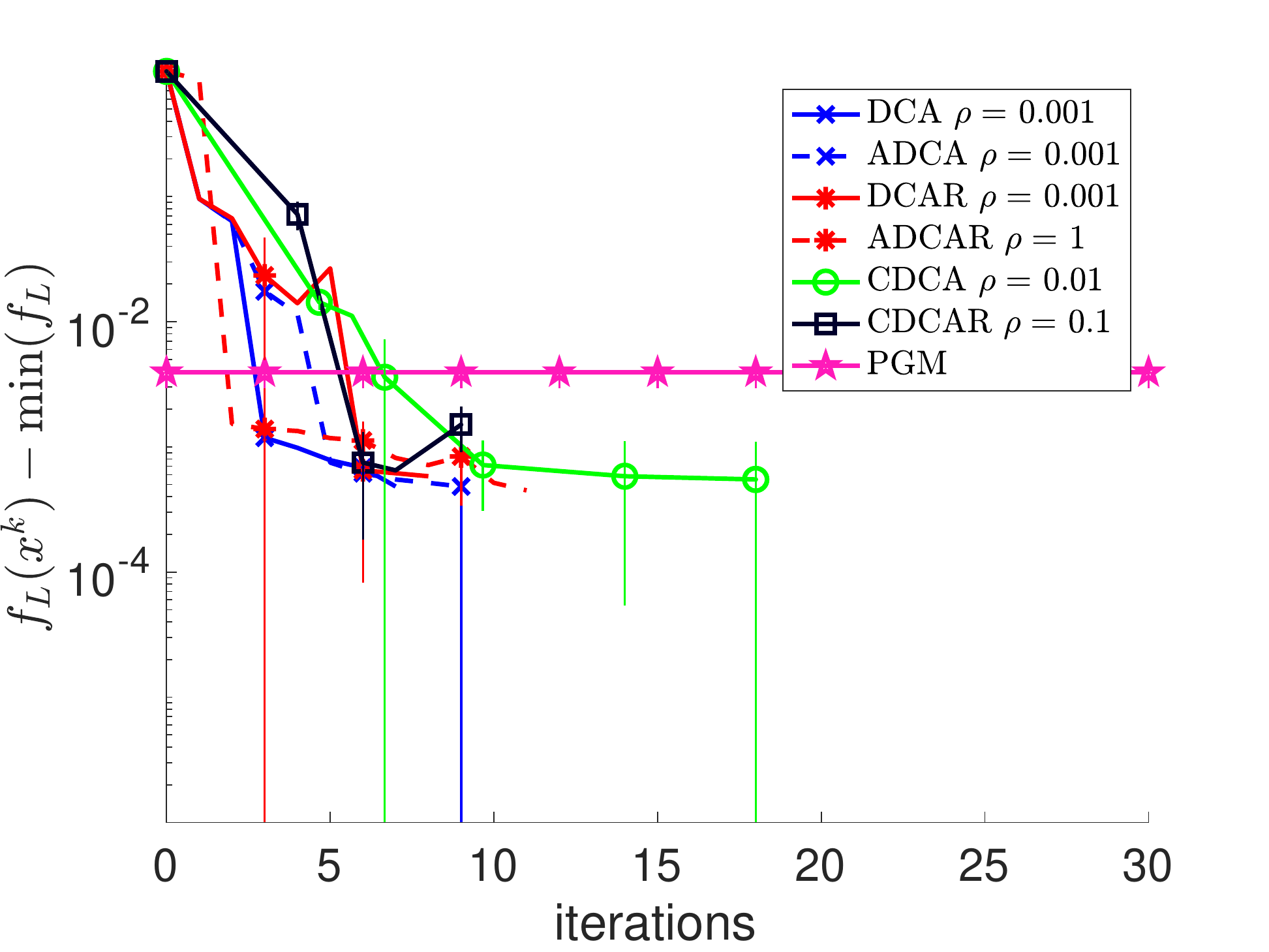}
\end{subfigure}
\caption{\label{fig:obj-iter-bestrho} Discrete and continuous objective values (log-scale) vs iterations on speech (top) and mushroom (bottom) datasets.}
\end{figure*}

\looseness=-1 In this section, we evaluate the empirical performance of our proposed methods on two applications: speech corpus selection and feature selection. 
We compare our proposed methods DCA, \DCAR, CDCA and \CDCAR to the state-of-the-art  methods for DS minimization, SubSup, SupSub and ModMod \cite{Narasimhan2005a, Iyer2012a}. 
We also include an accelerated variant of DCA (ADCA) and \DCAR (\ADCAR), with the acceleration proposed in \cite{Nhat2018}. 
We use the minimum norm point (MNP) algorithm \cite{Fujishige2011} for submodular minimization in SubSup and the optimal Greedy algorithm of \citep[Algorithm 2]{Buchbinder2012} for submodular maximization in SupSub. We also compare with the MNP, PGM, and Greedy algorithms applied directly to the DS problem \eqref{eq:DS}.

We do not restrict $\rho$ to zero or the iterates to be integral in DCA and CDCA (recall that DCA in this case reduces to SubSup). Instead, we vary $\rho$ between $0$ and $10$, and 
round only once at convergence (though for evaluation purposes we do round at each iteration, but we do not update $x^{k+1}$ with the rounded iterate).
We also do not consider $O(d)$ permutations for choosing $y^k$ in DCA, \DCAR, SubSup and ModMod, as required in \cref{corr:LocMinDCA} and  \cite{Iyer2012a} to guarantee convergence to an approximate local minimum of $F$, as this is too computationally expensive (unless done fully in parallel). 
 Instead, we consider as in \cite{Iyer2012a} three permutations to break ties in $x^k$: a random permutation, a permutation ordered according to the decreasing marginal gains of $G$, i.e., $G(i \mid X^k \setminus i)$, or according to  
the decreasing marginal gains of $F$, i.e., $F(i \mid X^k \setminus i)$, which we try in parallel at each iteration, then pick the one yielding the best objective $F$. We also apply this heuristic in CDCA and \CDCAR to choose an initial feasible point  $w^0 \in \rho x^k + \partial h_L(x^k)$ for FW \eqref{eq:FW-concave}; we pick the permutation yielding the smallest objective $\phi_k(w^0)$. 

\looseness=-1 We use $f(x^k) - f(x^{k+1}) \leq 10^{-6}$ as a stopping criterion in our methods, and $X^{k+1} = X^k$ in SubSup, SupSub and ModMod as in \cite{Iyer2012a}, and stop after a maximum number of iterations.  
To ensure convergence to a local minimum of $F$, we explicitly check for this as an additional stopping criterion in all methods except MNP, PGM and Greedy, and restart from the best neighboring set if not satisfied, as discussed in \cref{sec:reg}.
For more details on the experimental set-up, see \cref{sec:expSetup}.
The code is available at \url{https://github.com/SamsungSAILMontreal/difference-submodular-min.git}.

\paragraph{Speech corpus selection} \looseness=-1 
The goal of this problem is to find a subset of a large speech data corpus to rapidly evaluate new and expensive speech recognition algorithms. 
One approach is to select a subset of utterances $X$ from the corpus $V$ that simultaneously minimizes the vocabulary size and maximizes the total value of data \cite{Lin2011, Jegelka2011}. 
Also, in some cases, some utterances' importance decrease when they are selected together.
This can be modeled by minimizing $F(X) = \lambda \sqrt{|\cN(X)|} - \sum_{i=1}^r \sqrt{m(X \cap V_i)}$,
where $\cN(X)$ is the set of distinct words that appear in utterances $X$, $m$ is a non-negative modular function, with the  weight $m_j$ representing the importance of utterance $j$, and $V_1 \cup \cdots \cup V_r = V$. We can write $F$ as the difference of two non-decreasing submodular functions $G(X) = \lambda \sqrt{|\cN(X)|}$ and  $H(X) = \sum_{i} \sqrt{m(X \cap V_i)}$. 
Moreover, this problem is a special case of DS minimization, where $H$ is \emph{approximately modular}. In particular, $H$ is $(1, \beta)$-\emph{weakly DR-modular} (see \cref{def:WDR}) with\footnote{The proof follows similarly to \citep[Lemma 3.3]{Iyer2013b}} $$\beta \geq \min_{i \in [r]} \min_{j \in V_i} \tfrac{1}{2} \sqrt{\tfrac{m(j)}{m(V_i)}}.$$ The parameter $\beta$ characterizes how close $H$ is to being supermodular.
This DS problem thus fits under the setting considered in \cite{Halabi20} (with $\alpha=1$), for which PGM was shown to achieve the optimal approximation guarantee $F(\hat{X}) \leq G(X^*) - \beta H(X^*) + \epsilon$ for some $\epsilon>0$, where $X^*$ is a minimizer of $F$ (see Corollary 1 and Theorem 2 therein). We show in \cref{sec:ApproxSub} that any variant of DCA and CDCA obtains the same approximation guarantee as PGM (see \cref{prop:localMinWDRSub-weaker} and discussion below it).\\
We use the same dataset used by \citep[Section 12.1]{Bach2013}, with $d = |V|= 800$ utterances and $1105$ words. 
We choose $\lambda=1$, the non-negative weights $m_i$ randomly, and partition $V$ into $r=10$ groups of consecutive indices. 

\paragraph{Feature selection} \looseness=-1 Given a set of features $U_V = \{U_1, U_2, \cdots, U_d\}$, the goal is to find a small subset of these features $U_X = \{U_i: i \in X\}$ 
that work well when used to classify a class $C$. We thus want to select the subset
which retains the most information from the original set $U_V$ about $C$. This can be modeled by minimizing $F(X) = \lambda |X| - \mathrm{I}(U_X; C)$. The mutual information  $\mathrm{I}(U_X; C)$ can be written as the difference of the entropy $\mathcal{H}(U_X)$ and conditional entropy $\mathcal{H}(U_X \mid C)$, both of which are non-decreasing submodular. Hence $F$ can be written as the difference of two non-decreasing submodular functions $G(X) =  \lambda |X| + \mathcal{H}(U_X \mid C)$ and $H(X) =  \mathcal{H}(U_X)$. We estimate the mutual information from the data. 
We use the Mushroom data set from \cite{Dua:2019}, which has 8124 instances with 22 categorical attributes, which we convert to $d = 118$ binary features. We randomly select $70\%$ of the data as training data for the feature selection, and set $\lambda=10^{-4}$. 

\paragraph{Results:} \looseness=-1 We plot in \cref{fig:obj-iter-bestrho}, the discrete objective values $F(X^k) - \min(F)$ and continuous objective values  $f_L(x^k) - \min(f_L)$, per iteration $k$, where $\min(F)$ and $\min(f_L)$ are the smallest values achieved by all compared methods. We only plot the continuous objective of the methods which minimize the continuous DC problem \eqref{eq:DS-DC}, instead of directly minimizing the DS problem \eqref{eq:DS}, i.e., our methods and PGM. For DCAR and CDCAR, we plot the continuous objective values before rounding, i.e., $f_L(\tilde{x}^{k})$, since the continuous objective after rounding  is equal to the discrete one, i.e., $f_L({x}^k) = F(X^{k})$. 
Results are averaged over 3 random runs, with standard deviations shown as error bars.
For clarity, we only include our methods with the $\rho$ value achieving the smallest discrete objective value. We show the results for all $\rho$ values in \cref{sec:regEffect}. 
For a fair implementation-independent comparison, we use the number of FW \eqref{eq:FW-concave} iterations as the x-axis for CDCA and \CDCAR, since one iteration of FW has a similar cost to an iteration of DCA variants. We only show the minimum objective achieved by SupSub, ModMod, MNP, PGM and Greedy, since their iteration time is significantly smaller than the DCA and CDCA variants. 
We show the results
with respect to time in \cref{sec:runtimes}. 

\looseness=-1 We observe that, as expected, PGM obtains the same discrete objective value as the best variants of our methods on the speech dataset, where PGM and our methods achieve the same approximation guarantee, but worse on the adult dataset, where PGM has no guarantees. Though in terms of continuous objective value, PGM is doing worse than our methods on both datasets. Hence, a better $f_L$ value does not necessarily yield a better $F$ value after rounding.
In both experiments, our methods reach a better $F$ value than all other baselines, except SubSup which gets the same value as DCAR on the speech dataset, and a similar value to our non-accelerated methods on the mushroom dataset. 

The complete variants of our methods, CDCA and CDCAR, perform better in terms of $F$ values, than their simple counterparts, DCA and DCAR, on the speech dataset. But, on the mushroom dataset,  CDCAR perform similarly to DCAR, while CDCA is worse that DCA. Hence, using the complete variant is not always advantageous. 
In terms of $f_L$ values, CDCA and CDCAR perform worse than DCA and DCAR, respectively, on both datasets. Again this illustrates than a better $f_L$ value does not always yield a better $F$ value.

\looseness=-1 Rounding at each iteration helps for CDCA on both datasets; CDCAR converges faster than CDCA in $F$, but not for DCA; DCAR reaches worse $F$ value than DCA. Note that unlike $f_L({x}^k)$, 
the objective values $f_L(\tilde{x}^k)$ of DCAR and CDCAR are not necessarily approximately non-increasing (\cref{them:convergence-round-app}-\ref{itm:descent-round-app} does not apply to them), which we indeed observe on the mushroom dataset. 

\looseness=-1 Finally, we observe that adding regularization leads to better $F$ values; the best $\rho$ is non-zero for all our methods (see \cref{sec:regEffect} for a more detailed discussion on the effect of regularization). Acceleration helps in most cases but not all; DCAR and ADCAR perform the same on the speech dataset. 


\section{Conclusion}

 We introduce variants of DCA and CDCA for minimizing the DC program equivalent to DS minimization. We establish novel links between the two problems, which allow us to match the theoretical guarantees of existing algorithms using DCA, and to achieve stronger ones using CDCA. Empirically, our proposed methods perform similarly or better than all existing methods.



\section*{Acknowledgements}

This research was enabled in part by support provided by 
Calcul Quebec (\url{https://www.calculquebec.ca/})
and the Digital Research Alliance of Canada (\url{https://alliancecan.ca/}). 
George Orfanides was partially supported by NSERC CREATE INTER-MATH-AI.
Tim Hoheisel was partially supported by the NSERC discovery grant RGPIN-2017-04035.

\bibliographystyle{icml2023}
\bibliography{biblio}

\newpage
\appendix
\onecolumn
\section{Subsup as a Special Case of DCA}\label{sec:SubSupDCA}
We show that 
the SubSup procedure proposed in \cite{Narasimhan2005a} is a special case of DCA. SubSup starts from $X^0 \subseteq V$, and makes the following updates at each iteration:
 \begin{equation}
\begin{aligned}\label{eq:SubSup}
 y^k_{\sigma(i)} &\gets H(\sigma(i) \mid S^\sigma_{i-1}) ~\forall i \in V, \text{for } \sigma \in S_d \text{ such that } S^\sigma_{|X^k|} = X^k \\
 X^{k+1} &\gets \argmin_{X \subseteq V} G(X) - y^k(X) \\
\end{aligned}
\end{equation}

Note that $y^k \in \p h_L(\1_{X^k})$ 
by \cref{prop:LEproperties}-\ref{itm:greedy} and $\1_{X^{k+1}} \in \argmin_{x \in [0,1]^d} g_L(x) - \ip{x}{y^k}$ as discussed in \cref{sec:DCA}, thus they are valid  updates of DCA in \cref{eq:DCASet} with $\rho = \epsilon_x = 0$. 

\section{Experimental Setup Additional Details}\label{sec:expSetup}

\begin{table}
\caption{Stopping criteria}  \label{table:stopCriteria}
\begin{center}
\begin{adjustbox}{width=1\textwidth}
\begin{tabular}{c | c | c | c | c | c | c | c  }
\toprule
 DCA, \DCAR,  & CDCA, \CDCAR& SubSup & SupSub, ModMod & MNP, PGM & PGM in DCA and  & MNP in SubSup & FW in CDCA  \\
  ADCA, \ADCAR & & & & &   CDCA variants & & variants\\
 \midrule
$f(x^k) - f(x^{k+1}) \leq 10^{-6}$ & $f(x^k) - f(x^{k+1}) \leq 10^{-6}$ & $X^{k+1} = X^k$ & $X^{k+1} = X^k$ & & $\text{gap} \leq 10^{-6}$ & $\text{gap} \leq 10^{-6}$ & $\text{gap} \leq 10^{-6}$ \\
$k \leq 30$ & $k + \text{\# FW iterations} \leq 30$ & $k \leq 30$ & $k \leq 3 \times 10^4$ & $k \leq 3 \times 10^4$ & $k \leq 10^3$ & $k \leq 10^3$ & $k \leq 10$ \\
$X^{k+1}$ local minimum of $F$ & $X^{k+1}$ local minimum of $F$ & $X^{k+1}$ local minimum of $F$ & $X^{k+1}$ local minimum of $F$ & & & & \\
\bottomrule
\end{tabular}
\end{adjustbox}
\end{center}
\end{table}
In this section, we provide additional details on our experimental setup. As in \cite{Iyer2012a}, we consider in ModMod and SupSub two modular upper bounds on $G$, which we try in parallel and pick the one which yields the best objective $F$. We set the parameter $q$  in ADCA and \ADCAR to $5$ as done in \cite{Nhat2018}.
We summarize the stopping criteria used in all methods and their subsolvers in \cref{table:stopCriteria}. We pick the maximum number of iterations according to the complexity per iteration. 
We use the random seeds 42, 43, and 44.  We use the implementation of MNP from the Matlab code provided in \citep[Section 12.1]{Bach2013} and implement the rest of the methods in Matlab.

\section{Additional Empirical Results}

In this section, we present some additional empirical results of the experiments presented in \cref{sec:exps}. 

\subsection{Effect of regularization}\label{sec:regEffect}

\mtodo{Recall here the theoretical effects of $\rho$. I think we can show that critical points of $g-h$ with $rho>0$ are critical points of $g - h$ without $\rho$ with $\epsilon$ depending on $\rho$ in the same was as for local minimality w.r.t $F$, which would explain maybe why overall larger $\rho$ leads to slower convergence.}

We report the discrete and continuous objective values per iteration of our proposed methods, for all $\rho$ values, on the speech dataset in \cref{fig:obj-iter-allrhos-speech} and the mushroom dataset in \cref{fig:obj-iter-allrhos-mushroom}. We observe that the variants without rounding at each iteration converge slower in $f_L$ for larger $\rho$ values, though not always, e.g., DCA with $\rho=0.001$ converges faster than with $\rho=0$ on the speech dataset, and CDCA with $\rho = 0.01$  converges faster than with $\rho=0.1$ on the mushroom dataset. The effect of $\rho$ on the rounded variants is less clear; in most cases the methods with small $\rho$ values are performing worse, but for CDCAR on the speech dataset the opposite is true.
We again observe that better performance w.r.t $f_L$ does not necessarily translate to better performance w.r.t $F$. The effect of $\rho$ on performance w.r.t $F$ varies with the different methods and datasets. But in all cases, the best $F$ values is obtained with $\rho>0$.

Recall that we use PGM to compute an $\epsilon_x$-subgradient of $g^*$ to update $x^k$ in DCA variants \eqref{eq:DCASet-x} and CDCA variants \eqref{eq:CDCA-x}, as well as in each iteration of FW \eqref{eq:FW-concave} to update $y^k$ in CDCA variants \eqref{eq:CDCA-y}.
As discussed in \cref{sec:DCA}, PGM requires $O(d \kappa^2 / \epsilon_x^2)$ iterations when $\rho=0$ and $O(2 (\kappa + \rho \sqrt{d})^2 / \rho \epsilon_x)$ when $\rho>0$, where $\kappa$ is the Lipschitz constant of $ g_L(x) - \ip{x}{y^k}$. \Cref{fig:gap-iter-allrhos} shows the gap reached by PGM at each iteration of DCA variants, and the worst gap reached by PGM over all the approximate subgradient computations done at each iteration of CDCA variants. As expected, a larger $\rho$ leads to a more accurate solution (smaller gap), for a fixed number of PGM iterations (we used $1000$). Though, the accuracy at $\rho=0$ is better than the very small non-zero values $\rho = 0.01, 0.001$, for which the complexity $O(2 (\kappa + \rho \sqrt{d})^2 / \rho \epsilon_x)$ becomes larger than $O(d \kappa^2 / \epsilon_x^2)$.


\begin{figure}
\vspace{-7pt}
\centering
\begin{subfigure}{\textwidth}
 \centering
\includegraphics[scale=0.38]{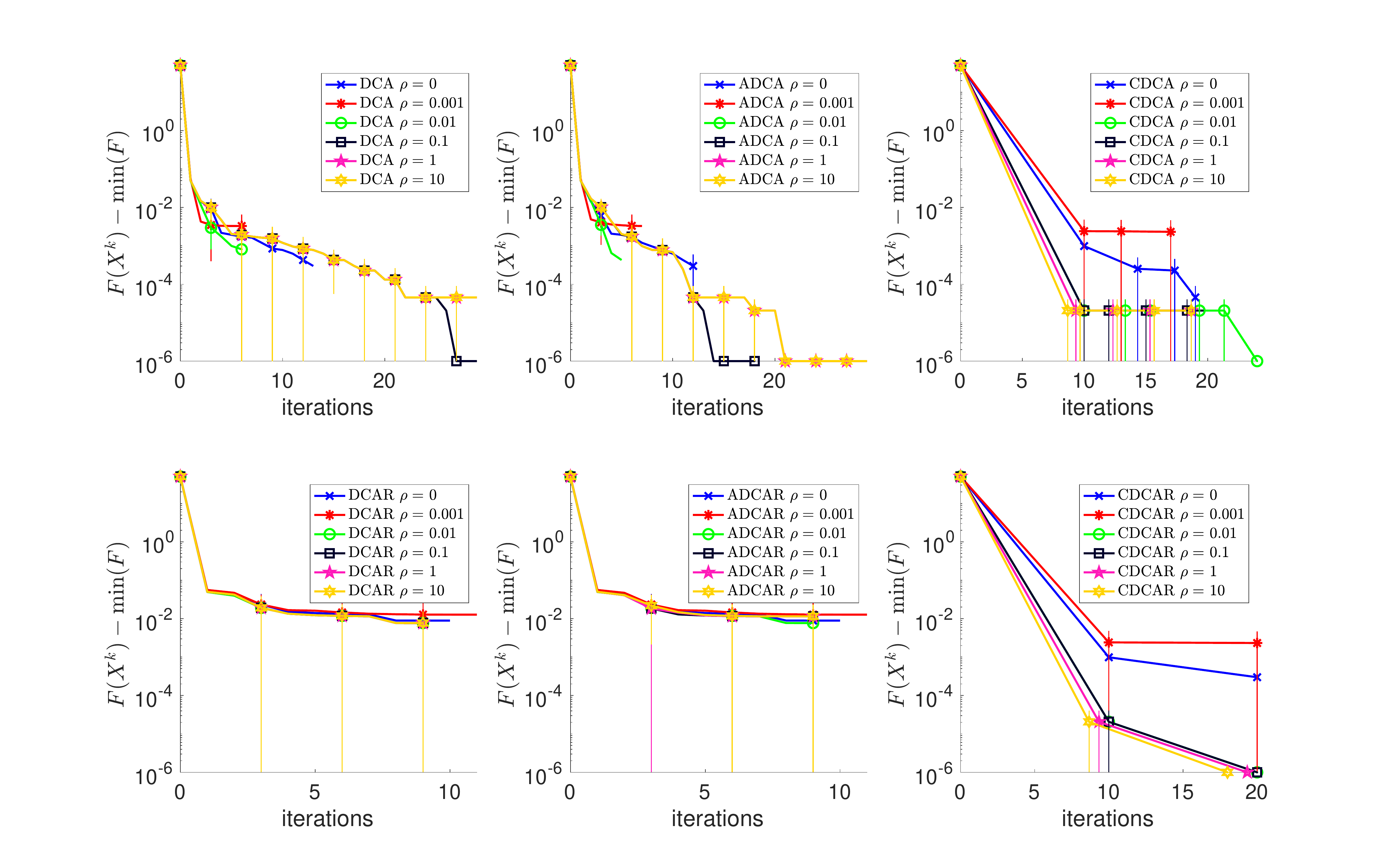}
\end{subfigure}\\
\vspace{-7pt}
\begin{subfigure}{\textwidth}
 \centering
\includegraphics[scale=0.38]{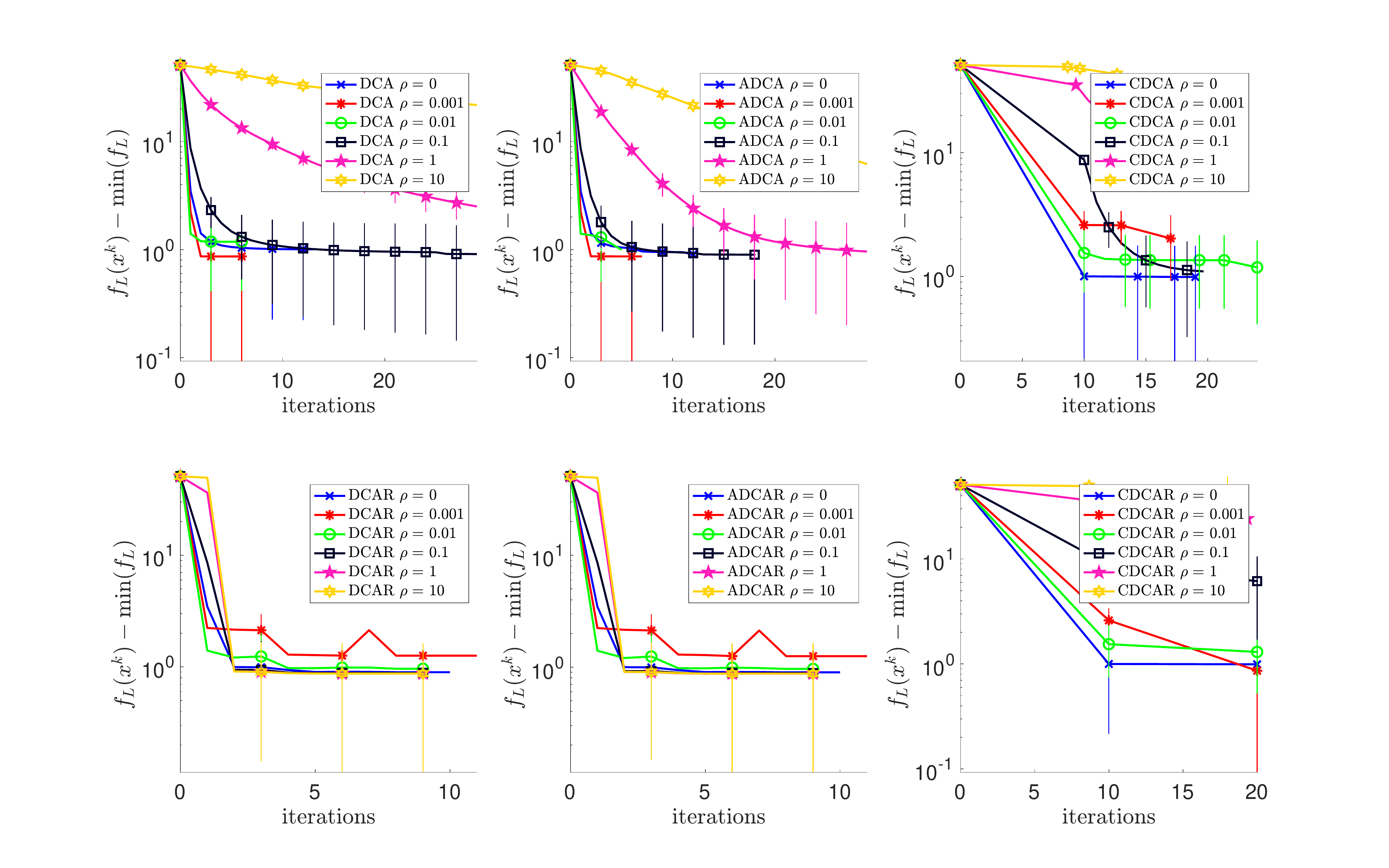}
\end{subfigure}
\caption{\label{fig:obj-iter-allrhos-speech} Discrete and continuous objective values (log-scale) of our proposed methods for all $\rho$ values vs iterations on speech dataset.}
\end{figure}

\begin{figure}
\vspace{-7pt}
\begin{subfigure}{\textwidth}
 \centering
\includegraphics[scale=0.38]{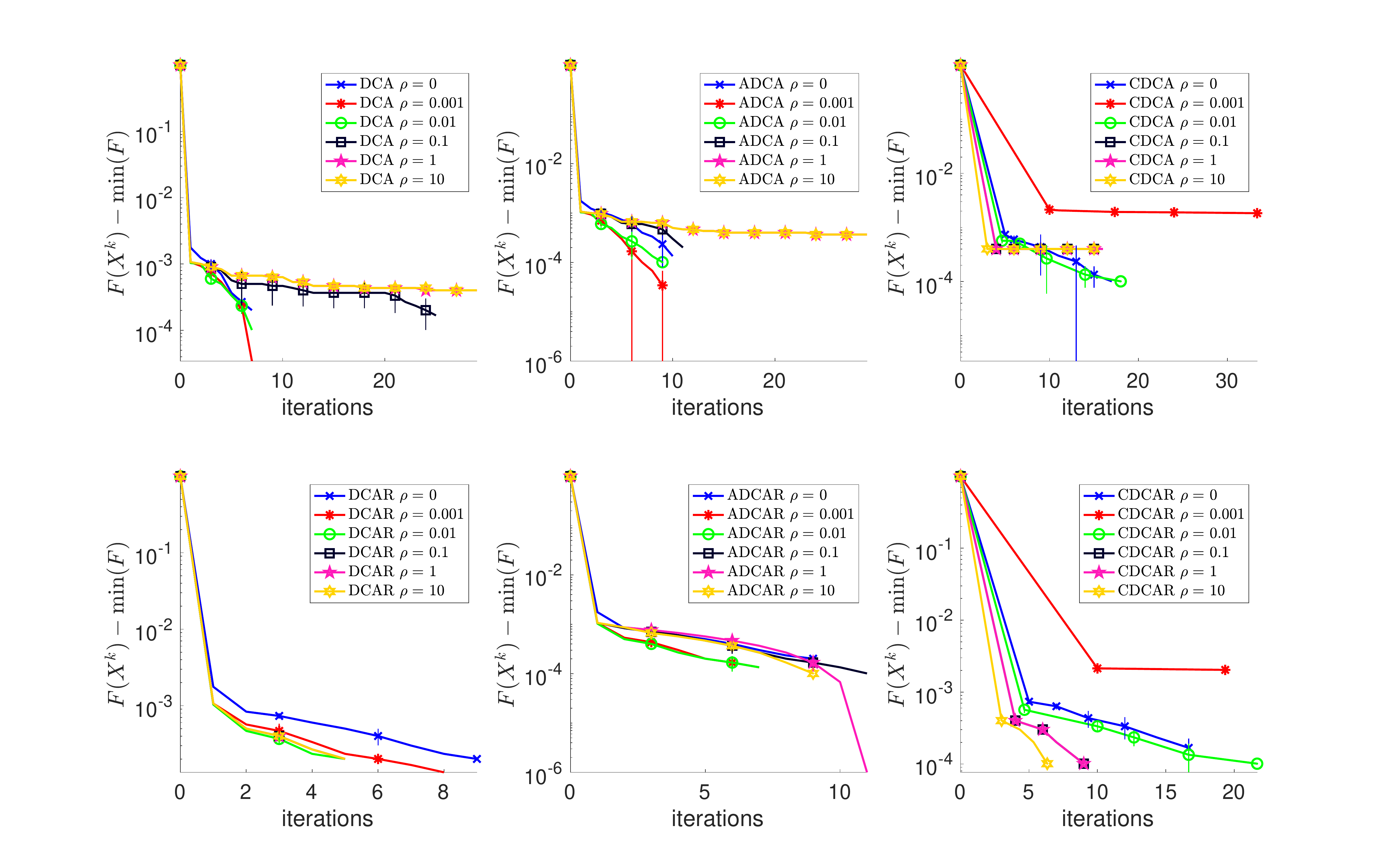}
\end{subfigure}\\
\vspace{-7pt}
\begin{subfigure}{\textwidth}
 \centering
\includegraphics[scale=0.38]{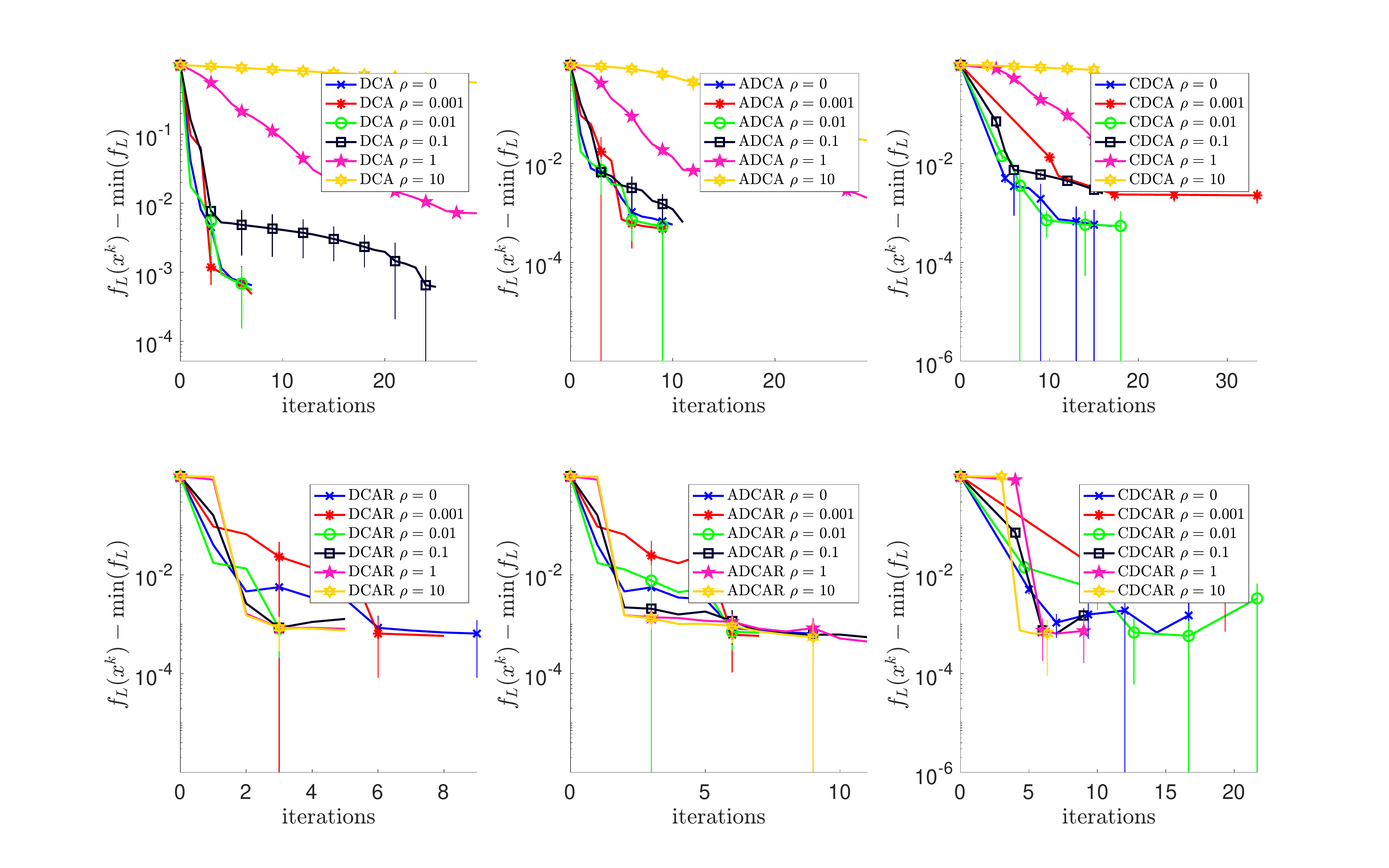}
\end{subfigure}
\caption{\label{fig:obj-iter-allrhos-mushroom} Discrete and continuous objective values (log-scale) of our proposed methods for all $\rho$ values vs iterations on mushroom dataset.}
\end{figure}

\begin{figure}
\vspace{-7pt}
\begin{subfigure}{\textwidth}
 \centering
\includegraphics[scale=0.38]{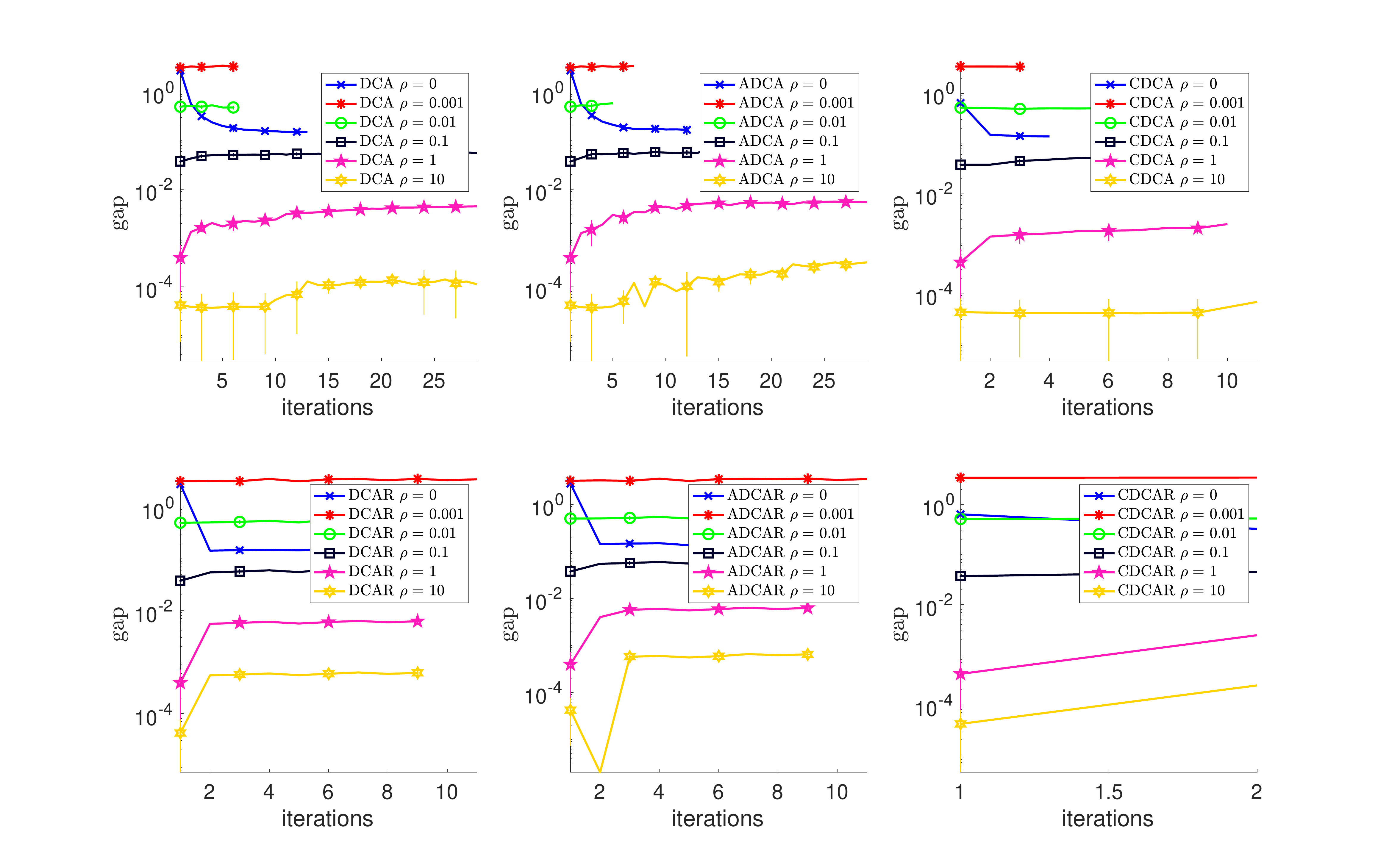}
\end{subfigure}\\
\vspace{-7pt}
\begin{subfigure}{\textwidth}
 \centering
\includegraphics[scale=0.38]{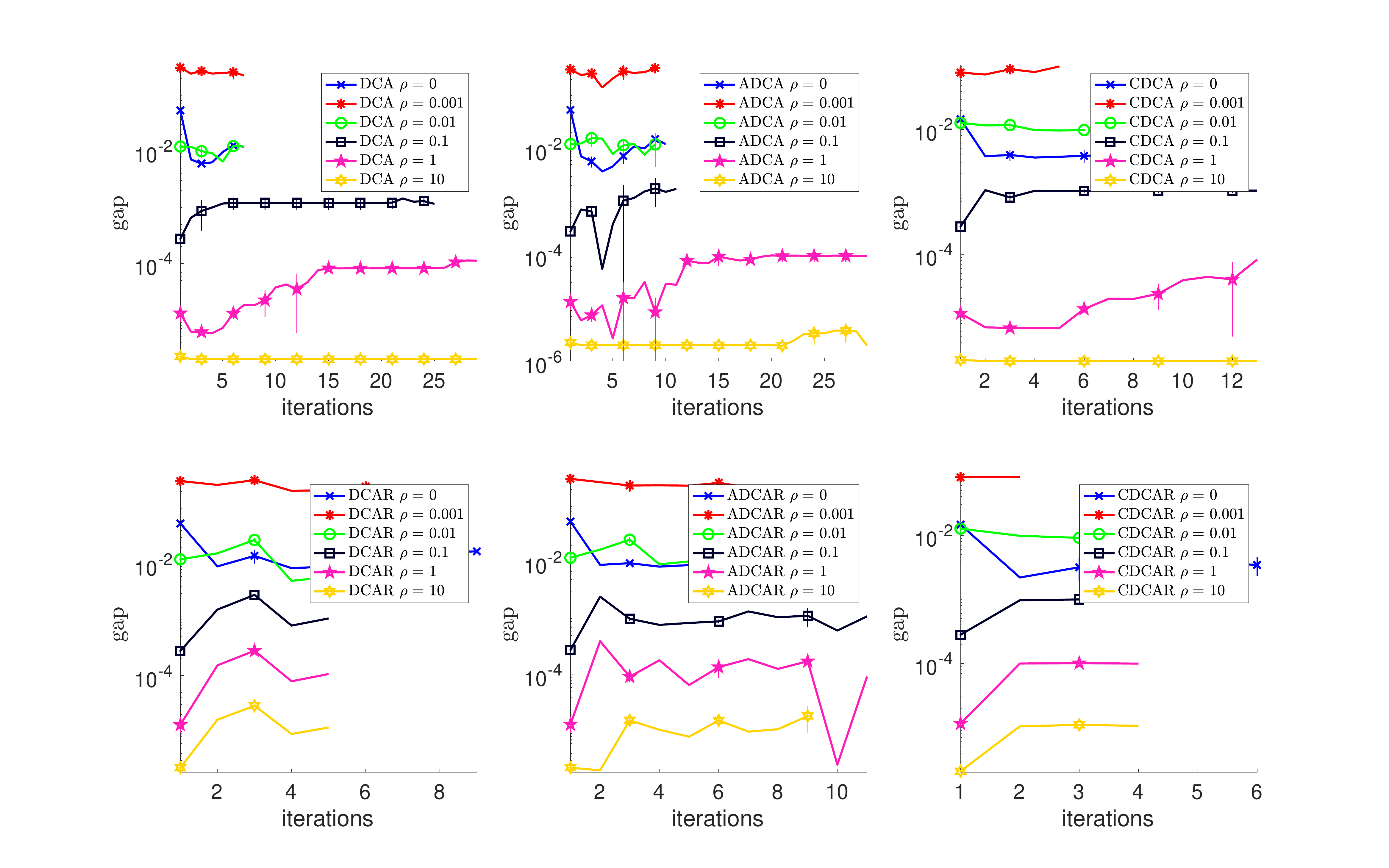}
\end{subfigure}
\caption{\label{fig:gap-iter-allrhos} PGM gap values (log-scale) of our proposed methods for all $\rho$ values vs iterations on speech (top two rows) and mushroom (bottom two rows) datasets.}
\end{figure}

\subsection{Running times}\label{sec:runtimes}

\begin{figure}
\centering
\begin{subfigure}{.33\textwidth}
 \centering
\includegraphics[scale=0.32]{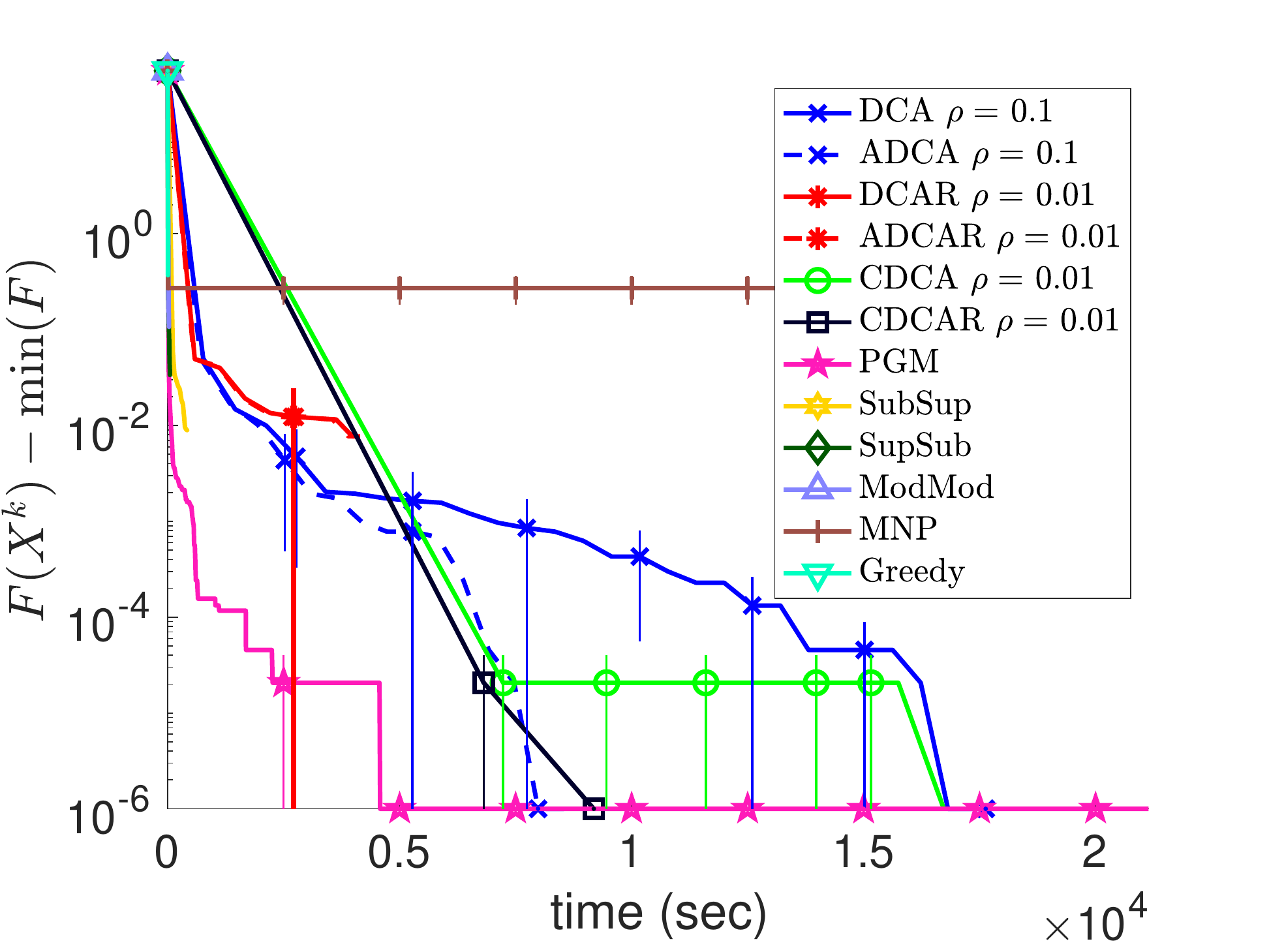}
\end{subfigure}
\begin{subfigure}{.32\textwidth}
 \centering
\includegraphics[scale=0.3]{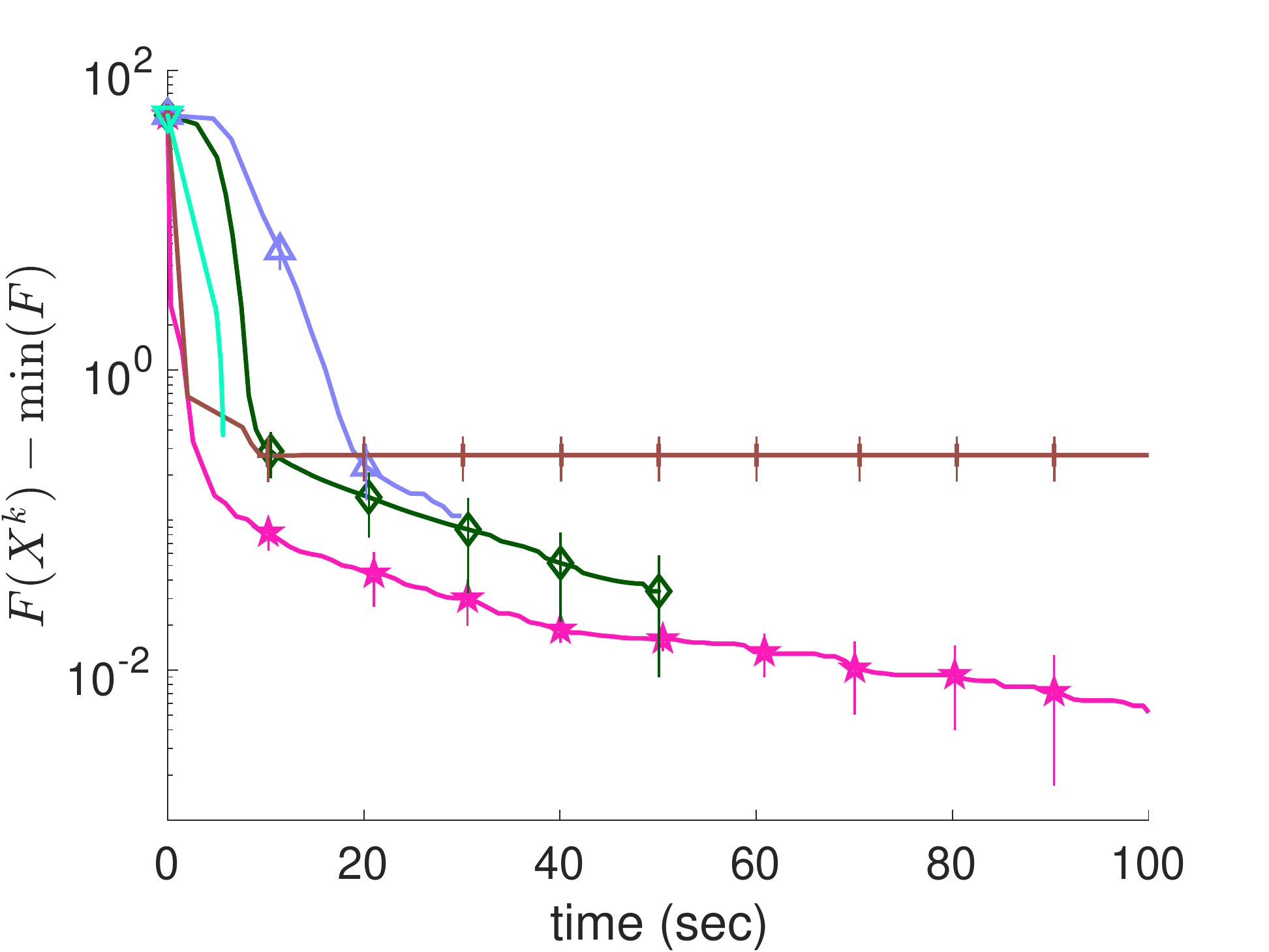}
\end{subfigure}
\hspace{5pt}
\begin{subfigure}{.33\textwidth}
 \centering
\includegraphics[scale=0.32]{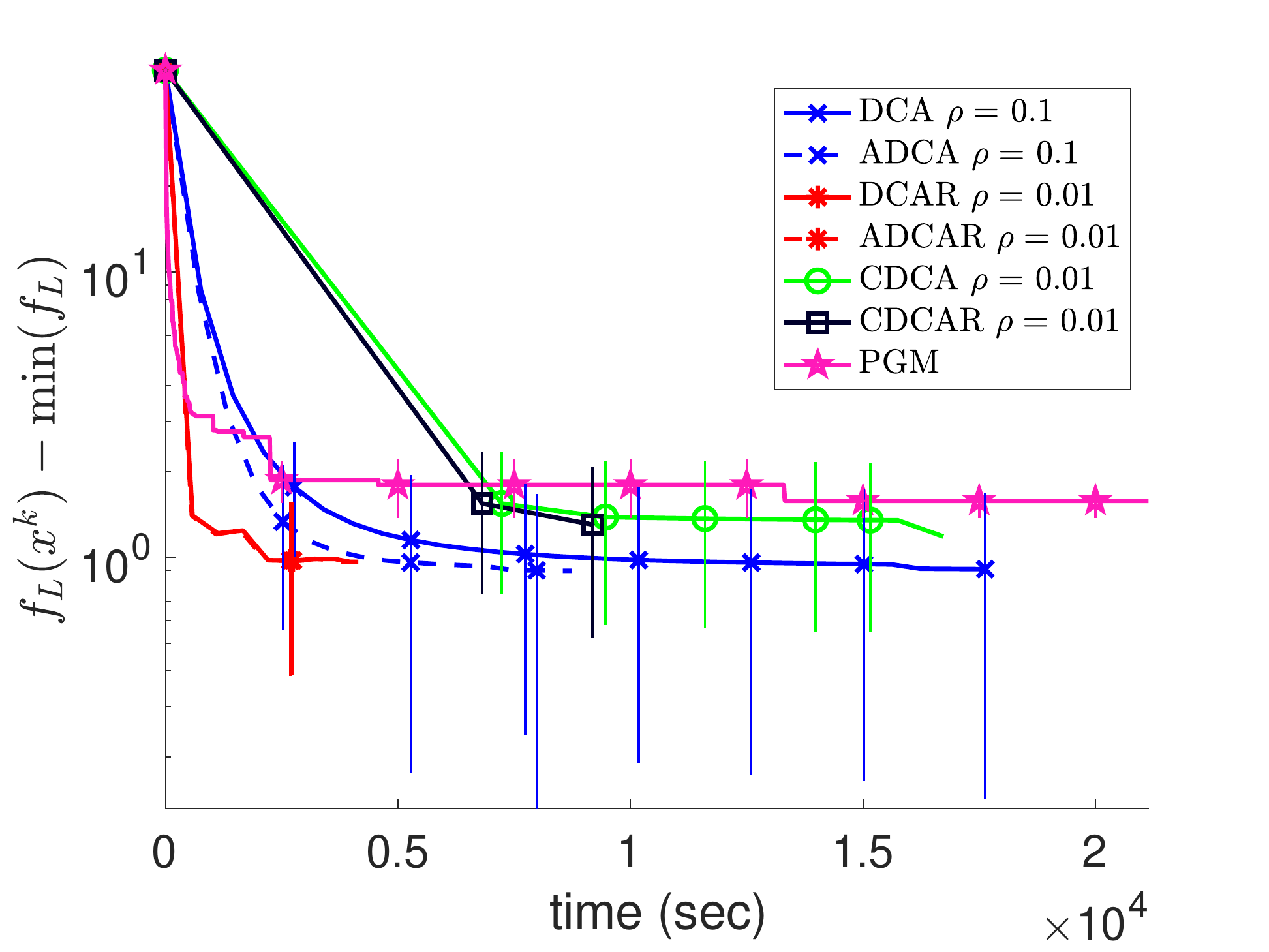}
\end{subfigure}\\
\begin{subfigure}{.33\textwidth}
 \centering
\includegraphics[scale=0.32]{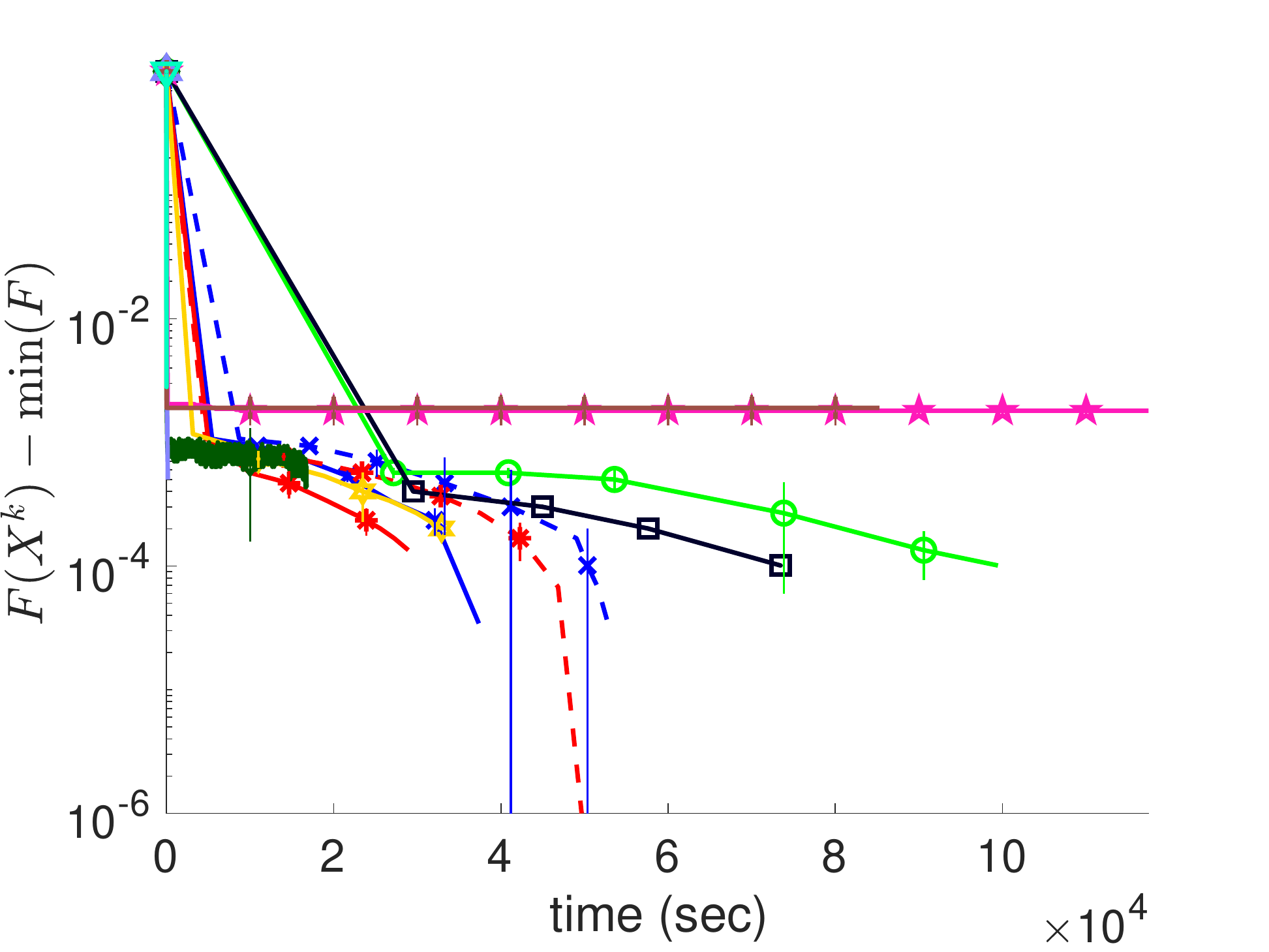}
\end{subfigure}
\begin{subfigure}{.32\textwidth}
 \centering
\includegraphics[scale=0.3]{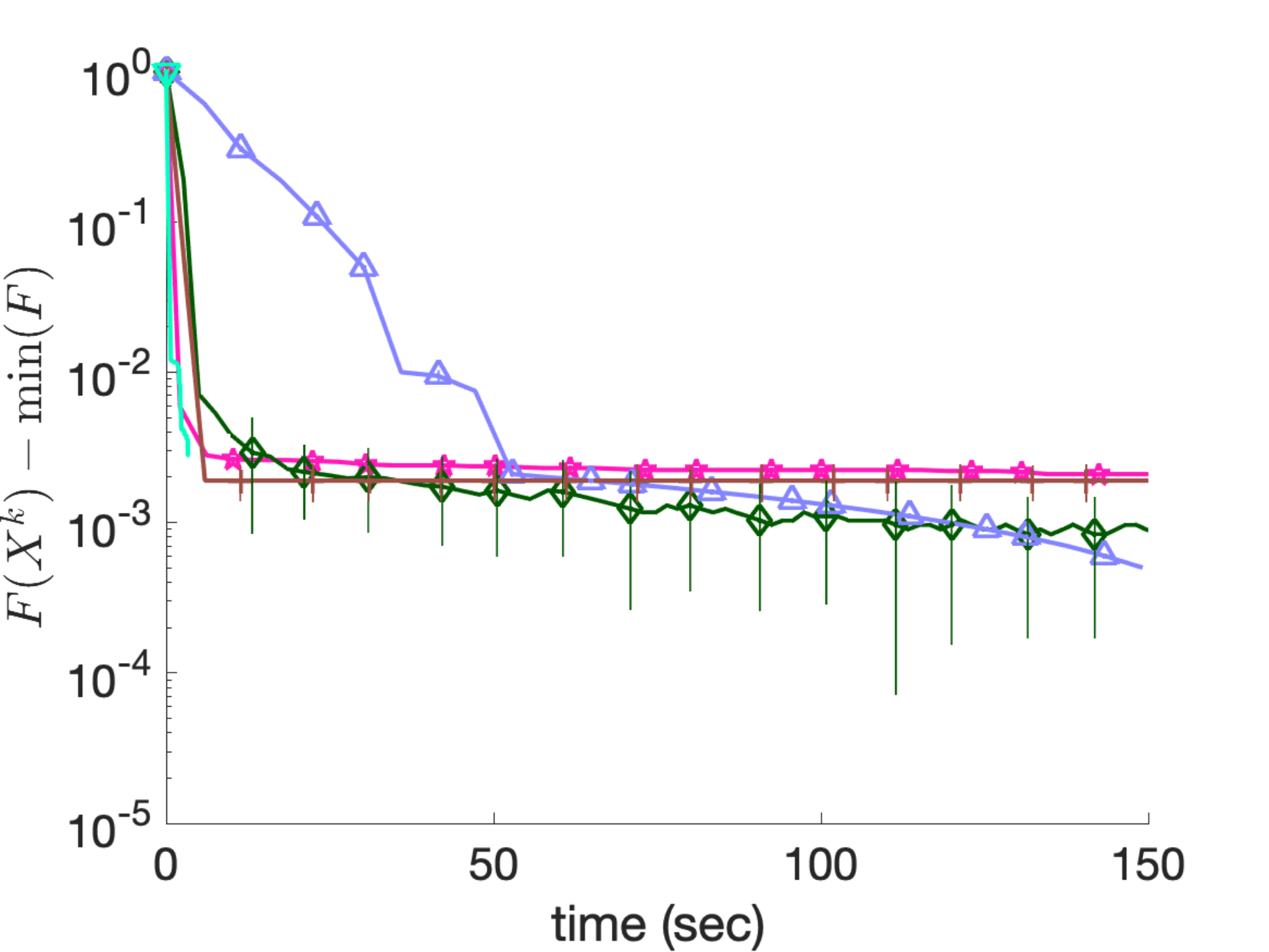}
\end{subfigure}
\hspace{5pt}
\begin{subfigure}{.33\textwidth}
 \centering
\includegraphics[scale=0.32]{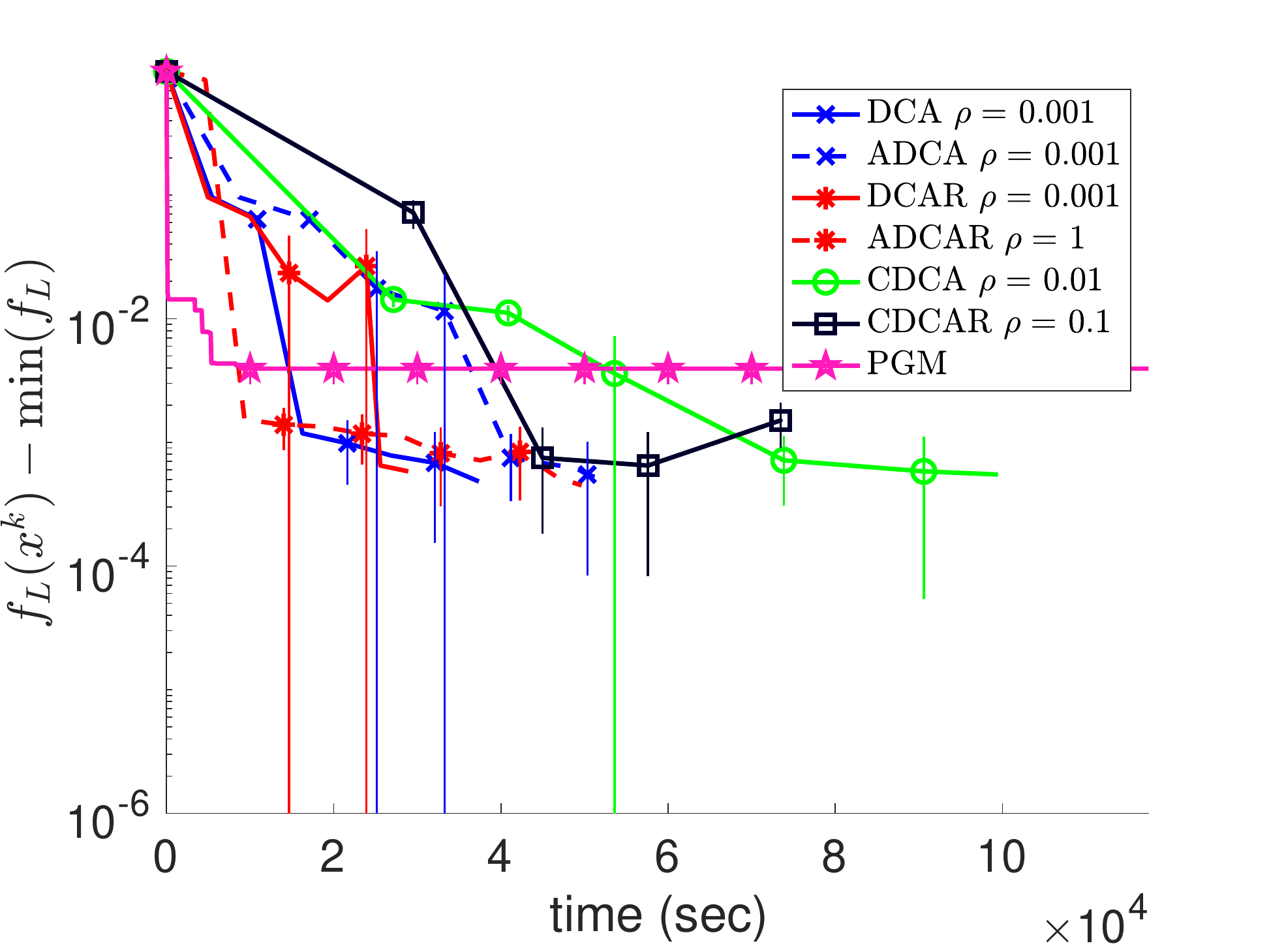}
\end{subfigure}\\
\caption{\label{fig:obj-time-bestrho} Discrete and continuous objective values (log-scale) vs time on speech (top) and mushroom (bottom) datasets. We include separate plots for non-DCA variants for visibility.}
\end{figure}

We report in \cref{fig:obj-time-bestrho} the discrete and continuous objective values with respect to time. We again only include our methods with the $\rho$ value achieving the smallest discrete objective. As expected, DCA variants (including SubSup) have a significantly higher computational cost compared to other baselines.  

\looseness=-1 Recall that SubSup is a special case of DCA with $\rho=0$ and $x^k$ chosen to be integral (see \cref{sec:SubSupDCA} and the computational complexity discussion in \cref{sec:DCA-complexity}), so theoretically the cost of SubSup is the same as DCA  with $\rho=0$. In our experiments, we are using the MNP algorithm for the submodular minimization in SubSup $\min_{X \subseteq V} G(X) - y^k(X) = \min_{x \in [0,1]^d} g_L(x) - \ip{x}{y^k}$, and PGM to solve Problem \eqref{eq:DCASet-x} $\min_{x \in [0,1]^d} g_L(x) - \ip{x}{y^k} + \tfrac{\rho}{2} \| x \|^2 $ in DCA (MNP cannot be used for this problem when $\rho >0$). MNP requires $O({d ~\diam(B(G - y^k))^2}/{\epsilon_x^2})$ iterations to obtain an $\epsilon_x$-solution to $\min_{X \subseteq V} G(X) - y^k(X)$ \citep[Theorems 4 and 5]{Chakrabarty2014}. We can bound $\diam(B(G - y^k)) \leq 2 \kappa$, where recall that $\kappa$ is the Lipschitz constant of $ g_L(x) - \ip{x}{y^k}$ given in \cref{prop:LEproperties}-\ref{itm:Lip}. Hence MNP requires the same number of iterations $O(d \kappa^2 /\epsilon_x^2)$ as PGD with $\rho=0$, and the time per iteration of MNP is $O(d^2 + d \log d + d ~\text{EO}_G)$ \citep[Proof of Theorem 1]{Chakrabarty2014}, which is larger than PGD $O(d \log d + d ~\text{EO}_G)$ (see the computational complexity discussion in \cref{sec:DCA-complexity}).
Nevertheless, in our experiments, we observe that SubSup actually has a lower running time per iteration than DCA on the speech dataset; this is true even for DCA with $\rho=0$ (see \cref{fig:SubSupDCA}), but similar on the mushroom dataset.

\subsection{SubSup vs DCA and DCAR with $\rho=0$}\label{sec:SubSupDCAexp}

\begin{figure*}
\centering
\begin{subfigure}{.37\textwidth}
 \centering
\includegraphics[scale=0.34]{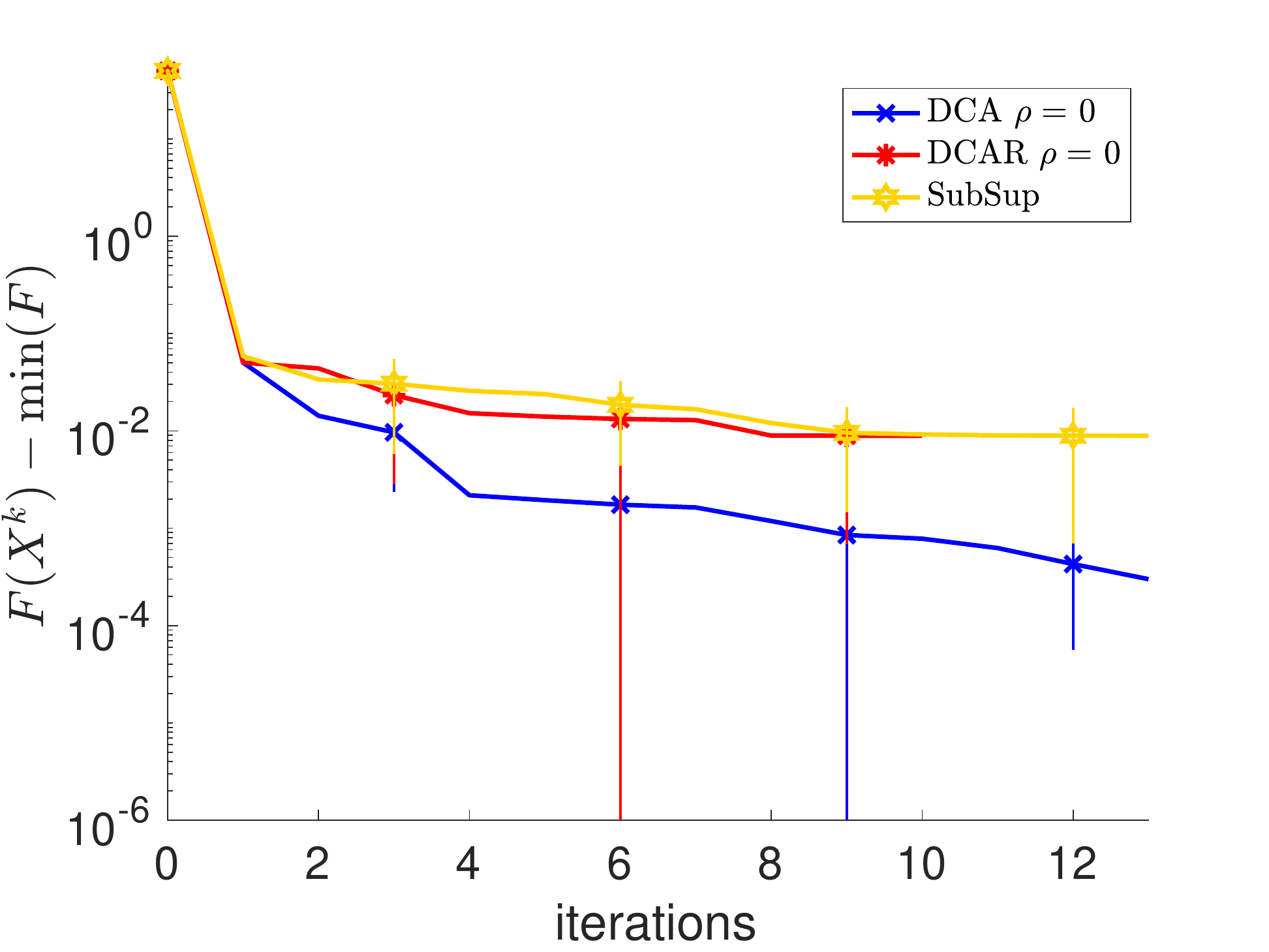}
\end{subfigure}\hspace{15pt}
\begin{subfigure}{.37\textwidth}
 \centering
\includegraphics[scale=0.34]{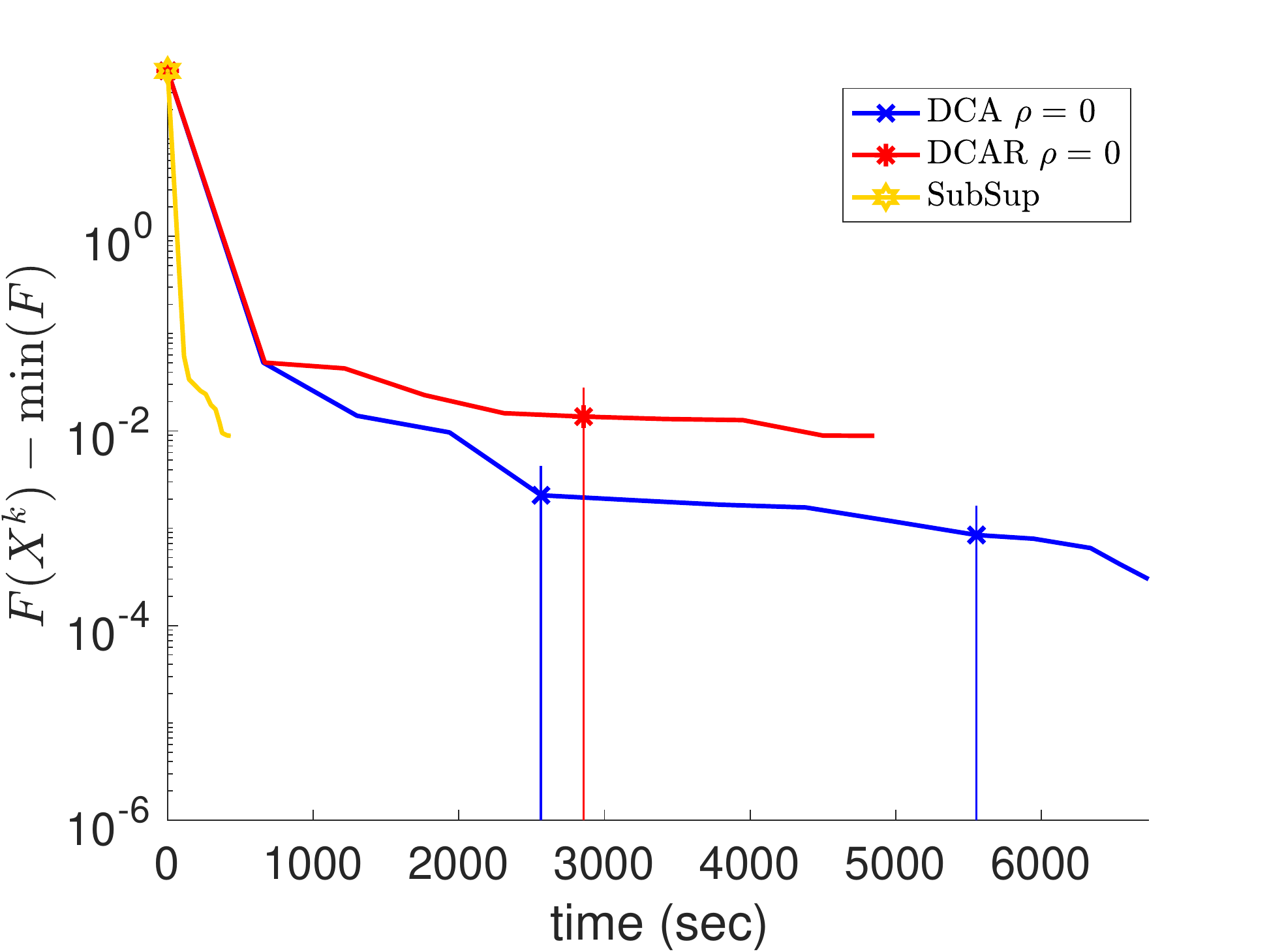}
\end{subfigure}\\ 
\begin{subfigure}{.37\textwidth}
 \centering
\includegraphics[scale=0.34]{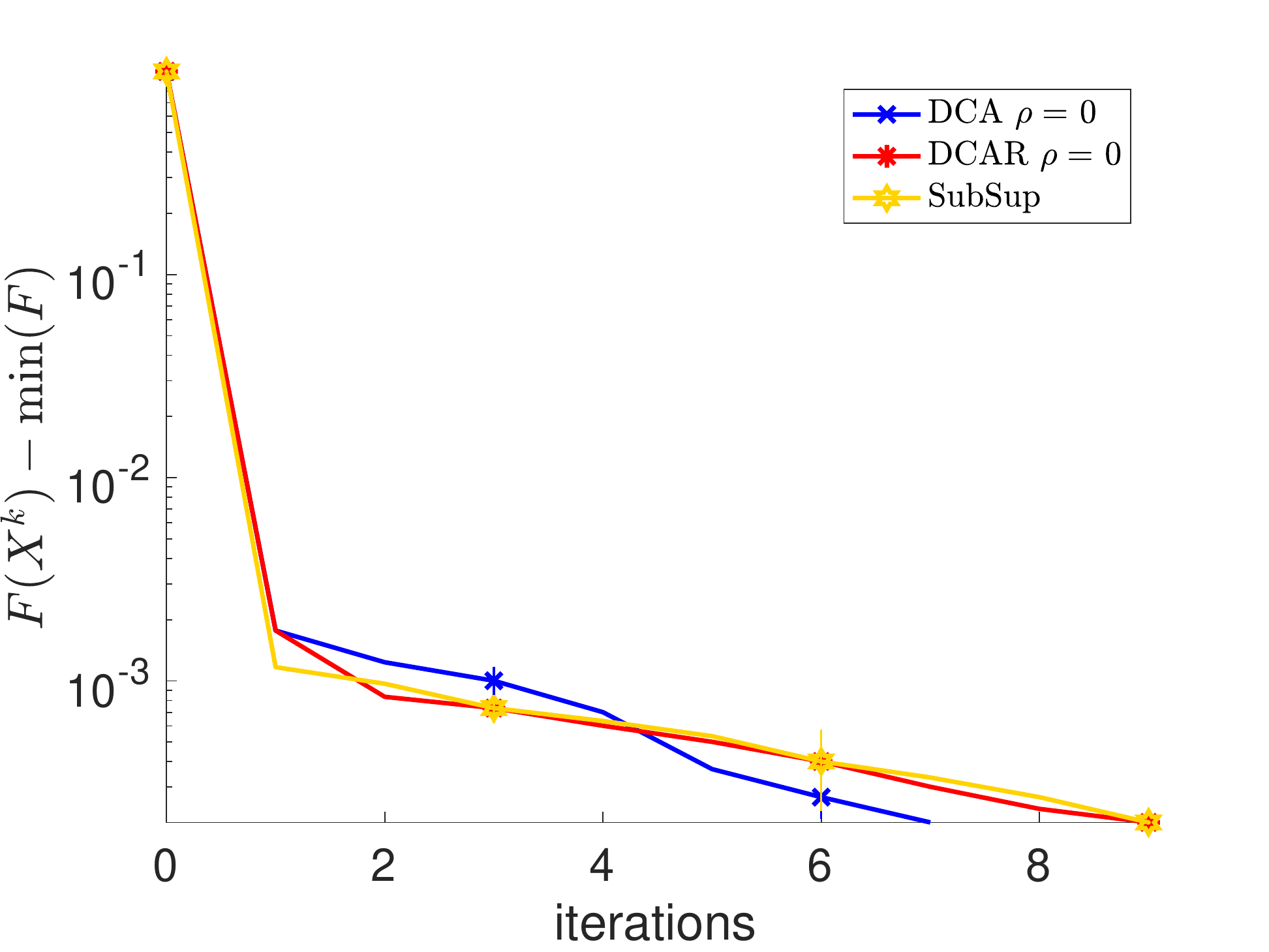}
\end{subfigure}\hspace{15pt}
\begin{subfigure}{.37\textwidth}
 \centering
\includegraphics[scale=0.34]{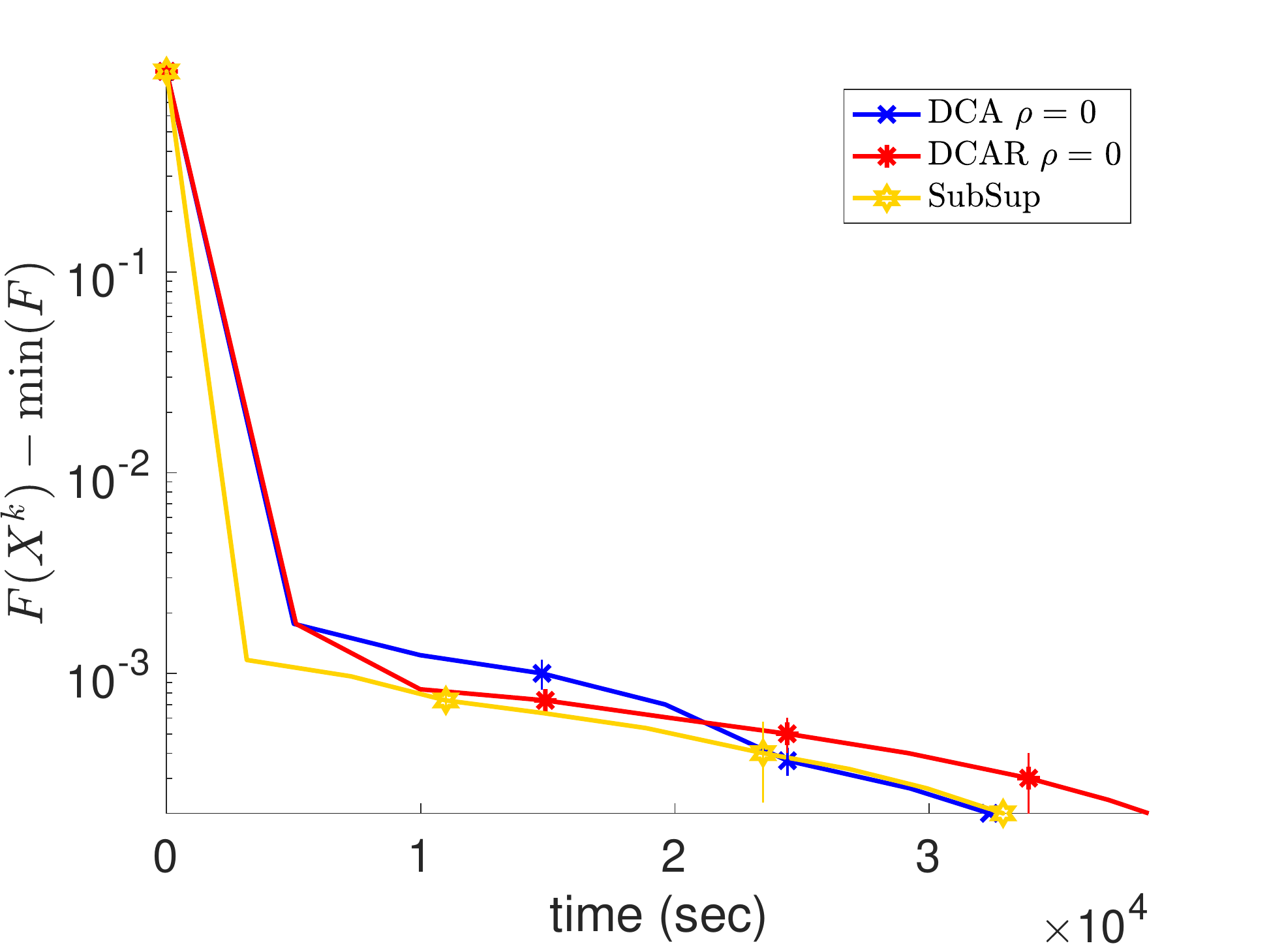}
\end{subfigure}
\caption{\label{fig:SubSupDCA} Discrete objective values (log-scale) of three DCA variants with $\rho=0$, vs iterations (left) and time (right), on speech (top) and mushroom (bottom) datasets.}
\end{figure*}

\looseness=-1 In this section, we compare the performance of SubSup with DCA and DCAR with $\rho=0$. We plot the discrete objective values of these three algorithms with respect to both iterations and time in \cref{fig:SubSupDCA}. We observe that SubSup performs similarly to DCAR with $\rho=0$ in terms of $F$ values, while DCA with $\rho=0$ obtains a bit better $F$ values on the speech dataset. 
Note that the only difference between SubSup and DCA with $\rho=0$ is that SubSup is choosing an integral solution in Problem \eqref{eq:DCASet-x}, using the MNP algorithm, while DCA chooses a possibly non-integral solution using the PGM algorithm. 
Hence, it seems that 
there is some advantage to not restricting the $x^k$ iterates of DCA to be integral in some cases. In terms of running time, SubSup has a lower iteration time than the other two algorithms on the speech dataset, and a similar one on the mushroom dataset (see discussion in \cref{sec:runtimes}).

\red{
\section{Proofs of \cref{sec:prelim}} \label{sec:prelim-proofs}}

%

\red{
\subsection{Proof of \cref{prop:LocalOptCondition}} \label{sec:LocalOpt-proof}
\begin{proposition}
Given $g, h \in \Gamma_0$ and $\epsilon \geq 0$, we have:
 \vspace{-0.15in}
\begin{enumerate}[label=\alph*), ref=\alph*]  %
\item \label{itm:general} 
Let $\hat{x}, x$ be two points satisfying $\partial_{\epsilon_1} g(\hat{x}) \cap \partial_{\epsilon_2} h(x) \not = \emptyset$, for some $\epsilon_1, \epsilon_2 \geq 0$ such that $\epsilon_1+ \epsilon_2 = \epsilon$, then $g(\hat{x})-h(\hat{x})\leq g(x)-h(x) + \epsilon$. 
Moreover,  if $\hat{x}$ admits a neighbourhood $U$ such that $\partial_{\epsilon_1} g(\hat{x}) \cap \partial_{\epsilon_2} h(x) \not = \emptyset$ for all $x \in U \cap \dom g$, then $\hat{x}$ is an $\epsilon$-local minimum of $g-h$. Conversely, if $\hat{x}$ is an $\epsilon$-local minimum of $g-h$, then it is also an $\epsilon$-strong critical point of $g-h$.
\item \label{itm:polyhedral}If $h$ is locally polyhedral convex, then this becomes a necessary and sufficient condition, i.e., $\hat{x}$ is an $\epsilon$-local minimum of $g-h$ if and only if it is an $\epsilon$-strong critical point of $g-h$.
\end{enumerate}
\end{proposition}}
\red{
\begin{proof}
\begin{enumerate}[label=\alph*), ref=\alph*] 
\item 
This is an extension of \citep[Theorem 4]{le1997solving}. 
Given $y \in  \partial_{\epsilon_1} g(\hat{x}) \cap \partial_{\epsilon_2} h(x)$, we have $g(\hat{x}) + \ip{y}{x - \hat{x}} - \epsilon_1 \leq g(x)$ and $h(x) + \ip{y}{\hat{x} - x} - \epsilon_2 \leq h(\hat{x})$. Hence, $ g(\hat{x}) - h(\hat{x}) \leq g(x) - h(x) + \epsilon_1 + \epsilon_2$. If $\hat{x}$ admits a neighbourhood where this is true, then $\hat{x}$ is an $\epsilon$-local minimum of $g-h$.
The converse direction extends a well known property; see e.g., \citep[Proposition 3.1]{HiriartUrruty1989}. 
Given an $\epsilon$-local minimum $\hat{x}$ of $g-h$, there exists a neighborhood $U$ of $\hat{x}$ such that $g(\hat{x}) - h(\hat{x}) \leq g(x) - h(x) + \epsilon$ for all $x \in U$. Then for any $\hat{y} \in \partial h(\hat{x})$, $g(x) - g(\hat{x}) \geq h(x) - h(\hat{x}) - \epsilon \geq \ip{\hat{y}}{x - \hat{x}} - \epsilon$ for all $x \in U$. This is enough to have $\hat{y} \in \partial_\epsilon g(\hat{x})$, since for any $x \in \R^d$, we can apply the inequality to $x' = \hat{x} + \tau (x - \hat{x})$ with $\tau>0$ small enough to have $x' \in U$, then by convexity we get $\tau g(x) + (1 - \tau) g(\hat{x}) - g(\hat{x})\geq g(\hat{x} + \tau (x - \hat{x})) - g(\hat{x}) \geq \ip{\hat{y}}{\hat{x} + \tau (x - \hat{x}) - \hat{x}} - \epsilon$, which implies $g(x) - g(\hat{x}) \geq \ip{\hat{y}}{x - \hat{x}} - \epsilon$. Hence, $\partial h(\hat{x}) \subseteq \partial_\epsilon g(\hat{x})$. 
\item This is an extension of \citep[Corollary 2]{le1997solving}. If $h$ is locally polyhedral convex then for every $x \in \dom h$ there exists a neighborhood $U$ of $x$ such that $\p h(x') \subseteq \p h(x)$ for all $x' \in U$ \citep[Theorem 5]{le1997solving}. Given $\hat{x}$ satisfying  $\partial h(\hat{x}) \subseteq \partial_\epsilon g(\hat{x})$, we have that $\p h(x) \subseteq \partial_\epsilon g(\hat{x})$ for all $x \in U$ for some neighborhood $U$ of $x$, which implies that $\hat{x}$ is an $\epsilon$-local minimum of $g-h$ by \cref{itm:general}. The converse direction also follows from \cref{itm:general}.
\end{enumerate}
\end{proof}}
\red{
\begin{remark} \label{rmk:approxLocalMin}
 Let $\ri(S)$ denote the relative interior of a convex set $S$.
Note that any $x \in \ri(\dom g) \cap \ri(\dom h)$ (which is equal to $ \ri(\dom g)$ since we assumed $\min_x g(x) - h(x)$ is finite) is an $\epsilon$-local minimum of $g - h$ for any $\epsilon > 0$. Hence, $\epsilon$-local minimality is meaningless on $ \ri \dom g \cap \ri \dom h$ for any $\epsilon > 0$. However, in this work, we are interested in integral $\epsilon$-local minima of Problem \eqref{eq:DS-DC}, which are on the boundary of $\dom (g_L + \delta_{[0,1]^d}) = [0,1]^d$, and hence not meaningless.
\end{remark}}
\red{
\begin{proof}
Since $g, h$ are convex, they are continuous on $\ri(\dom g)  \cap  \ri(\dom h)$ \citep[Theorem 10.1]{Rockafellar1970}. This implies that $f$ is also continuous on $\ri(\dom g) \cap \ri(\dom h)$, hence for any $x \in \ri(\dom g) \cap \ri(\dom h)$, any $\epsilon > 0$, there exists $\delta > 0$, such that for all $x' \in \ri(\dom g) \cap \ri(\dom h)$ satisfying $\| x' - x\| < \delta$, we have $|f(x') - f(x)| < \epsilon$, hence $f(x) < f(x') + \epsilon$.
\end{proof}
}

\section{Proofs of \cref{sec:DCA}} \label{sec:DCA-proofs}

\subsection{Proof of \cref{thm:convergence}} \label{sec:DCAconv-proof}
Before proving \cref{thm:convergence}, we need the following lemma.

\begin{lemma}[Lemma 5 in \cite{pham2022}] \label{lem:eps-subdiff-strcvx}
Let $\Phi$ be a $\rho$-strongly convex function with $\rho \geq 0$, then for any  $\epsilon \geq 0$, $t \in (0,1], x \in \dom \Phi$ and $y \in \p_\epsilon \Phi(x)$, we have \[ \Phi(z) \geq \Phi(x) + \ip{y}{z - x} + \frac{\rho (1- t)}{2} \| z - x \|^2 - \frac{\epsilon}{t}, \quad \forall z  \in \R^d. \]
\end{lemma}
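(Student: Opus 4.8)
The plan is to combine two ingredients: the defining inequality of the $\epsilon$-subdifferential, applied not at $z$ itself but at a convex combination of $x$ and $z$, together with the strong-convexity midpoint inequality for $\Phi$. The convex combination is what introduces the factor $t$, and dividing by it at the end is exactly what produces the $\epsilon/t$ and $(1-t)$ factors in the target bound.

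First I would fix $z \in \R^d$. If $z \notin \dom \Phi$ then $\Phi(z) = +\infty$ and the claim is trivial, so assume $z \in \dom \Phi$, and set $w_t := (1-t)x + tz = x + t(z-x)$. Since $y \in \p_\epsilon \Phi(x)$, the $\epsilon$-subgradient inequality evaluated at $w_t$ gives
\[
\Phi(w_t) \geq \Phi(x) + \ip{y}{w_t - x} - \epsilon = \Phi(x) + t\,\ip{y}{z-x} - \epsilon.
\]

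Next I would bound $\Phi(w_t)$ from above using $\rho$-strong convexity. Writing $\Phi = \psi + \tfrac{\rho}{2}\norm{\cdot}^2$ with $\psi$ convex, Jensen's inequality for $\psi$ combined with the algebraic identity $\norm{(1-t)x + tz}^2 = (1-t)\norm{x}^2 + t\norm{z}^2 - t(1-t)\norm{z-x}^2$ yields the standard midpoint estimate
\[
\Phi(w_t) \leq (1-t)\Phi(x) + t\,\Phi(z) - \frac{\rho}{2}\, t(1-t)\norm{z-x}^2.
\]

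Finally I would chain the two displays, cancel the common $\Phi(x)$ terms, and divide through by $t > 0$ — permissible precisely because $t \in (0,1]$ — which rearranges to
\[
\Phi(z) \geq \Phi(x) + \ip{y}{z-x} + \frac{\rho(1-t)}{2}\norm{z-x}^2 - \frac{\epsilon}{t},
\]
as claimed. I expect no substantial obstacle: the only step deserving any care is the upper bound on $\Phi(w_t)$, which is where both the convexity of $\Phi - \tfrac{\rho}{2}\norm{\cdot}^2$ and the quadratic identity enter, and everything else is bookkeeping. It is worth noting that the case $\rho = 0$ is covered uniformly (the strong-convexity correction term simply vanishes), and that $t = 1$ recovers the plain $\epsilon$-subgradient inequality, so the lemma interpolates between the two.
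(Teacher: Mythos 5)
Your proof is correct. Note that the paper itself gives no proof of this lemma---it is imported verbatim as Lemma~5 of \cite{pham2022}---so there is no in-paper argument to compare against; your derivation (the $\epsilon$-subgradient inequality evaluated at $w_t=(1-t)x+tz$, the decomposition $\Phi=\psi+\tfrac{\rho}{2}\|\cdot\|^2$ with the quadratic interpolation identity, then dividing by $t$) is the standard route to this bound and matches the paper's definition of $\rho$-strong convexity exactly, so it serves as a valid self-contained substitute for the citation.
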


We now present a more detailed version of \cref{thm:convergence} and its proof.

\begin{theorem}\label{thm:convergence-app} Given any $f=g-h$, where $g,h\in \Gamma_0$, 
let $\{x^k\}$ and $\{y^k\}$ be generated by approximate DCA (\Cref{alg:DCA}), 
and define $T_\Phi(x^{k+1}) = \Phi(x^k) - \Phi(x^{k+1}) -  \langle y^k, x^k - x^{k+1} \rangle$ for any $\Phi \in \Gamma_0$, 
Then for all $t_x, t_y \in (0,1], k\in\bN$, let $\bar{\rho} = \rho(g)(1-t_x)+\rho(h)(1-t_y)$ and $\bar{\epsilon} =  \tfrac{\epsilon_x}{t_x} + \tfrac{\epsilon_y}{t_y}$, we have:
\begin{enumerate}[label=\alph*),ref=\alph*] 

\item \label{itm:criticality-app} 
$T_g(x^{k+1}) \geq \tfrac{\rho(g) (1-t_x)}{2} \| x^k - x^{k+1} \|^2 - \tfrac{\epsilon_x}{t_x} \text{ and } T_h(x^{k+1}) \leq -\tfrac{\rho(h)(1-t_y)}{2} \| x^k - x^{k+1} \|^2 + \tfrac{\epsilon_y}{t_y}.$  

Moreover, for any $\epsilon \geq 0$, if $T_g(x^{k+1}) \leq \epsilon$, then $x^k$ is an $(\epsilon+ \epsilon_x, \epsilon_y)$-critical point of $g - h$, with $y^k \in \partial_{\epsilon + \epsilon_x} g(x^k) \cap \p_{\epsilon_y} h(x^k)$, and
$\tfrac{\rho(g)(1-t_x)}{2} \| x^k - x^{k+1} \|^2 \leq \tfrac{\epsilon_x}{t_x} + \epsilon $.
Conversely, if $x^k \in \partial_{\epsilon + \epsilon_x} g^*(y^k)$, then  $T_g(x^{k+1}) \leq \epsilon_x + \epsilon$.

Similarly,  
if $T_h(x^{k+1}) \geq -\epsilon$, then $x^{k+1}$ is an $(\epsilon_x, \epsilon + \epsilon_y)$-critical point of $g - h$, with  $y^k \in \p_{\epsilon_x} g(x^{k+1}) \cap \p_{\epsilon+\epsilon_y} h(x^{k+1})$, and $\tfrac{\rho(h)(1-t_y)}{2} \| x^k - x^{k+1} \|^2 \leq \tfrac{\epsilon_y}{t_y} + \epsilon $.
Conversely, if $y^k \in \partial_{\epsilon + \epsilon_y} h(x^{k+1})$, then $T_h(x^{k+1}) \geq - \epsilon_y - \epsilon$.
    
\item \label{itm:descent-app} $\begin{aligned} f(x^k) - f(x^{k+1})  =  T_g(x^{k+1}) - T_h(x^{k+1}) \geq \frac{\bar{\rho}}{2}\|x^k-x^{k+1}\|^2 - \bar{\epsilon}.\end{aligned}$ 
    
\item \label{itm:convergence-app} For any $\epsilon \geq 0$, $f(x^k) - f(x^{k+1}) \leq \epsilon$ if and only if $T_g(x^{k+1}) - T_h(x^{k+1}) \leq \epsilon$. In this case,  
$x^k$ is an $(\epsilon_1 + \epsilon_x , \epsilon_y)$-critical point of $g - h$, with $y^k \in \partial_{\epsilon_1 + \epsilon_x} g(x^k) \cap \p_{\epsilon_y} h(x^k)$, $x^{k+1}$ is an $(\epsilon_x, \epsilon_2 + \epsilon_y)$-critical point of $g - h$, with  $y^k \in \p_{\epsilon_x} g(x^{k+1}) \cap \p_{\epsilon_2+\epsilon_y} h(x^{k+1})$, for some $\epsilon_1 + \epsilon_2 = \epsilon, \epsilon_1 \geq - \epsilon_x, \epsilon_2 \geq  -\epsilon_y$, and $\frac{\bar{\rho}}{2}\|x^k-x^{k+1}\|^2 \leq \bar{\epsilon} + \epsilon$. 
Conversely, if $x^k \in\p_{\epsilon_x + \epsilon_1} g^*(y^k)$ and $y^k\in \p_{\epsilon_y + \epsilon_2} h(x^{k+1})$, then
$T_g(x^{k+1}) - T_h(x^{k+1}) \leq \epsilon_x + \epsilon_y + \epsilon$ and $f(x^k) - f(x^{k+1})  \leq \epsilon_x + \epsilon_y + \epsilon$.

\item \label{itm:obj-rate-app} $ \begin{aligned} \min_{k \in \{0,1,\dots,K-1\}} T_g(x^{k+1}) - T_h(x^{k+1}) =  \min_{k \in \{0,1,\dots,K-1\}} f(x^k)-f(x^{k+1}) \leq \frac{f(x^0) - f^\star}{K}. \end{aligned}$

\item \label{itm:iterates-rate-app} If $\rho(g) + \rho(h) >0$, then
    \[
    \min_{k \in \{0,1,\dots,K-1\}}\|x^k-x^{k+1}\|\leq \sqrt{\frac{ 2}{\bar{\rho}} \big( \frac{f(x^0) - f^\star}{K} + \bar{\epsilon} \big). } 
    \]




\end{enumerate}
\end{theorem}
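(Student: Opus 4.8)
The plan is to reduce everything to two ``master'' inequalities for the quantities $T_g(x^{k+1})$ and $T_h(x^{k+1})$ coming from \cref{lem:eps-subdiff-strcvx}, and then to shuttle $\epsilon$-subgradients between $x^k$ and $x^{k+1}$ using \cref{prop:FenchelDualPairs}. First I would record the two facts driving the algorithm: the update $x^{k+1} \in \p_{\epsilon_x} g^*(y^k)$ is equivalent to $y^k \in \p_{\epsilon_x} g(x^{k+1})$ by \cref{prop:FenchelDualPairs}, while $y^k \in \p_{\epsilon_y} h(x^k)$ is the other update. Applying \cref{lem:eps-subdiff-strcvx} to the $\rho(g)$-strongly convex $g$ at $x^{k+1}$ with subgradient $y^k$, evaluation point $z=x^k$ and parameter $t_x$, then rearranging, yields exactly $T_g(x^{k+1}) \geq \tfrac{\rho(g)(1-t_x)}{2}\norm{x^k-x^{k+1}}^2 - \tfrac{\epsilon_x}{t_x}$; applying it to the $\rho(h)$-strongly convex $h$ at $x^k$ with the same subgradient, point $z=x^{k+1}$ and parameter $t_y$, and rearranging with a sign flip, yields the matching upper bound on $T_h(x^{k+1})$. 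These are the first two displayed inequalities of \cref{itm:criticality-app}.

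For the criticality transfer in \cref{itm:criticality-app}, the engine is the additive decomposition of the Fenchel--Young gap. The key identity I would use is
\[ g^*(y^k) + g(x^k) - \ip{y^k}{x^k} = \big[g^*(y^k) + g(x^{k+1}) - \ip{y^k}{x^{k+1}}\big] + T_g(x^{k+1}), \]
whose first bracket is $\leq \epsilon_x$ because $y^k \in \p_{\epsilon_x} g(x^{k+1})$ (\cref{prop:FenchelDualPairs}). Hence $T_g(x^{k+1}) \leq \epsilon$ forces the left-hand gap to be $\leq \epsilon + \epsilon_x$, i.e.\ $y^k \in \p_{\epsilon+\epsilon_x} g(x^k)$; combined with $y^k \in \p_{\epsilon_y} h(x^k)$ this is the asserted $(\epsilon+\epsilon_x,\epsilon_y)$-criticality of $x^k$, and the $\norm{x^k-x^{k+1}}^2$ bound drops straight out of the master inequality. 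The symmetric identity for $h$ with the roles of $x^k,x^{k+1}$ swapped gives the statement for $x^{k+1}$. Both converses are immediate from the same identities: since each Fenchel--Young gap is nonnegative, bounding one side of the identity bounds $T_g$ (resp.\ $-T_h$) from above.

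Parts \cref{itm:descent-app,itm:convergence-app} rest on the algebraic identity $f(x^k)-f(x^{k+1}) = T_g(x^{k+1}) - T_h(x^{k+1})$, obtained by cancelling the $\ip{y^k}{x^k-x^{k+1}}$ terms; adding the two master inequalities then gives the descent bound with $\bar\rho,\bar\epsilon$, and for \cref{itm:convergence-app} the ``iff'' is just this identity. The one genuinely fiddly step is splitting $\epsilon$: given $T_g - T_h \leq \epsilon$, I would set $\epsilon_1 := T_g(x^{k+1})$ and $\epsilon_2 := \epsilon - \epsilon_1$, then invoke the two criticality statements of \cref{itm:criticality-app} with $\epsilon_1,\epsilon_2$ in place of $\epsilon$ (the hypotheses $T_g(x^{k+1}) \leq \epsilon_1$ and $T_h(x^{k+1}) \geq -\epsilon_2$ hold by construction and by $T_g - T_h \leq \epsilon$). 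This needs the cleaner two-sided bounds $T_g(x^{k+1}) \geq -\epsilon_x$ and $T_h(x^{k+1}) \leq \epsilon_y$, which come from the \emph{same} Fenchel--Young decomposition (nonnegativity of the gap at the other iterate) rather than from \cref{lem:eps-subdiff-strcvx}; these ensure $\epsilon_1 \geq -\epsilon_x$ and $\epsilon_2 \geq -\epsilon_y$, so the shifted tolerances $\epsilon_1+\epsilon_x$ and $\epsilon_2+\epsilon_y$ stay nonnegative. The converse of \cref{itm:convergence-app} simply adds the two converses of \cref{itm:criticality-app} with these $\epsilon_1,\epsilon_2$.

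Finally, \cref{itm:obj-rate-app} is a telescoping sum, $\sum_{k=0}^{K-1}(f(x^k)-f(x^{k+1})) = f(x^0) - f(x^K) \leq f(x^0) - f^\star$, with the minimum term bounded by the average. For \cref{itm:iterates-rate-app}, I would rearrange the descent bound of \cref{itm:descent-app} into $\norm{x^k-x^{k+1}}^2 \leq \tfrac{2}{\bar\rho}\big(f(x^k)-f(x^{k+1}) + \bar\epsilon\big)$, take the minimum over $k$ (monotone on both sides), insert the rate from \cref{itm:obj-rate-app}, and take square roots; here $\bar\rho>0$ is guaranteed by $\rho(g)+\rho(h)>0$ together with a choice $t_x,t_y<1$. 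I expect the main obstacle to be purely organizational: keeping the several tolerance bookkeepings ($\epsilon,\epsilon_x,\epsilon_y$ and the split $\epsilon_1,\epsilon_2$) consistent across the forward and converse directions, rather than any single hard estimate.
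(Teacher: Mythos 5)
Your proposal is correct and takes essentially the same route as the paper's proof: \cref{prop:FenchelDualPairs} to dualize the $g^*$-step into $y^k \in \p_{\epsilon_x} g(x^{k+1})$, \cref{lem:eps-subdiff-strcvx} for the strong-convexity lower/upper bounds on $T_g$ and $T_h$, the cancellation identity $f(x^k)-f(x^{k+1})=T_g(x^{k+1})-T_h(x^{k+1})$ for parts \ref{itm:descent-app} and \ref{itm:convergence-app}, the same $\epsilon_1,\epsilon_2$ split (with the same lower bounds $\epsilon_1\geq-\epsilon_x$, $\epsilon_2\geq-\epsilon_y$), telescoping for \ref{itm:obj-rate-app}, and combining \ref{itm:descent-app} with \ref{itm:obj-rate-app} for \ref{itm:iterates-rate-app}. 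The only difference is presentational: you carry out the criticality transfer and the converses via the exact additive Fenchel--Young gap identity, whereas the paper rearranges the approximate subgradient inequalities (\cref{lem:eps-subdiff-strcvx} with $t_x=1$ or $t_y=1$); by \cref{prop:FenchelDualPairs} these are the same computation.
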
 
\begin{proof}
\begin{enumerate}[label=\alph*)]
\item  
Since $x^{k+1} \in \partial_{\epsilon_x} g^*(y^k)$, then $y^k \in \partial_{\epsilon_x} g(x^{k+1})$ by \cref{prop:FenchelDualPairs}. By \cref{lem:eps-subdiff-strcvx} we have for all $x \in \R^d$ 
\begin{equation}\label{eq:Tg-lb}
g(x) \geq g(x^{k+1}) + \ip{y^k}{x - x^{k+1}} + \tfrac{\rho(g) (1 - t_x)}{2} \| x - x^{k+1} \|^2 - \tfrac{\epsilon_x}{t_x},
\end{equation}
hence $T_g(x^{k+1}) \geq \tfrac{\rho(g)(1-t_x)}{2} \| x^k - x^{k+1} \|^2 - \tfrac{\epsilon_x}{t_x}$. If $T_g(x^{k+1}) \leq \epsilon$, taking $t_x=1$ in \eqref{eq:Tg-lb}, we have for all $x \in \R^d$
\[ g(x) \geq g(x^{k}) -  \langle y^k, x^k - x^{k+1} \rangle  - \epsilon + \ip{y^k}{x - x^{k+1}}  - \epsilon_x \green{=}  g(x^{k}) +  \langle y^k, x - x^{k} \rangle - \epsilon - \epsilon_x, \]
so $y^k \in \partial_{\epsilon + \epsilon_x} g(x^k) \cap \partial_{\epsilon_y} h(x^k)$ and $x^k$ is an $(\epsilon + \epsilon_x, \epsilon_y)$-critical point.
Similarly, since $\green{y^k} \in \p_{\epsilon_y} h(x^k)$, we have for all $x \in \R^d$ 
\begin{equation}\label{eq:Th-ub}
h(x) \geq h(x^k) + \ip{y^k}{x - x^k} + \tfrac{\rho(h) (1 - t_y)}{2} \| x - x^k \|^2 - \tfrac{\epsilon_y}{t_y},
\end{equation}
hence $T_h(x^{k+1}) \leq -\tfrac{\green{\rho(h)} (1 - t_y)}{2} \| x^k - x^{k+1} \|^2 + \tfrac{\epsilon_y}{t_y}$. If $T_h(x^{k+1}) \geq -\epsilon$, taking $t_y=1$ in \eqref{eq:Th-ub}, we have for all $x \in \R^d$
\[ h(x) \geq h(x^{k+1}) + \langle y^k, x^k - x^{k+1} \rangle  - \epsilon + \ip{y^k}{x - x^{k}} -\epsilon_y \green{=} h(x^{k+1}) +  \langle y^k, x - x^{k+1} \rangle - \epsilon -\epsilon_y, \]
so $y^k \in \p_{\epsilon_x} g(x^{k+1}) \cap \p_{\epsilon+\epsilon_y} h(x^{k+1})$  and $x^{k+1}$ is an $(\epsilon_x, \epsilon + \epsilon_y)$-critical point. The converse directions follow directly from the definitions of approximate subdifferentials and $T_\Phi$.

\item  We have
\begin{align*}\label{eq:descent}
f(x^k) - f(x^{k+1})  &= g(x^k)  - g(x^{k+1}) + \ip{y^k}{x^k - x^{k+1}}  
- (h(x^k) - h(x^{k+1}) + \ip{y^k}{x^k - x^{k+1}}) \nonumber \\
&=  T_g(x^{k+1}) - T_h(x^{k+1}) \nonumber \\
& \geq \tfrac{\bar{\rho}}{2} \| x^k - x^{k+1} \|^2 - \bar{\epsilon},
\end{align*}
where the inequality follows from \cref{itm:criticality-app}.
\item 
By \cref{itm:descent-app}, we directly get that $f(x^k) - f(x^{k+1}) \leq \epsilon$ if and only if $T_g(x^{k+1}) - T_h(x^{k+1}) \leq \epsilon$, and that $\frac{\bar{\rho}}{2} \|x^k - x^{k+1}\|^2 \leq \bar{\epsilon} + \epsilon$ in this case.
The rest of the claim follows from \cref{itm:criticality-app}. \\
In particular, if $T_g(x^{k+1}) - T_h(x^{k+1}) \leq \epsilon$, we can find some $\epsilon_1 \geq T_g(x^{k+1})$, $\epsilon_2 \geq - T_h(x^{k+1})$ with $\epsilon_1 + \epsilon_2 = \epsilon$ (in the worst case, we might choose $\epsilon_1 = T_g(x^{k+1})$, $\epsilon_2 = - T_h(x^{k+1})$). 
By \cref{itm:criticality-app}, since $T_g(x^{k+1}) \leq \epsilon_1$, we have that $x^{k+1}$ is an $(\epsilon_1 + \epsilon_x, \epsilon_y)$-critical point with $y^k \in \partial_{\epsilon_1 + \epsilon_x}g(x^k) \cap \partial_{\epsilon_y}h(x^k)$. Similarly, since $T_h(x^{k+1}) \geq - \epsilon_2$, we have that $x^{k+1}$ is an $(\epsilon_x, \epsilon_2 + \epsilon_y)$-critical point of $g-h$ with $y^k \in \partial_{\epsilon_x}g(x^{k+1}) \cap \partial_{\epsilon_2 + \epsilon_y}h(x^{k+1})$. Moreover,
taking $t_x = t_y = 1$ in the bounds on $T_g(x^{k+1})$ and $ T_h(x^{k+1})$ in \cref{itm:criticality-app}, we observe that $\epsilon_1 \geq -\epsilon_x$, $\epsilon_2  \geq -\epsilon_y$.\\
The converse direction follows from the converse direction of  \cref{itm:criticality-app}, whereby $x^k \in \partial_{\epsilon_x + \epsilon_1} g^*(y^k)$ implies that $T_g(x^{k+1}) \leq  \epsilon_x + \epsilon_1$, and $y^k \in \partial_{\epsilon_y +\epsilon_2}h(x^{k+1})$ implies that $-T_h(x^{k+1}) \leq \epsilon_y +\epsilon_2$.

\item This follows from \cref{itm:descent-app} by telescoping sum:
\begin{align*}
\min_{k \in \{0,1,\dots,K-1\}} T_g(x^{k+1}) - T_h(x^{k+1}) &=  \min_{k \in \{0,1,\dots,K-1\}} f(x^k)-f(x^{k+1}) \\
&\leq \tfrac{1}{K} \sum_{k=0}^{K-1} T_g(x^{k+1}) - T_h(x^{k+1})\\
&= \tfrac{1}{K} \sum_{k=0}^{K-1} f(x^k)-f(x^{k+1})  \\
&= \frac{f(x^0) - f(x^K)}{K} \leq \frac{f(x^0) - f^\star}{K}.
\end{align*}

\item This follows from Items \ref{itm:descent-app} and \ref{itm:obj-rate-app}.
\[ \min_{k \in \{0,1,\dots,K-1\}}\|x^k-x^{k+1}\|^2 \leq  \min_{k \in \{0,1,\dots,K-1\}} \frac{ 2}{\bar{\rho}} (f(x^k)-f(x^{k+1}) + \bar{\epsilon}) \leq \frac{ 2}{\bar{\rho}}  \big( \frac{f(x^0) - f^\star }{K} + \bar{\epsilon}  \big).\]



\end{enumerate}
\end{proof}

Note that $f(x^k) - f(x^{k+1})$ acts as a measure of non-criticality, since $f(x^k) = f(x^{k+1})$ implies that $x^k$ and $x^{k+1}$ are critical points, when $\epsilon_x = \epsilon_y = 0$. 
\Cref{thm:convergence-app} also motivates $\min\{T_g(x^{k+1}), T_h(x^{k})\}$ as a weaker measure of non-criticality, since $\min\{T_g(x^{k+1}), T_h(x^{k})\} = 0$ implies that $x^k$ is a critical point, when $\epsilon_x = \epsilon_y = 0$. 
\Cref{itm:obj-rate-app,itm:criticality-app} imply the following bound $$\min_{k \in \{0,1,\dots,K-1\}} \min\{T_g(x^{k+1}), T_h(x^{k})\} \leq \min\{\tfrac{f(x^0) - f^\star}{K} + \epsilon_y, \tfrac{f(x^0) - f^\star}{K-1} + \epsilon_x\},$$ which recovers the convergence rate provided in \citep[Corollary 4.1]{abbaszadehpeivasti2021rate} on $T_g(x^{k+1})$, with $\epsilon_x = \epsilon_y = 0$.
The criterion $T_g(x^{k+1}) \leq \epsilon$ has also been used as a stopping criterion of FW for nonconvex problems; see  \cref{sec:FWconv-proof} and \citep[Eq. (2.6)]{Ghadimi2019}. 

\subsection{Proof of \cref{corr:LocMinDCA}} \label{sec:LocMinDCA-proof}

Before proving \cref{corr:LocMinDCA}, we need the following lemma.


\begin{lemma} \label{lem:eps-subdiff-reg}
Let $\Phi$ be a convex function with bounded domain of diameter $D$, i.e., $\|x - z \| \leq D$ for all $x, z \in \dom \Phi$, and $\tilde{\Phi} = \Phi + \frac{\rho}{2} \| \cdot \|^2$ for some $\rho \geq 0$. Then for any $x \in \dom \Phi$, if $y - \rho x \in \p_\epsilon \Phi(x)$, then $y  \in \p_\epsilon \tilde{\Phi}(x)$. Conversely, if $y  \in \p_\epsilon \tilde{\Phi}(x)$, then $y - \rho x \in \p_{\epsilon'} \Phi(x)$, where $\epsilon' = \sqrt{2 \rho \epsilon} D$ if $\epsilon \leq \tfrac{\rho D^2}{2}$, and $\tfrac{\rho D^2}{2} + \epsilon$ otherwise.
\end{lemma}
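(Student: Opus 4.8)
The plan is to reduce both directions to a single algebraic identity relating the $\epsilon$-subdifferentials of $\Phi$ and $\tilde\Phi = \Phi + \tfrac\rho2\norm{\cdot}^2$. For fixed $x \in \dom\Phi$ and $y \in \R^d$, introduce the ``slack'' $E(z) := \Phi(z) - \Phi(x) - \ip{y - \rho x}{z - x}$ of the candidate vector $y - \rho x$ for $\Phi$ at $x$. Expanding $\tfrac\rho2(\norm{z}^2 - \norm{x}^2) - \rho\ip{x}{z-x} = \tfrac\rho2\norm{z-x}^2$ yields the identity
\begin{equation*}
\tilde\Phi(z) - \tilde\Phi(x) - \ip{y}{z-x} = E(z) + \tfrac\rho2\norm{z-x}^2 \quad\text{for all } z \in \R^d.
\end{equation*}
Note $E$ is convex ($\Phi$ minus an affine function) and $E(x) = 0$. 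By definition, $y - \rho x \in \partial_\epsilon\Phi(x)$ iff $E(z) \geq -\epsilon$ for all $z$, while $y \in \partial_\epsilon\tilde\Phi(x)$ iff $E(z) + \tfrac\rho2\norm{z-x}^2 \geq -\epsilon$ for all $z$.

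\textbf{Forward direction.} If $E(z) \geq -\epsilon$ for all $z$, then since $\tfrac\rho2\norm{z-x}^2 \geq 0$ the identity immediately gives $E(z) + \tfrac\rho2\norm{z-x}^2 \geq -\epsilon$, i.e. $y \in \partial_\epsilon\tilde\Phi(x)$. This requires neither boundedness of the domain nor any case analysis.

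\textbf{Converse direction.} Now assume $E(z) \geq -\epsilon - \tfrac\rho2\norm{z-x}^2$ for all $z$; the goal is to upgrade this to a uniform lower bound $E(z) \geq -\epsilon'$ on $\dom\Phi$. Fix any $z_0 \in \dom\Phi$, set $r = \norm{z_0 - x} \leq D$ and $\delta = -E(z_0)$ (nothing to prove if $\delta \leq 0$). The key idea is to exploit convexity along the segment $[x, z_0]$: the scalar map $\psi(t) = E\big(x + t(z_0 - x)\big)$ is convex on $[0,1]$ with $\psi(0) = 0$ and $\psi(1) = -\delta$, so $\psi(t) \leq -t\delta$, whereas the hypothesis gives $\psi(t) \geq -\epsilon - \tfrac\rho2 t^2 r^2$. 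Combining the two bounds yields $t\delta \leq \epsilon + \tfrac\rho2 t^2 r^2$, hence $\delta \leq \tfrac\epsilon t + \tfrac{\rho r^2}2 t$ for every $t \in (0,1]$.

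\textbf{Finishing step and main obstacle.} It remains to minimize $\tfrac\epsilon t + \tfrac{\rho r^2}2 t$ over $t \in (0,1]$ and to track the case split producing the stated $\epsilon'$. The unconstrained minimizer over $t>0$ is $t^\star = \sqrt{2\epsilon/(\rho r^2)}$ with value $\sqrt{2\rho\epsilon}\,r$. If $t^\star \leq 1$ we get $\delta \leq \sqrt{2\rho\epsilon}\,r \leq \sqrt{2\rho\epsilon}\,D$; if $t^\star > 1$ the expression is decreasing on $(0,1]$, so $t=1$ is optimal and $\delta \leq \epsilon + \tfrac{\rho r^2}2$. A short computation reconciles these with the bound in terms of $D$: when $\epsilon \leq \tfrac{\rho D^2}2$, this hypothesis is exactly what is needed to verify $\epsilon + \tfrac{\rho r^2}2 \leq 2\epsilon \leq \sqrt{2\rho\epsilon}\,D$ in the residual regime $r^2 < 2\epsilon/\rho$, so $\delta \leq \sqrt{2\rho\epsilon}\,D$ throughout; when $\epsilon > \tfrac{\rho D^2}2$ one has $t^\star > 1$ for all $r \leq D$, giving $\delta \leq \epsilon + \tfrac{\rho D^2}2$. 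Since $z_0 \in \dom\Phi$ was arbitrary (and $E(z_0) = +\infty$ off $\dom\Phi$), this proves $E \geq -\epsilon'$ everywhere, i.e. $y - \rho x \in \partial_{\epsilon'}\Phi(x)$. I expect this bookkeeping between the $\sqrt{2\rho\epsilon}\,D$ and $\tfrac{\rho D^2}2 + \epsilon$ regimes to be the only delicate part: the crude bound $\epsilon' = \epsilon + \tfrac{\rho D^2}2$ is immediate by taking $t=1$, and all the extra work serves solely to sharpen it to $\sqrt{2\rho\epsilon}\,D$ for small $\epsilon$.
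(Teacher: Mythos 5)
Your proof is correct and yields exactly the stated two-regime $\epsilon'$, but it routes the converse direction differently from the paper. The forward direction is identical in both (complete the square and discard the nonnegative term $\tfrac{\rho}{2}\norm{z-x}^2$). For the converse, the paper does not argue from scratch: it invokes \cref{lem:eps-subdiff-strcvx} (an imported property of $\epsilon$-subgradients of strongly convex functions), applied to the $\rho$-strongly convex function $\tilde{\Phi}$, to get $\tilde{\Phi}(z) \geq \tilde{\Phi}(x) + \ip{y}{z-x} + \tfrac{\rho(1-t)}{2}\norm{z-x}^2 - \tfrac{\epsilon}{t}$ for all $t \in (0,1]$; it then bounds $\tfrac{\rho t}{2}\norm{z-x}^2 \leq \tfrac{\rho t}{2}D^2$, cancels the quadratic to pass from $\tilde{\Phi}$ to $\Phi$, and computes $\epsilon' = \min_{t\in(0,1]} \tfrac{\rho t}{2}D^2 + \tfrac{\epsilon}{t}$ in a single minimization, which directly produces the case split in the statement. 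Your chord argument on the slack $E$ along $[x,z_0]$ is, in substance, an inline proof of the special case of that lemma you need (the lemma itself is proved by the same segment-scaling trick), so the mathematical core --- a family of inequalities indexed by $t\in(0,1]$ followed by minimizing $\tfrac{\epsilon}{t} + \tfrac{\rho t}{2}(\cdot)^2$ --- is shared. What your route buys is self-containedness: no external lemma is needed. What it costs is the extra bookkeeping you flagged: by optimizing at the pointwise radius $r = \norm{z_0-x}$ rather than uniformly at $D$, you create the residual regime $r^2 < 2\epsilon/\rho$, where the bound $\epsilon + \tfrac{\rho r^2}{2} \leq 2\epsilon \leq \sqrt{2\rho\epsilon}\,D$ must be checked separately against the hypothesis $\epsilon \leq \tfrac{\rho D^2}{2}$ --- which you do correctly, and which also gives a marginally sharper per-point bound than the paper's uniform one. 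Both arguments handle the trivial edge cases ($r=0$, $\rho=0$) without issue.
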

\begin{proof}

If $y - \rho x \in \p_\epsilon \Phi(x)$, we have 
\begin{align*}
\Phi(z) &\geq \Phi(x) + \ip{y - \rho x}{z - x} - \epsilon \\
\Leftrightarrow \Phi(z) &\geq \Phi(x) + \ip{y}{z - x} + \rho \| x \|^2 - \ip{\rho x}{z} +\tfrac{\rho}{2} \| z \|^2 - \tfrac{\rho}{2}\| z \|^2 - \epsilon \\
\Leftrightarrow \tilde{\Phi}(z) &\geq \tilde{\Phi}(x) + \ip{y}{z - x} +\tfrac{\rho}{2} \| x - z \|^2 - \epsilon \\
\Rightarrow  \tilde{\Phi}(z) &\geq \tilde{\Phi}(x) + \ip{y}{z - x}  - \epsilon 
\end{align*}
Hence, $y  \in \p_\epsilon \tilde{\Phi}(x)$. Conversely, if $y  \in \p_\epsilon \tilde{\Phi}(x)$, then by \cref{lem:eps-subdiff-strcvx}, we have for all $t \in (0,1], z \in \dom \Phi$
\begin{align*}
\tilde{\Phi}(z) &\geq \tilde{\Phi}(x) + \ip{y}{z - x} + \frac{\rho (1- t)}{2} \| z - x \|^2 - \frac{\epsilon}{t} \\
&\geq \tilde{\Phi}(x) + \ip{y}{z - x} + \frac{\rho}{2} \| z - x \|^2  - \frac{\rho t}{2} D^2 - \frac{\epsilon}{t} \\
\Leftrightarrow {\Phi}(z) &\geq {\Phi}(x) + \ip{y - \rho x}{z - x} - \frac{\rho t}{2} D^2 - \frac{\epsilon}{t}\\
\end{align*}
Hence, $y - \rho x \in \p_{\epsilon'} \Phi(x)$ with
\begin{equation*}
    \green{\epsilon' = \min_{t \in (0,1]} \frac{\rho t}{2} D^2 + \frac{\epsilon}{t} = \begin{cases}
                            \sqrt{2 \rho \epsilon} D & \text{if } \epsilon \leq \frac{\rho D^2}{2},\\
                            \frac{\rho D^2}{2} + {\epsilon} & \text{otherwise}.
                        \end{cases}}
\end{equation*}         
\end{proof}

\LocMinDCA*
\begin{proof}
\begin{enumerate}[label=\alph*),ref=\alph*]  
\item If $f(x^k) - f(x^{k+1}) \leq \epsilon$, we have by \cref{thm:convergence}-\ref{itm:convergence} (with $\epsilon_y=0$)  that $y^k \in \partial_{\epsilon_x+ \epsilon} g(x^k)$, which by \cref{lem:eps-subdiff-reg} implies that $y^k - \rho x^k \in \partial_{\epsilon'} (g_L + \delta_{[0,1]^d}) (x^k)$, by taking $D = \max_{x, z\in \dom(g_L + \delta_{[0,1]^d})} \|x - z\| = \sqrt{d}$. We observe that for any $\ell \in V$, we have $y^k - \rho x^k \in \p h_L(\1_{S^{\sigma}_\ell})$ by \cref{prop:LEproperties}-\ref{itm:greedy}. Hence, $\partial_{\epsilon'} (g_L + \delta_{[0,1]^d})(x^k) \cap \p h_L(\1_{S^{\sigma}_\ell}) \not = 0$, and by \cref{prop:LocalOptCondition}-\ref{itm:general} $f(x^k) \leq f(\1_{S^{\sigma}_\ell}) + \epsilon'$. The statement then follows by \cref{prop:LEproperties}-\ref{itm:extension}.
\item 
Note that $y^k_{\sigma_{i}}, x^{k+1}_{\sigma_{i}}$ for any $i \in V$ are valid iterates for approximate DCA, so \cref{itm:noPerms} apply to them.   
If $f(x^k) - f(x^{k+1}) \leq \epsilon$, then $f(x^{k}) - f(x^{k+1}_{\sigma_{i}}) \leq \epsilon$ since $f(x^{k+1}) \leq f(x^{k+1}_{\sigma_{i}})$ for all $i \in V$. Hence, by \cref{itm:noPerms} we have $F(X^k) \leq F(S^{\sigma_i}_\ell) + \epsilon'$ for all $i, \ell \in V$.
We now observe that for any $j \in X^k$ there exists $\sigma_i$ for some $i \in V$, such that $\sigma_i(|X^k|) = j$, and $S^{\sigma_i}_{|X^k|-1} = X^k \setminus j$. Similarly for any $j \in V \setminus X^k$, there exists $\sigma_i$ for some $i \in V$, such that $\sigma_i(|X^k|+1) = j$, and $S^{\sigma_i}_{|X^k|+1} = X^k \cup j$. Then $X^k$ is an $\epsilon'$-local minimum of $F$.

\end{enumerate}
\end{proof}

\subsection{Convergence properties of DCA variants}\label{sec:DCARconv-proof}

In this section, we present convergence properties of the DCA variants discussed in \cref{sec:DCA}. We start by the DCA variant presented in \cref{alg:DCASetwithLocalMinCheck}, where at convergence we explicitly check if rounding the current iterate yields an $\epsilon'$-local minimum of $F$, and if not we restart from the best neighboring set. 
\begin{algorithm}[h] 
\caption{Approximate DCA with local minimality stopping criterion} \label{alg:DCASetwithLocalMinCheck}
\begin{algorithmic}[1]
 \STATE  $\epsilon, \epsilon', \epsilon_x \geq 0, x^0 \in\dom \p h $, $k \gets 0$.
 \FOR{$k = 1, 2, \cdots, K$}
 \STATE $y^{k}\in \p h(x^k)$ 
 \STATE $x^{k+1}\in \p_{\epsilon_x} g^*(y^k)$
\IF{$f(x^k) - f(x^{k+1}) \leq \epsilon$}
\STATE $X^{k+1} = \round(x^{k+1})$
\IF{$X^{k+1}$ is an $\epsilon'$-local minimum of $F$}
\STATE Stop
\ELSE
\STATE $x^{k+1} = \1_{\hat{X}^{k+1}}$ where $\hat{X}^{k+1} = \argmin_{|X \Delta X^{k+1}| = 1} F(X)$
\ENDIF 
\ENDIF 
 \ENDFOR
\end{algorithmic}
 \end{algorithm}

\begin{proposition}\label{prop:convergence-localmin}
Given $f= g -h$ as defined in \eqref{eq:DS-DCdecomposition} and $\epsilon' \geq \epsilon + \epsilon_x$, \cref{alg:DCASetwithLocalMinCheck}
converges to an $\epsilon'$-local minimum of $F$ after at most $(f(x^0) - f^\star)/\epsilon$ iterations. 
\end{proposition}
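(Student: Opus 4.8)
The plan is to show that the objective value $f$ strictly decreases by more than $\epsilon$ at every iteration that does \emph{not} trigger the stopping rule; since $f$ is bounded below by $f^\star$, this forces the rule to fire within $(f(x^0)-f^\star)/\epsilon$ iterations, and by the inner test the set returned at that point is an $\epsilon'$-local minimum of $F$. Throughout I would use that, for the decomposition \eqref{eq:DS-DCdecomposition}, $f=g-h=f_L+\delta_{[0,1]^d}$ by \cref{prop:LEproperties}-\ref{itm:sum}, so $f$ agrees with $f_L$ on $[0,1]^d$ and $f(\1_X)=F(X)$ by \cref{prop:LEproperties}-\ref{itm:extension}. I would also note that every DCA iterate satisfies $x^{k+1}\in\p_{\epsilon_x}g^*(y^k)\subseteq\dom g=[0,1]^d$ (via \cref{prop:FenchelDualPairs}), so the rounding operation and these identities all apply to $x^{k+1}$.

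Fix an iteration $k$ that does not stop and split according to the outer test. If $f(x^k)-f(x^{k+1})>\epsilon$, the iterate $x^{k+1}$ is carried forward unchanged and the strict decrease $f(x^{k+1})<f(x^k)-\epsilon$ is immediate. The substantive case is when $f(x^k)-f(x^{k+1})\le\epsilon$ but $X^{k+1}=\round(x^{k+1})$ fails to be an $\epsilon'$-local minimum, so the iterate is reset to $\1_{\hat X^{k+1}}$.

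For that case I would chain three bounds. By \cref{prop:LEproperties}-\ref{itm:round}, $F(X^{k+1})\le f_L(x^{k+1})=f(x^{k+1})$. Since $X^{k+1}$ is not an $\epsilon'$-local minimum of $F$, \cref{def:localmin} produces a neighbour $Y$ with $|Y\Delta X^{k+1}|=1$ and $F(Y)<F(X^{k+1})-\epsilon'$, so the best neighbour obeys $F(\hat X^{k+1})\le F(Y)<F(X^{k+1})-\epsilon'\le f(x^{k+1})-\epsilon'$. Finally, the descent inequality \cref{thm:convergence}-\ref{itm:descent} with $t_x=1$ and $\epsilon_y=0$ (the quadratic term being nonnegative) gives $f(x^{k+1})\le f(x^k)+\epsilon_x$. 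Combining these and invoking the hypothesis $\epsilon'\ge\epsilon+\epsilon_x$,
\[
f(\1_{\hat X^{k+1}})=F(\hat X^{k+1})< f(x^k)+\epsilon_x-\epsilon'\le f(x^k)-\epsilon,
\]
so the forwarded iterate again satisfies $f(x^{k+1})<f(x^k)-\epsilon$.

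Having established the uniform per-iteration decrease, I would telescope: after $K$ non-terminating iterations $f(x^K)<f(x^0)-K\epsilon$, and $f(x^K)\ge f^\star$ forces $K<(f(x^0)-f^\star)/\epsilon$. Hence the stopping rule fires within $(f(x^0)-f^\star)/\epsilon$ iterations, at which point the output is an $\epsilon'$-local minimum of $F$ by construction. I expect the reset case to be the only real obstacle: one must make the constants line up so that the rounding loss (controlled by \cref{prop:LEproperties}-\ref{itm:round}), the possible objective \emph{increase} of the DCA step (at most $\epsilon_x$ by the descent lemma), and the local-minimality gap $\epsilon'$ together yield a genuine decrease exceeding $\epsilon$ — which is precisely where the assumption $\epsilon'\ge\epsilon+\epsilon_x$ enters.
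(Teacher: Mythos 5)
Your proof is correct and follows essentially the same route as the paper's: the same case split on the stopping test, the same chain of inequalities (rounding via \cref{prop:LEproperties}-\ref{itm:round}, the extension property, the descent bound $f(x^{k+1})\leq f(x^k)+\epsilon_x$ from \cref{thm:convergence}-\ref{itm:descent} with $t_x=1,\epsilon_y=0$, and the hypothesis $\epsilon'\geq\epsilon+\epsilon_x$), followed by telescoping against the lower bound $f^\star$. The extra verifications you include (iterates lying in $[0,1]^d$, the explicit neighbour $Y$ witnessing failure of local minimality) are sound and only make the argument more self-contained.
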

\begin{proof}
Note that between each restart (line 10), \cref{alg:DCASetwithLocalMinCheck} is simply running approximate DCA, so \cref{thm:convergence} applies.
For any iteration $k \in \bN$, if the algorithm did not terminate, then either  $f(x^k) - f(x^{k+1}) > \epsilon$ or $X^{k+1}$ is not an $\epsilon'$-local minimum of $F$ and thus $F(X^{k+1}) > F(\hat{X}^{k+1}) + \epsilon'$. In the second case, we have 
\begin{align*}
f(\1_{\hat{X}^{k+1}}) = F(\hat{X}^{k+1}) &< F(X^{k+1})- \epsilon' &\text{(by \cref{prop:LEproperties}-\ref{itm:extension})}\\
&\leq f(x^{k+1})- \epsilon' &\text{(by \cref{prop:LEproperties}-\ref{itm:round})} \\
&\leq f(x^k) + \epsilon_x - \epsilon'   &\text{(by \cref{thm:convergence}-\ref{itm:descent} with $t_x = 1, \epsilon_y = 0$)} \\
&\leq  f(x^k) - \epsilon   &\text{(since $\epsilon' \geq \epsilon + \epsilon_x$)}
\end{align*}
Hence, the new $x^{k+1} = \1_{\hat{X}^{k+1}}$ will satisfy
$f(x^k) - f(x^{k+1}) > \epsilon$. Thus $f^\star < f(x^k) < f(x^0) - k \epsilon$ and $k < (f(x^0) - f^\star)/\epsilon$.
\end{proof}

Next we present convergence properties of approximate \DCAR \eqref{eq:DCAround}.

\begin{theorem} \label{them:convergence-round-app}
Given $f=g-h$ as defined in \eqref{eq:DS-DCdecomposition}, 
let $\{x^k\}, \{X^k\}, \{\tilde x^{k}\}$ and $\{y^k\}$ be generated by approximate \DCAR \eqref{eq:DCAround}, 
and define $T_\Phi(x^{k+1}) = \Phi(x^k) - \Phi(x^{k+1}) -  \langle y^k, x^k - x^{k+1} \rangle$ for any $\Phi \in \Gamma_0$, 
Then for all $t_x \in (0,1], k\in\bN$, we have:
\begin{enumerate}[label=\alph*),ref=\alph*] 

\item \label{itm:criticality-round-app}  $T_g(\tilde x^{k+1}) \geq \tfrac{\rho (1-t_x)}{2} \| x^k - \tilde x^{k+1} \|^2 - \tfrac{\epsilon_x}{t_x}$, and $T_{h}(\tilde x^{k+1}) \leq -\tfrac{\rho}{2} \| x^k - \tilde x^{k+1} \|^2$. 

Moreover, for any $\epsilon \geq 0$, if $T_g(\tilde x^{k+1}) \leq \epsilon $, then $x^k$ is an $(\epsilon+ \epsilon_x, 0)$-critical point of $g - h$, with $y^k \in \partial_{\epsilon + \epsilon_x} g(x^k) \cap \p h(x^k)$, and
$\tfrac{\rho(1-t_x)}{2} \| x^k - \tilde x^{k+1} \|^2 \leq \tfrac{\epsilon_x}{t_x} + \epsilon $.
Conversely, if $x^k \in \partial_{\epsilon + \epsilon_x} g^*(y^k)$, then $T_g(\tilde x^{k+1}) \leq \epsilon_x + \epsilon$.

Similarly, if $T_{h}(\tilde x^{k+1}) \geq - \epsilon$, then $\tilde x^{k+1}$ is an $(\epsilon_x, \epsilon)$-critical point of $g - h$, with  $y^k \in \p_{\epsilon_x} g(\tilde x^{k+1}) \cap  \partial_\epsilon h(\tilde x^{k+1})$, and $\tfrac{\rho}{2} \| x^k - \tilde x^{k+1} \|^2 \leq \epsilon$. 
Conversely, if $y^k \in \partial_{\epsilon} h(\tilde x^{k+1})$, then $T_h(\tilde x^{k+1}) \geq  - \epsilon$. 

\item \label{itm:descent-round-app} $F(X^k) - F(X^{k+1})$ 
$\geq f(x^k)-f(\tilde x^{k+1}) =  T_{g}(\tilde x^{k+1}) - T_{h}(\tilde x^{k+1}) \geq \tfrac{\rho (2-t_x)}{2} \|x^k-\tilde x^{k+1}\|^2 - \tfrac{\epsilon_x}{t_x}$.

\item \label{itm:convergence-round-app} For any $\epsilon \geq 0, F(X^k) - F(X^{k+1}) \leq \epsilon$ 
then $f(x^k)-f(\tilde x^{k+1}) =  T_{g}(\tilde x^{k+1}) - T_{h}(\tilde x^{k+1}) \leq \epsilon$. In this case, $x^k$ is an $(\epsilon_x + \epsilon_1, 0)$-critical point of $g - h$, with 
$y^k\in\p_{\epsilon_x + \epsilon_1} g(x^k) \cap \p h(x^k), \tilde x^{k+1}$ is an $(\epsilon_x, \epsilon_2)$-critical point of $g - h$ with $y^k\in \p_{\epsilon_x} g(\tilde x^{k+1}) \cap \p_{\epsilon_2} h(\tilde x^{k+1})$ for some $\epsilon_1 + \epsilon_2 = \epsilon, \epsilon_1 \geq - \epsilon_x, \epsilon_2 \geq 0,$ and $\tfrac{\rho(2-t_x)}{2}\|x^k- \tilde x^{k+1}\|^2 \leq \epsilon + \tfrac{\epsilon_x}{t_x}$. Conversely, if $x^k \in\p_{\epsilon_x + \epsilon_1} g^*(y^k),$ and $y^k\in \p_{\epsilon_2} h(\tilde x^{k+1})$, then
$T_g(\tilde x^{k+1}) - T_h(\tilde x^{k+1}) \leq \epsilon_x  + \epsilon$ and  $f(x^k) - f(\tilde x^{k+1})  \leq \epsilon_x  + \epsilon$.

\item \label{itm:obj-rate-round-app} $\min_{k \in \{0,1,\dots,K-1\}} T_g(\tilde x^{k+1}) - T_h(\tilde x^{k+1}) =  \min_{k \in \{0,1,\dots,K-1\}} f(x^k)-f(\tilde x^{k+1}) \leq$ 
$\min_{k \in \{0,1,\dots,K-1\}} F(X^k)-F(X^{k+1}) \leq \frac{F(X^0) - F^\star}{K}$.

\item \label{itm:iterates-rate-round-app} If $\rho>0$, then
    \[
    \min_{k \in \{0,1,\dots,K-1\}}\|x^k-\tilde x^{k+1}\|\leq \sqrt{\tfrac{2}{\rho (2-t_x)} \big( \frac{F(X^0)-F^\star}{K } + \tfrac{\epsilon_x}{t_x} \big)}.
    \]

\end{enumerate}
\end{theorem}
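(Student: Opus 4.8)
The plan is to exploit the fact that the pair $y^k,\tilde x^{k+1}$ is a genuine approximate DCA iterate for the decomposition \eqref{eq:DS-DCdecomposition}, so that the machinery of \Cref{thm:convergence-app} transfers almost verbatim, and then to bridge the continuous and discrete objectives through the rounding step. First I would record the two structural facts special to this decomposition: since $y^k \in \rho x^k + \p h_L(x^k) = \p h(x^k)$ is an \emph{exact} subgradient we are in the regime $\epsilon_y = 0$, and moreover $\rho(g)=\rho(h)=\rho$. Because $\tilde x^{k+1}$ is an $\epsilon_x$-solution of \eqref{eq:DCASet-x} we have $\tilde x^{k+1}\in\p_{\epsilon_x}g^*(y^k)$, i.e. $y^k\in\p_{\epsilon_x}g(\tilde x^{k+1})$ by \Cref{prop:FenchelDualPairs}; applying \Cref{lem:eps-subdiff-strcvx} to the $\rho$-strongly convex $g$ then yields the lower bound on $T_g(\tilde x^{k+1})$ in \cref{itm:criticality-round-app}. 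For $T_h$, because $y^k$ is an exact subgradient I can invoke $\rho$-strong convexity of $h$ directly, $h(\tilde x^{k+1}) \ge h(x^k) + \ip{y^k}{\tilde x^{k+1} - x^k} + \tfrac{\rho}{2}\|\tilde x^{k+1}-x^k\|^2$, which rearranges to $T_h(\tilde x^{k+1}) \le -\tfrac{\rho}{2}\|x^k-\tilde x^{k+1}\|^2$ --- sharper than the generic $-\tfrac{\rho(1-t_y)}{2}$ bound, and precisely the reason the descent in \cref{itm:descent-round-app} carries the factor $\tfrac{\rho(2-t_x)}{2}$. The forward and converse criticality claims of \cref{itm:criticality-round-app} then follow exactly as in the proof of \Cref{thm:convergence-app}-\ref{itm:criticality-app}, by setting $t_x=1$ (resp.\ using the exact subgradient of $h$) in the relevant subgradient inequalities.

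The key new ingredient is the inequality $F(X^k) - F(X^{k+1}) \ge f(x^k) - f(\tilde x^{k+1})$ in \cref{itm:descent-round-app}, which links discrete progress to continuous progress. Here I would first note that the regularizers cancel, $f = g - h = f_L + \delta_{[0,1]^d}$ by \Cref{prop:LEproperties}-\ref{itm:sum}, so $f$ agrees with $f_L$ on the cube $[0,1]^d$. Since $x^k = \1_{X^k}$ is integral, \Cref{prop:LEproperties}-\ref{itm:extension} gives $f(x^k) = f_L(\1_{X^k}) = F(X^k)$; since $X^{k+1} = \round(\tilde x^{k+1})$ and $\tilde x^{k+1}\in[0,1]^d$, the rounding guarantee \Cref{prop:LEproperties}-\ref{itm:round} gives $F(X^{k+1}) = f_L(\1_{X^{k+1}}) \le f_L(\tilde x^{k+1}) = f(\tilde x^{k+1})$. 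Subtracting yields the bridging inequality; combining it with the algebraic identity $f(x^k) - f(\tilde x^{k+1}) = T_g(\tilde x^{k+1}) - T_h(\tilde x^{k+1})$ (as in \Cref{thm:convergence-app}-\ref{itm:descent-app}) and the two bounds from \cref{itm:criticality-round-app} completes \cref{itm:descent-round-app}.

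The remaining parts are assembled from these two. For \cref{itm:convergence-round-app}, the bridging inequality turns $F(X^k)-F(X^{k+1}) \le \epsilon$ into $f(x^k)-f(\tilde x^{k+1}) \le \epsilon$, after which I split $\epsilon = \epsilon_1 + \epsilon_2$ with $\epsilon_1 \ge T_g(\tilde x^{k+1}) \ge -\epsilon_x$ and $\epsilon_2 \ge -T_h(\tilde x^{k+1}) \ge 0$ (the last sign is free since $T_h \le 0$) and feed these into the criticality statements of \cref{itm:criticality-round-app}; the converse is the converse of \cref{itm:criticality-round-app}. For the rate \cref{itm:obj-rate-round-app} I telescope the \emph{discrete} gaps, $\sum_{k=0}^{K-1}(F(X^k)-F(X^{k+1})) = F(X^0)-F(X^K) \le F(X^0)-F^\star$, so the minimal gap is at most the average $\tfrac{F(X^0)-F^\star}{K}$, and the chain $\min_k(T_g - T_h) = \min_k(f(x^k)-f(\tilde x^{k+1})) \le \min_k(F(X^k)-F(X^{k+1}))$ is the bridging inequality applied termwise. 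Finally \cref{itm:iterates-rate-round-app} solves the descent bound for $\|x^k-\tilde x^{k+1}\|^2$ and inserts the rate from \cref{itm:obj-rate-round-app}. The only genuine obstacle is the bridging inequality: every other step transcribes the general DCA analysis, but one must verify carefully that the integrality of $x^k$ and the $\round$ guarantee supply exactly what is needed, and that the cancellation of $\tfrac{\rho}{2}\|\cdot\|^2$ in $f$ is what makes $f$ coincide with $f_L$ on the cube so that \Cref{prop:LEproperties}-\ref{itm:round} applies.
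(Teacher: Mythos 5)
Your proposal is correct and follows essentially the same route as the paper's proof: recognize $(y^k,\tilde x^{k+1})$ as a genuine approximate DCA step with $\epsilon_y=0$ so that \cref{thm:convergence-app} applies (your direct use of exact strong convexity of $h$ to get the sharpened $T_h \leq -\tfrac{\rho}{2}\|x^k-\tilde x^{k+1}\|^2$ bound, and hence the factor $\tfrac{\rho(2-t_x)}{2}$, is exactly what is needed), then bridge discrete and continuous objectives via $f(x^k)=F(X^k)$ for integral $x^k$ and $F(X^{k+1}) \leq f_L(\tilde x^{k+1})$ from \cref{prop:LEproperties}-\ref{itm:round}, and telescope the discrete gaps. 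The paper's proof is identical in structure, merely more terse in delegating items (a) and (c) to the general theorem.
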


\begin{proof}
\looseness=-1 Note that the iterates $\tilde{x}^{k+1}, y^k$ are generated by an approximate DCA step from $x^k$, so \cref{thm:convergence-app} apply to them.
\begin{enumerate}[label=\alph*)]
\item The claim follows from \cref{thm:convergence-app}-\ref{itm:criticality-app}. 

\item By \cref{thm:convergence-app}-\ref{itm:descent-app}, we have
  \[f(x^k)-f(\tilde x^{k+1}) = T_g(\tilde x^{k+1}) - T_h(\tilde x^{k+1}) \geq \tfrac{\rho (2-t_x)}{2} \|x^k-\tilde x^{k+1}\|^2 - \tfrac{\epsilon_x}{t_x}.\]
  By \cref{prop:LEproperties}-\ref{itm:extension}, we also have $F(X^k) - F(X^{k+1}) = f(x^k)-f( x^{k+1})$.
The claim then follows since $f(x^{k+1}) \leq f(\tilde{x}^{k+1})$ by \cref{prop:LEproperties}-\ref{itm:round}.

\item This follows from \cref{itm:descent-round-app} and \cref{thm:convergence-app}-\ref{itm:convergence-app}. 

\item This follows from \cref{itm:descent-round-app} by telescoping sum.

\item This follows from Items \ref{itm:descent-round-app} and \ref{itm:obj-rate-round-app}.

\end{enumerate}
\end{proof}

\section{Proofs of \cref{sec:CDCA}} \label{sec:CDCA-proofs}

\subsection{Proof of \cref{corr:FWconvergence}} \label{sec:FWconv-proof}

\FWconvergence*
 \begin{proof}
We observe that approximate FW with $\gamma_t = 1$ is a special case of approximate DCA \eqref{alg:DCA}, with DC components $$g' = \delta_{\p h(x^k)} \text{ and } h' = - \phi_k,$$ 
and $\epsilon_x=0, \epsilon_y = \epsilon$. Indeed, we can write the approximate FW  iterates $w^0 \in \p h(x^k) = \dom \p g'$, $- s^t \in \p_\epsilon h'(w^t)$ and $w^{t+1} = v^t \in \argmin_{w} g'(w) - \ip{- s^t}{w} = \p (g')^*(-s^t)$, which are valid iterates of approximate DCA \eqref{alg:DCA}.

We show also that $g', h' \in \Gamma_0$: We can assume w.l.o.g that $\p h(x^k) \not = \emptyset$, otherwise the bound holds trivially. Hence, $g'$ is proper. And since $h \in \Gamma_0$, $\p h(x^k)$ is a closed and convex set, hence $g'$ is a closed and convex function. We also have that $h'$ is proper, since otherwise Problem \eqref{eq:CDCA-y} would not have a finite minimum, which also implies that the minimum of the DC dual \eqref{eq:DCdual} is not finite, contradicing our assumption that the minimum of the DC problem \eqref{eq:DC} is finite. Finally, since the fenchel conjugate $g^*$ is closed and convex, then $h'$ is also closed and convex.

We can thus apply \cref{thm:convergence-app}. We get
\begin{align*}
\min_{k \in \{0,1,\dots,K-1\}} T_{g'}(w^{k+1}) - T_{h'}(w^{k+1}) &\leq \frac{\phi(w^0) - \min_{ w \in \partial h(x^k)} \phi_k(w)}{K} & \text{(by \cref{thm:convergence-app}-\ref{itm:obj-rate-app})}\\
\Rightarrow \min_{k \in \{0,1,\dots,K-1\}} T_{g'}(w^{k+1})  &\leq \frac{\phi(w^0) - \min_{ w \in \partial h(x^k)} \phi_k(w)}{K} + \epsilon& \text{(by \cref{thm:convergence-app}-\ref{itm:criticality-app} with $t_y=1$)}
\end{align*} 
 The claim now follows by noting that $T_{g'}(w^{t+1}) =  \ip{ s^t}{w^t - w^{t+1}} = \mathrm{gap}(w^t)$.
 \end{proof}

\subsection{Proof of \cref{prop:FWLO} } \label{sec:FWLO-proof}

\FWLO*
 \begin{proof}
By \cref{prop:LEproperties}-\ref{itm:greedy}, $w \in \p h_L(x)$, so it is a feasible solution. Given any $w' \in \p h_L(x)$, $w'$ is a maximizer of $\max_{w \in  B(H)} \ip{w}{s}$, hence it must satisfy $w'(C_i) = H(C_i)$ for all $i \in \{1, \cdots, m\}$ \citep[Proposition 4.2]{Bach2013}. We have
\begin{align*}
    \ip{s}{w - w'} &= \sum_{i=1}^m  \sum_{k=1}^{|A_i|} s_{\sigma(|C_{i-1}|+k)} \left( H(\sigma(|C_{i-1}|+k) \mid S^\sigma_{|C_{i-1}|+k-1} ) - w'_{\sigma(|C_{i-1}|+k)} \right)\\
     &= \sum_{i=1}^m  \left( \begin{multlined} \sum_{k=1}^{|A_i|-1} (s_{\sigma(|C_{i-1}|+k)} - s_{\sigma(|C_{i-1}|+k+1)} ) ( H(S^\sigma_{|C_{i-1}|+k}) - w'(S^\sigma_{|C_{i-1}|+k}) ) \\
      - s_{\sigma(|C_{i-1}|+1)} (H(S^\sigma_{|C_{i-1}|}) - w'(S^\sigma_{|C_{i-1}|}) ) + s_{\sigma(|C_{i}|)} (H(S^\sigma_{|C_{i}|}) - w'(S^\sigma_{|C_{i}|}) ) \end{multlined} \right)\\
     &\geq 0.
\end{align*}
The last inequality holds since $w' \in  B(H)$ and $S^\sigma_{|C_{i}|} = C_i$ for all $i \in \{1, \cdots, m\}$.
 \end{proof}
 
 \subsection{Proof of \cref{thm:ConvCDCA}} \label{sec:ConvCDCA-proof}
To prove \cref{thm:ConvCDCA} we need the following lemma, which extends the result in \citep[Theorem 2.3]{tao1988duality}.

\begin{lemma}\label{lem:strongCriticalityCDCA}
For any $\epsilon \geq 0$, $\hat{x}$ is an $\epsilon$-strong critical point of $g - h$ 
if and only if there exists $\hat{y} \in \argmin \{ \ip{y} {\hat{x}} - g^*(y) : y \in \partial h(\hat{x})\}$ such that $\hat{x} \in \p_\epsilon g^*(\hat{y})$.
\end{lemma}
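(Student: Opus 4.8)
The plan is to collapse both sides of the claimed equivalence into a single scalar inequality about the optimal value of the CDCA subproblem, using the Fenchel--Young characterization of the $\epsilon$-subdifferential in \cref{prop:FenchelDualPairs}. Write $\phi(y) := \ip{y}{\hat{x}} - g^*(y)$ for the subproblem objective, so that $\hat{y} \in \argmin\{\phi(y) : y \in \partial h(\hat{x})\}$ is exactly the CDCA $y$-update at $\hat{x}$. Since $g \in \Gamma_0$ we have $g^{**} = g$, so \cref{prop:FenchelDualPairs} applied to $g$ gives, for every $y$,
\[
\hat{x} \in \p_\epsilon g^*(y) \iff y \in \p_\epsilon g(\hat{x}) \iff g^*(y) + g(\hat{x}) - \ip{y}{\hat{x}} \leq \epsilon \iff \phi(y) \geq g(\hat{x}) - \epsilon.
\]
This chain is the engine of the whole argument.

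Next I would rewrite each side of the lemma in terms of $\phi$. By \cref{def:criticality}, $\hat{x}$ is $\epsilon$-strong critical iff $\partial h(\hat{x}) \subseteq \partial_\epsilon g(\hat{x})$, which by the displayed equivalence means $\phi(y) \geq g(\hat{x}) - \epsilon$ for \emph{all} $y \in \partial h(\hat{x})$, i.e.\ $\inf_{y \in \partial h(\hat{x})} \phi(y) \geq g(\hat{x}) - \epsilon$. On the other side, the right-hand condition of the lemma asks for a minimizer $\hat{y}$ of $\phi$ on $\partial h(\hat{x})$ with $\hat{x} \in \p_\epsilon g^*(\hat{y})$, i.e.\ $\phi(\hat{y}) \geq g(\hat{x}) - \epsilon$; since $\phi(\hat{y})$ equals the optimal value, this is again $\min_{y \in \partial h(\hat{x})} \phi(y) \geq g(\hat{x}) - \epsilon$. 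Thus both conditions reduce to the same inequality, and the equivalence follows.

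Spelled out, for the direction ($\Leftarrow$), given such a minimizer $\hat{y}$, every $y \in \partial h(\hat{x})$ satisfies $\phi(y) \geq \phi(\hat{y}) \geq g(\hat{x}) - \epsilon$, hence $y \in \partial_\epsilon g(\hat{x})$, giving strong criticality; this direction needs no extra hypothesis. For ($\Rightarrow$), strong criticality yields $\phi \geq g(\hat{x}) - \epsilon$ throughout $\partial h(\hat{x})$, and picking any $\hat{y}$ in the argmin (which lies in $\partial h(\hat{x})$) automatically satisfies $\phi(\hat{y}) \geq g(\hat{x}) - \epsilon$, i.e.\ $\hat{x} \in \p_\epsilon g^*(\hat{y})$.

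The one point needing care --- and the main obstacle --- is \emph{attainment} of the minimum in the forward direction: producing $\hat{y}$ requires $\argmin\{\phi(y): y \in \partial h(\hat{x})\}$ to be nonempty. I would note that this holds in our setting, since for the decomposition \eqref{eq:DS-DCdecomposition} the constraint set $\partial h(\hat{x}) = \rho \hat{x} + \partial h_L(\hat{x})$ is compact (a translate of a face of the base polytope $B(H)$) and $\phi$ is continuous there, so the infimum is attained; more generally it is the standing assumption implicit in the CDCA update \eqref{eq:CDCA-y}. With attainment in hand, the identity $\phi(\hat{y}) = \inf_{y \in \partial h(\hat{x})}\phi(y)$ closes the gap between the ``for all $y$'' form of strong criticality and the existence of a single witness $\hat{y}$, and setting $\epsilon = 0$ recovers \citep[Theorem 2.3]{tao1988duality}.
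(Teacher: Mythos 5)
Your proof is correct and takes essentially the same route as the paper's: both use \cref{prop:FenchelDualPairs} to translate $\hat{x} \in \p_\epsilon g^*(y)$ into $y \in \p_\epsilon g(\hat{x})$ (equivalently $\ip{y}{\hat{x}} - g^*(y) \geq g(\hat{x}) - \epsilon$), and then use minimality of $\hat{y}$ over $\p h(\hat{x})$ to pass between the single witness and all of $\p h(\hat{x})$. Your explicit discussion of attainment of the minimum is a point of care that the paper's forward direction silently assumes, but it does not alter the argument.
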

\begin{proof}
If $\hat{x}$ is an $\epsilon$-strong critical point of $g - h$, i.e., $\p h(\hat{x}) \subseteq \p_\epsilon g(\hat{x})$, then for every $y \in \partial h(\hat{x})$, we have $y \in \p_\epsilon g(\hat{x})$. In particular, this holds for $\hat{y} \in \argmin \{ \ip{y} {\hat{x}} - g^*(y) : y \in \partial h(\hat{x})\}$, hence  $\hat{x} \in \p_\epsilon g^*(\hat{y})$ by \cref{prop:FenchelDualPairs}.
Conversely, given $\hat{y} \in \argmin \{\ip{y}{\hat{x}} - g^*(y) : y \in \partial h(\hat{x})\}$ such that $\hat{x} \in \p_\epsilon g^*(\hat{y})$, we have 
\begin{align}\label{eq:defdualsubgradient}
\ip{\hat{y}}{\hat{x}} - g^*(\hat{y}) \leq \ip{y}{\hat{x}} - g^*(y), \forall y \in \p h(\hat{x}).
\end{align}
Since $\hat{x} \in \p_\epsilon g^*(\hat{y})$, we have by \cref{prop:FenchelDualPairs}, $g^*(\hat{y}) + g(\hat{x}) - \ip{\hat{y}}{\hat{x}} \leq \epsilon$. Combining this with \eqref{eq:defdualsubgradient} yields
\[g(\hat{x}) - \epsilon \leq \ip{y}{\hat{x}} - g^*(y), \forall y \in \p h(\hat{x}).\]
By definition of $g^*$, we obtain
\[g(\hat{x}) - \epsilon \leq \ip{y}{\hat{x} - x} + g(x), \forall x \in \R^d, \forall y \in \p h(\hat{x}).\]
Hence $y \in \p_\epsilon g(\hat{x})$ for all $y \in \p h(\hat{x})$.
\end{proof}

 \ConvCDCA*
 \begin{proof}
 Since approximate CDCA is a special case of approximate DCA, with $\epsilon_y=0$,  \cref{thm:convergence} applies.
If $f(x^k) - f(x^{k+1}) \leq \epsilon$, we have by \cref{thm:convergence}-\ref{itm:convergence} that $x^k \in \p_{\epsilon_x + \epsilon} g^*(y^k)$. Hence, by \cref{lem:strongCriticalityCDCA}  $x^k$ is an $(\epsilon_x + \epsilon)$-strong critical point of $g - h$, i.e., $\p h(x^k) \subseteq \p_{\epsilon_x + \epsilon} g(x^k)$. 
\red{If $h$ is locally polyhedral, this implies that $x^k$ is an $(\epsilon + \epsilon_x)$-local minimum of $f$ by \cref{prop:LocalOptCondition}-\ref{itm:polyhedral}.}
\end{proof}
 
\subsection{Proof of \cref{corr:LocMinCDCA}} \label{sec:LocMinCDCA-proof}

\LocMinCDCA*
\begin{proof}
Assume that $\hat{x}$ is an $\epsilon$-strong critical point of $g - h$.
We first observe that any vector $x=\1_X$ corresponding to $X \subseteq \hat{X}$ or $X \supseteq \hat{X}$ has a common decreasing order with $\hat{x}$, hence choosing $\hat{y}$ as in \cref{prop:LEproperties}-\ref{itm:greedy} according to this common order yields $\hat{y} \in \p h_L(\hat{x}) \cap \p h_L(x)$, and $\hat{y} + \rho \hat{x} \in \p h(\hat{x}) \subseteq \p_\epsilon g(\hat{x})$. By \cref{lem:eps-subdiff-reg}, we thus have $\hat{y} \in \p_{\epsilon'} (g_L + \delta_{[0,1]^d})(\hat{x})$ and $\p_{\epsilon'} (g_L + \delta_{[0,1]^d})(\hat{x}) \cap   \p h_L(x) \not = \emptyset$. \cref{prop:LocalOptCondition}-\ref{itm:general} then implies that $f(\hat{x})\leq f(x) + \epsilon'$. Hence, $\hat{X}$ is an $\epsilon'$-strong local minimum of $F$ by \cref{prop:LEproperties}-\ref{itm:extension}.

\red{Conversely, assume $\hat{X}$ is an $\epsilon'$-strong local minimum of $F$. We argue that there exists a neighborhood $B_\delta(\hat{x}) = \{x: \| x - \hat{x} \|_\infty < \delta\}$ of $\hat{x}$, where $\delta = 1/4$, 
such that any $X = \round(x)$ for $x \in B_\delta(\hat{x})$, satisfies $X \subseteq \hat{X}$ or $X \supseteq \hat{X}$. To see this note that for $x \in B_\delta(\hat{x})$, any $\sigma \in S_d$  such that  $x_{\sigma(1)}\geq \dots \geq x_{\sigma(d)}$ would have $S^\sigma_{|\hat X|} = \hat X$, since $\hat{x}_i - \delta > \hat{x}_j + \delta$ for all $i \in \hat{X}, j \not \in \hat{X}$. Since $X = S^\sigma_{\hat k}$ where $\hat k \in \argmin_{k=0,1,\dots,d}F(S^\sigma_k)$, it must satisfy $X \subseteq \hat{X}$ or $X \supseteq \hat{X}$.
As a result, we have by \cref{prop:LEproperties}-\ref{itm:round},\ref{itm:extension}, \[ f_L(\hat x) = F(\hat X) \leq F(X) + \epsilon' \leq f_L(x) + \epsilon'.\]
Hence, $\hat x$ is an $\epsilon$-local minimum of $f$, and thus also $\epsilon$-strong critical point of $g - h$ by \cref{prop:LocalOptCondition}-\ref{itm:general}.}
\end{proof}
\mtodo{We can still show that if $\hat{X}$ is an $\epsilon$-strong local minimum of $F$ then $\hat{x}$ is a $\epsilon / \tau$-strong critical point of $g - h$ for any $\tau < 1/2$. But for now I will remove all the converse direction unless we're using this somewhere.}

\subsection{Convergence properties of \CDCAR} \label{sec:CDCARconv-proof}

\begin{corollary}\label{corr:CDCARconv}
Let $\{x^k\}, \{\tilde x^{k+1}\}$ and $\{y^k\}$ be generated by an approximate version of CDCAR \eqref{eq:CDCAR} where $\tilde x^{k+1} \in \p_{\epsilon_x} g^*(y^k)$  and for some $\epsilon_x \geq 0$.
Then all of the properties in Theorem \ref{them:convergence-round-app} hold. 
In addition, if $F(X^k) - F(X^{k+1}) \leq \epsilon$ for some $\epsilon \geq 0$ then $x^k$ is an $(\epsilon + \epsilon_x)$-strong critical point of $f$, with $\p h(x^k) \subseteq \p_{\epsilon_x + \epsilon} g(x^k)$, and $X^k$ is an $\epsilon'$-strong local minimum of $F$, where $\epsilon' = \sqrt{2 \rho d (\epsilon + \epsilon_x)}$ if $\epsilon + \epsilon_x \leq \tfrac{\rho d}{2}$, and $\tfrac{\rho d}{2} + \epsilon + \epsilon_x$ otherwise. \red{If $\rho=0$, $x^k$ is also an $\epsilon + \epsilon_x$-local minimum of $f$.}
\end{corollary}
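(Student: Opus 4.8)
The plan is to obtain \Cref{corr:CDCARconv} by assembling three ingredients that are already established: the descent and convergence analysis of \DCAR in \cref{them:convergence-round-app}, the strong-criticality characterization of CDCA in \cref{lem:strongCriticalityCDCA} (as used in the proof of \cref{thm:ConvCDCA}), and the DC-to-DS translation in \cref{corr:LocMinCDCA}. The first claim is immediate: in \CDCAR the pair $y^k, \tilde{x}^{k+1}$ satisfies $y^k \in \partial h(x^k) = \rho x^k + \p h_L(x^k)$ and $\tilde{x}^{k+1} \in \p_{\epsilon_x} g^*(y^k)$, so they are valid approximate \DCAR iterates; the \DCAR framework only requires $y^k \in \partial h(x^k)$ and $\tilde{x}^{k+1}$ to be an $\epsilon_x$-solution, not the particular minimizing choices CDCA makes. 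Hence every item of \cref{them:convergence-round-app} applies verbatim with $\epsilon_y = 0$.

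For the additional claim, I would first convert the discrete decrease into a continuous one. By \cref{them:convergence-round-app}-\ref{itm:descent-round-app} we have $F(X^k) - F(X^{k+1}) \geq f(x^k) - f(\tilde{x}^{k+1})$, so the hypothesis $F(X^k) - F(X^{k+1}) \leq \epsilon$ forces $f(x^k) - f(\tilde{x}^{k+1}) \leq \epsilon$. Since $\tilde{x}^{k+1}$ plays the role of the CDCA primal iterate, I can then reuse the argument of \cref{thm:ConvCDCA}: applying \cref{thm:convergence}-\ref{itm:convergence} with $\epsilon_y = 0$ gives $y^k \in \p_{\epsilon + \epsilon_x} g(x^k)$, equivalently $x^k \in \p_{\epsilon + \epsilon_x} g^*(y^k)$ by \cref{prop:FenchelDualPairs}. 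Because in \CDCAR the multiplier $y^k$ is chosen as a solution of Problem \eqref{eq:CDCA-y}, i.e.\ $y^k \in \argmin\{\ip{y}{x^k} - g^*(y) : y \in \partial h(x^k)\}$, \cref{lem:strongCriticalityCDCA} then yields $\p h(x^k) \subseteq \p_{\epsilon + \epsilon_x} g(x^k)$, that is, $x^k$ is an $(\epsilon + \epsilon_x)$-strong critical point of $g - h$.

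Finally, since $x^k = \1_{X^k}$ is integral and is an $(\epsilon + \epsilon_x)$-strong critical point of the decomposition \eqref{eq:DS-DCdecomposition}, I would invoke \cref{corr:LocMinCDCA} with $\varepsilon = \epsilon + \epsilon_x$: its $\varepsilon'$ specializes precisely to $\sqrt{2 \rho d (\epsilon + \epsilon_x)}$ when $\epsilon + \epsilon_x \leq \tfrac{\rho d}{2}$ and to $\tfrac{\rho d}{2} + \epsilon + \epsilon_x$ otherwise, giving that $X^k$ is an $\epsilon'$-strong local minimum of $F$. The one point I would guard most carefully — and the main place the argument could slip — is the bookkeeping that $y^k$ is a genuine (exact) solution of \eqref{eq:CDCA-y} rather than merely an approximate one, since \cref{lem:strongCriticalityCDCA} crucially needs the dual minimality of $y^k$ over $\partial h(x^k)$; the rounding step, by contrast, enters only through the descent comparison in \cref{them:convergence-round-app}-\ref{itm:descent-round-app} and does not affect the criticality of $x^k$ itself.
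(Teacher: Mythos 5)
Your proposal is correct and follows essentially the same route as the paper: inherit \cref{them:convergence-round-app} since \CDCAR iterates are valid \DCAR iterates, deduce $x^k \in \p_{\epsilon + \epsilon_x} g^*(y^k)$ from the discrete descent hypothesis, apply \cref{lem:strongCriticalityCDCA} (using that $y^k$ exactly solves \eqref{eq:CDCA-y}) to get strong criticality, and conclude via \cref{corr:LocMinCDCA}. The only cosmetic difference is that the paper cites \cref{them:convergence-round-app}-\ref{itm:convergence-round-app} directly for the containment $x^k \in \p_{\epsilon_x + \epsilon} g^*(y^k)$, whereas you re-derive it from the descent inequality together with \cref{thm:convergence}-\ref{itm:convergence} and \cref{prop:FenchelDualPairs} — which is exactly how that item is proved anyway.
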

\begin{proof}
Since \CDCAR is a special case of \DCAR, then all properties of the latter apply to the former.
In addition, if $F(X^k) - F(X^{k+1}) \leq \epsilon$, we have by \cref{them:convergence-round-app}-\ref{itm:convergence-round-app} that $x^k \in \p_{\epsilon_x + \epsilon} g^*(y^k)$. Hence, by \cref{lem:strongCriticalityCDCA}  $x^k$ satisfies $\p h(x^k) \subseteq \p_{\epsilon_x + \epsilon} g(x^k)$. Hence, $X^k$ is a $\epsilon'$-strong local minimum of $F$ by \cref{corr:LocMinCDCA}. \red{If $\rho=0$, $h$ is polyhedral, hence $x^k$ is an $\epsilon + \epsilon_x$-local minimum of $f$ by \cref{prop:LocalOptCondition}-\ref{itm:polyhedral}.}
\end{proof}

\section{Remarks on Local Optimality Conditions}

The following example shows that rounding a fractional solution $x^K$ returned by DCA or CDCA will not necessarily yield an $\epsilon$-local minimum of $F$, for any $\epsilon \geq 0$, even if $x^K$ is a local minimum of $f_L$. It also shows that the objective achieved by a local minimum of $f_L$ can be arbitrarily worse than the minimum objective.
\begin{example}\label{ex:localMinRound}
For any $\epsilon \geq 0, \alpha > \epsilon$, 
let $V = \{1,2,3\}, G(X) = \alpha |X|$, and $H: 2^V \to \R$ be a set cover function defined as $H(X) = \alpha |\bigcup_{i \in X} U_i|$, where $U_1 = \{1\}, U_2= \{1,2\}, U_3 = \{1,2,3\}$. Then $G$ is modular, $H$ is submodular, 
and their corresponding Lov\'asz extensions are $g_L(x) = \alpha (x_1 + x_2 + x_3)$ and $h_L(x)  = \alpha (\max\{x_1, x_2, x_3\} + \max\{x_2, x_3\} + x_3)$; see e.g., \citep[Section 6.3]{Bach2013}. The minimum value
$\min_{X \subseteq V} G(X) - H(X) = -2 \alpha$ is achieved at $X^* = \{3\}$. 
Consider a solution $\hat{x} = (1, 0.5, 0)$, $\hat{x}$ is a local minimum of $f_L$. 
To see this note that for any vector $x$ such that $x_1 > x_2 > x_3$ we have $h_L(x) = g_L(x)$, hence $f_L(x) = 0 = f_L(\hat{x})$. Accordingly, for any $x$ in the neighborhood $\{x : \|x -  \hat{x}\|_\infty < 0.25$ of $\hat{x}$, we have $f_L(x) = 0 = f_L(\hat{x})$, thus $\hat{x}$ is a local minimum of $f_L$.  
On the other hand none of the sets $\emptyset, \{1\}, \{1,2\}, \{1,2,3\}$ obtained by rounding $\hat{x}$ via \cref{prop:LEproperties}-\ref{itm:round} are $\epsilon$-local minima of $F$, since they all have objective value $F(\hat{X}) = 0$ and adding or removing a single element yields an objective $F(X) = -\alpha$ (we can choose $X$ to be $\{2\}, \{13\}, \{2\}, \{23\}$ respectively). \\
Note that if $x^k=\hat{x}$ at any iteration $k$ of DCA (e.g., if we initialize at $x^0 = \hat{x}$) and $\rho>0$, then $x^{k+1}=x^k$ and DCA will terminate. To see this note that $h$ has a unique subgradient at $x^k$ which is $y^k = \rho x^k + \1$, and $x^k =  \argmin_{x \in [0,1]^d} g_L(x) - \ip{x}{y^k} + \tfrac{\rho}{2} \| x \|^2$. This also applies to CDCA, since DCA and CDCA coincide in this case (since $\p h(x^k)$ has a unique element).  
Note also that the objective at this local minimum $f_L(\hat{x})=0$ is arbitrarily worse than the minimum objective $\min_{x \in [0,1]^3} f_L(x) = -2\alpha$.
\end{example}
Note that in the above example, the variant of DCA in \cref{alg:DCASetwithLocalMinCheck}
would yield the optimal solution $X^*$ (e.g., if we pick $\emptyset$ as the rounded solution).

\mtodo{Can we find an example where rounding a local minimum of $f_L$ returned by CDCA will not necessarily yield an approximate strong local minimum of $F$ even if rounded solution is an approximate local minimum? This would better motivate having the rounded variants. We can't use any supermodular function since in that case local min and strong local min are equivalent. The below example also does not work since any local minimum of $f_L(x) = x_1 + \max \{x_2, x_3\} + \max \{x_4, \cdots, x_d\}  - \max \{x_1, x_4, \cdots, x_d\} - x_2 - x_3$ therein will have $x_2 = x_3 = 1$ and hence $\{23\}$ will the rounded solution, which is the global minimum.}

The following example shows that the objective achieved by a set satisfying the guarantees in \cref{corr:LocMinDCA} can be arbitrarily worse than any strong local minimum. This highlights the importance of the stronger guarantee of CDCA.
\begin{example}\label{ex:strongLocalMin}
Let $V = \{1,\cdots, d\}, \alpha > 0$, and $G, H: 2^V \to \R$ be set cover functions defined as $G(X) = \alpha |\bigcup_{i \in X} U^G_i|$, where $U^G_1 = \{1\}, U^G_2 = U^G_3= \{2\}, U^G_4 = \cdots =  U^G_d = \{3\}$ and $H(X) = \alpha  |\bigcup_{i \in X} U^H_i|$, where $U^H_1 = U^H_4  = \cdots =  U^H_d = \{1\}, U^H_2= \{2\}, U^H_3 = \{3\}$. Then $G$ and $H$ are submodular; see e.g., \citep[Section 6.3]{Bach2013}.
Consider $X = \{1\}$, $X$ is a local minimum since adding or removing any element results in the same objective $F(X) = 0$ or larger. We argue that $X$ also satisfies the rest of the guarantees in 
\Cref{corr:LocMinDCA}, i.e., $F(X) \leq F(S_\ell^{\sigma_i})$ for all $\ell \in V$, where $\sigma_2, \cdots, \sigma_d \in S_{d}$ correspond to decreasing orders of $\1_X$ with $\sigma_i(2) = i$. Each $\sigma_i$ admits $(d-2)!$ valid choices.
Note that the only possible values of $F(S_\ell^{\sigma_i})$ are $0, \alpha$ and $-\alpha$, with $-\alpha$ achieved only at $S_3^{\sigma_2} = S_3^{\sigma_3} = \{1,2,3\}$ with the choices of $\sigma_2$ starting with $(1,2,3)$ and the choices of $\sigma_3$ starting with $(1, 3, 2)$. 
So, for any other choices of $\sigma_2$ and $\sigma_3$, $X$ satisfies the guarantees in \cref{corr:LocMinDCA}. If $\sigma_i$'s are chosen uniformly at random, $X$ would satisfy the guarantees in \cref{corr:LocMinDCA} with probability $1 - \tfrac{2}{d-2}$. 
On the other hand, any strong local minimum $\hat{X}$ must contain $\{2, 3\}$ since otherwise the set $X' = \hat{X} \cup (\{2, 3\} \setminus \hat{X}) \supset \hat{X}$ has a smaller objective $F(X') = F(\hat{X}) - \alpha$ leading to a contradiction. It follows then that any strong local minimum will satisfy $F(\hat{X}) \leq F(\{2, 3\}) = -\alpha$, which is also the optimal solution, and arbitratily better than the objective achieved by $X$.
\end{example}

\mtodo{In the above example, DCA with $\rho > 1$ would not converge at $x^k=(1,0 \cdots, 0)$ for any $\epsilon < \tfrac{1}{\rho}$, since any $(y^k - \rho x^k) \in \p h_L(x^k)$ will have $(y^k - \rho x^k)_1 = 0$, hence $x^{k+1} \in  \argmin_{x \in [0,1]^d} g_L(x) - \ip{x}{y^k} + \rho \|x \|^2 = x_1 + \max \{x_2, x_3\} + \max \{x_4, \cdots, x_d\}  - \ip{x}{y^k}+ \rho \|x \|^2$ will  have $x^{k+1}_1 = 1 - \tfrac{1}{\rho} \in [0,1)$. Thus $\| x^{k+1} - x^k\|_2 > \tfrac{1}{\rho} > \epsilon$ and $f(x^k) - f(x^{k+1}) > \epsilon$ for a proper choice of $\epsilon_x$.
Can we find an example where DCA with any $\rho$ (with $O(d)$ permutations, no heuristics) converges to a point which is much worse than what CDCA converges to?}

\section{Special Cases of DS Minimization}\label{sec:specialCases}

In this section, we discuss  some implications of our results to some special cases of the DS problem \eqref{eq:DS}. To that end, we define two types of approximate submodularity and supermodularity, and show how they are related.

First, we recall the notions of weak DR-submodularity/supermodularity, which were introduced in \cite{Lehmann2006} and \cite{Bian2017a}, respectively.

\begin{definition}\label{def:WDR}
A set function $F$ is \emph{$\alpha$-weakly DR-submodular}, with $\alpha > 0$, if 
\[F( i \mid A) \geq \alpha F( i \mid B), \text{ for all $ A \subseteq B, i \in V \setminus B$}. \]
Similarly, $F$ is $\beta$-weakly DR-supermodular, with $\beta > 0$, if 
\[ F( i \mid B)  \geq \beta F( i \mid A), \text{ for all $A \subseteq B, i \in V \setminus B$}. \]
We say that $F$ is \emph{$(\alpha,\beta)$-weakly DR-modular} if it satisfies both properties.
\end{definition}
In the above definition, if $F$ is non-decreasing, then $\alpha, \beta \in (0,1]$, if it is non-increasing, then $\alpha, \beta \geq 1$, and if it is neither (non-monotone) then $\alpha = \beta = 1$. 
 $F$ is submodular (supermodular) iff $\alpha = 1$ ($\beta = 1$) and modular iff both $\alpha = \beta = 1$.

Next, we recall the following characterizations of submodularity and supermodularity:  A set function $F$ is submodular if $F(A) + F(B) \geq F(A \cup B) + F(A \cap B)$ for all $A, B \subseteq V$, and supermodular  if $F(A) + F(B) \leq F(A \cup B) + F(A \cap B)$. We introduce other notions of approximate submodularity and supermodularity based on these properties.

\begin{definition} \label{def:WSub}
A set function $F$ is \emph{$\alpha$-submodular}, with $\alpha > 0$, if 
\[F(A) + F(B) \geq \alpha (F(A \cup B) + F(A \cap B)), \text{ for all $ A, B \subseteq V$}. \]
Similarly, $F$ is $\beta$-supermodular, with $\beta > 0$, if 
\[ \beta (F(A) + F(B)) \leq F(A \cup B) + F(A \cap B), \text{ for all $A, B \subseteq V$}. \]
We say that $F$ is \emph{$(\alpha,\beta)$-modular} if it satisfies both properties.
\end{definition}
In the above definition, if $F$ is non-negative, then $\alpha, \beta \in (0,1]$, if it is non-positive, then $\alpha, \beta \geq 1$, and if it is neither then $\alpha = \beta = 1$. 
$F$ is submodular (supermodular) iff $\alpha = 1$ ($\beta = 1$) and modular iff both $\alpha = \beta = 1$.

The two types of approximate submodularity and supermodularity are related as follows.
\begin{proposition}\label{prop:WDR-WSub}
$F$ is $\alpha$-weakly DR submodular iff
\begin{equation}\label{eq:WDRsub}
F(A) + \alpha F(B) \geq F(A \cap  B) + \alpha F(A \cup B), \forall A,B \subseteq V.
\end{equation}
If $F$ is also normalized, then $F$ is $\alpha$-submodular.
Similarly, $F$ is $\beta$-weakly DR supermodular iff
\begin{equation}\label{eq:WDRsup}
F(A) + \frac{1}{\beta} F(B) \leq F(A \cap  B) + \frac{1}{\beta} F(A \cup B), \forall A,B \subseteq V.
\end{equation}
If $F$ is also normalized, then $F$ is $\beta$-supermodular.
\end{proposition}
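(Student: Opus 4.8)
The plan is to prove each of the two equivalences first, and then read off the normalized consequence; I treat the submodular case in detail, since the supermodular case is obtained by reversing every inequality and replacing $\alpha$ by $1/\beta$. For the forward direction of the first equivalence, fix $A, B \subseteq V$ and enumerate $A \setminus B = \{i_1, \dots, i_p\}$. Writing $S_k = \{i_1, \dots, i_k\}$ with $S_0 = \emptyset$, I would telescope both differences as
\[
F(A) - F(A \cap B) = \sum_{k=1}^{p} F\big(i_k \mid (A\cap B)\cup S_{k-1}\big), \qquad F(A\cup B) - F(B) = \sum_{k=1}^{p} F\big(i_k \mid B \cup S_{k-1}\big).
\]
Since $(A\cap B)\cup S_{k-1} \subseteq B \cup S_{k-1}$ and $i_k \notin B \cup S_{k-1}$, the $\alpha$-weak DR-submodular inequality of \cref{def:WDR} applies term by term, and summing gives $F(A) - F(A\cap B) \geq \alpha\big(F(A\cup B) - F(B)\big)$, which is exactly \eqref{eq:WDRsub}.

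For the reverse direction I would specialize \eqref{eq:WDRsub} to the pair $A \cup \{i\}$ and $B$, where $A \subseteq B$ and $i \in V \setminus B$. Then $(A\cup\{i\}) \cap B = A$ and $(A\cup\{i\}) \cup B = B \cup \{i\}$, so \eqref{eq:WDRsub} reads $F(A\cup\{i\}) + \alpha F(B) \geq F(A) + \alpha F(B\cup\{i\})$, i.e.\ $F(i\mid A) \geq \alpha F(i\mid B)$, recovering $\alpha$-weak DR-submodularity. The supermodular equivalence \eqref{eq:WDRsup} follows by the same telescoping and specialization after flipping the inequalities.

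It remains to deduce $\alpha$-submodularity under normalization. The key observation is that setting $B = \emptyset$ in \eqref{eq:WDRsub} and using $F(\emptyset) = 0$ yields $F(A) \geq \alpha F(A)$, i.e.\ $(1-\alpha)F(A) \geq 0$ for every $A \subseteq V$; this is exactly the sign information that normalization supplies, without assuming non-negativity outright. Combining it with the algebraic identity
\[
F(A) + F(B) - \alpha F(A\cup B) - \alpha F(A\cap B) = \big[F(A) + \alpha F(B) - \alpha F(A\cup B) - F(A\cap B)\big] + (1-\alpha)\big(F(B) + F(A\cap B)\big),
\]
in which the bracketed term is nonnegative by \eqref{eq:WDRsub} and the trailing term is nonnegative by the sign condition, gives $\alpha$-submodularity. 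For the supermodular statement I would scale \eqref{eq:WDRsup} by $\beta > 0$ and use the analogous identity together with the sign condition $(1-\beta)F(A) \geq 0$, again obtained by taking $B = \emptyset$. The main obstacle is not any single calculation but the bookkeeping of the telescoping decomposition together with the realization that the $B = \emptyset$ specialization is precisely what converts the pointwise inequality into the symmetric $\alpha$-submodular form; once that identity is in hand the remaining steps are routine.
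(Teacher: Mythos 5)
Your proof is correct and follows essentially the same route as the paper's: the forward directions use the identical telescoping of marginal gains along $A \setminus B$, and the reverse directions use the identical specialization to the pair $A \cup \{i\}$, $B$ with $A \subseteq B$ and $i \in V \setminus B$. The one place you diverge is the sign condition $(1-\alpha)F(X) \geq 0$ needed for the normalized consequence: the paper derives it from monotonicity (if $\alpha < 1$ then $F$ is non-decreasing, hence $F \geq F(\emptyset) = 0$; if $\alpha > 1$ then $F$ is non-increasing, hence $F \leq 0$), whereas you obtain it in one line by setting $B = \emptyset$ in \eqref{eq:WDRsub} and invoking $F(\emptyset) = 0$ — a slightly more self-contained derivation that avoids appealing to the monotonicity facts stated after \cref{def:WDR}. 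Your closing algebraic identity is just a repackaging of the paper's chain $F(A)+F(B) \geq F(A)+\alpha F(B) \geq F(A\cap B)+\alpha F(A\cup B) \geq \alpha\big(F(A\cap B)+F(A\cup B)\big)$, so the content is the same; both treatments of the supermodular half (flip inequalities, replace $\alpha$ by $1/\beta$, rescale) are likewise equivalent.
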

\begin{proof}
Given an $\alpha$-weakly DR submodular function $F$, let $A \setminus B = \{i_1, i_2, \cdots, i_r\}$. Then
\begin{align*}
F(A \cap B \cup \{i_1, \cdots, i_k\}) - F(A \cap B \cup  \{i_1, \cdots, i_{k-1}\}) &\geq \alpha (F(B \cup\{i_1, \cdots, i_k\})  - F(B  \cup  \{i_1, \cdots, i_{k-1})), \forall k=1, \cdots, r \\  
\Rightarrow F(A ) - F(A \cap B) &\geq \alpha (F(A \cup B)  - F(B)) \tag{ by telescoping sum}  
\end{align*} 
Rearranging the terms yields \eqref{eq:WDRsub}.
If $F$ is also normalized, then $F$ is $\alpha$-submodular. To see this, note that  if $\alpha < 1$, $F$ is non-decreasing and hence $F(X) \geq F(\emptyset)= 0$, and if $\alpha > 1$, $F$ is non-increasing and hence $F(X) \leq F(\emptyset)= 0$. Thus for any $\alpha >0$, we have $F(X) \geq \alpha F(X)$ for any $X \subseteq V$. In particular, applying this to $X = B$ and $X = A \cap B$, we obtain 
\[F(A) + F(B) \geq F(A) + \alpha F(B) \geq F(A \cap  B) + \alpha F(A \cup B)  \geq \alpha ( F(A \cap  B) + F(A \cup B) ).\]

Conversely, if $F$ satisfies \eqref{eq:WDRsub}, then for all $A' \subseteq B' \subseteq V$, let $A = A' \cup \{i\}, B = B'$, then 
\begin{align*}
F(A) + \alpha F(B) &\geq F(A \cap  B) + \alpha F(A \cup B) \\
\Rightarrow F( A' \cup \{i\}) + \alpha F(B') &\geq F( A') + \alpha F( B' \cup \{i\})\\
\Rightarrow F( i \mid A')  &\geq  \alpha F( i \mid B').
\end{align*}
Hence $F$ is $\alpha$-weakly DR submodular.
The remaining claims follow similarly.
\end{proof}


\subsection{Approximately submodular functions} \label{sec:ApproxSub}

\looseness=-1 We consider special cases of the DS problem \eqref{eq:DS} where $F$ is approximately submodular. In \cref{sec:CDCA}, we showed that CDCA with integral iterates $x^k$ and CDCAR converge to an $\epsilon'$-strong local minimum of $F$ when $F(X^k) - F(X^{k+1}) \leq \epsilon$, where 
\begin{equation}\label{eq:epsilon'}
\epsilon' = \begin{cases} \sqrt{2 \rho d (\epsilon + \epsilon_x)} &\text{ if $\epsilon + \epsilon_x \leq \tfrac{\rho d}{2}$} \\ \tfrac{\rho d}{2} + \epsilon + \epsilon_x &\text{otherwise}.
\end{cases}
\end{equation}
The following two propositions relate the approximate strong local minima of an approximately submodular function to its global minima, for two different notions of approximate submodularity. 
 
\begin{proposition}\label{prop:localMinApproxSub}
If $F$ is an $\alpha$-submodular function, then for any $\varepsilon \geq 0$, any $\varepsilon$-strong local minimum $\hat{X}$ of $F$ satisfies $F(\hat{X}) \leq  \frac{1}{2 \alpha -1} ( \min_{X \subseteq V} F(X) + 2 \varepsilon \alpha)$.
\end{proposition}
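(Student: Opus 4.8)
The plan is to apply the defining $\alpha$-submodular inequality to the pair $A = \hat{X}$ and $B = X^\star$, where $X^\star$ is a global minimizer of $F$, and then exploit that $\hat{X} \cap X^\star$ is a subset of $\hat{X}$ while $\hat{X} \cup X^\star$ is a superset of $\hat{X}$, so that the $\varepsilon$-strong local minimality of $\hat{X}$ controls the two terms on the right-hand side.

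First I would fix $X^\star \in \argmin_{X \subseteq V} F(X)$, so that $F(X^\star) = \min_{X \subseteq V} F(X)$. Writing \cref{def:WSub} with $A = \hat{X}$ and $B = X^\star$ gives
\[ F(\hat{X}) + F(X^\star) \geq \alpha\big(F(\hat{X} \cup X^\star) + F(\hat{X} \cap X^\star)\big). \]
Next, since $\hat{X} \cap X^\star \subseteq \hat{X}$ and $\hat{X} \cup X^\star \supseteq \hat{X}$, the definition of an $\varepsilon$-strong local minimum (\cref{def:localmin}) yields $F(\hat{X}) \leq F(\hat{X} \cap X^\star) + \varepsilon$ and $F(\hat{X}) \leq F(\hat{X} \cup X^\star) + \varepsilon$, i.e. both $F(\hat{X} \cap X^\star)$ and $F(\hat{X} \cup X^\star)$ are at least $F(\hat{X}) - \varepsilon$. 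Substituting these lower bounds into the previous display gives
\[ F(\hat{X}) + F(X^\star) \geq \alpha\big(2F(\hat{X}) - 2\varepsilon\big) = 2\alpha F(\hat{X}) - 2\alpha \varepsilon. \]

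Finally I would collect the $F(\hat{X})$ terms, obtaining $(2\alpha - 1) F(\hat{X}) \leq F(X^\star) + 2\alpha\varepsilon$, and divide by $2\alpha - 1 > 0$ to conclude that $F(\hat{X}) \leq \tfrac{1}{2\alpha - 1}\big(\min_{X \subseteq V} F(X) + 2\varepsilon\alpha\big)$, as claimed. There is no real analytic obstacle here; the argument is a single application of the $\alpha$-submodular inequality combined with the two-sided local optimality. The only point requiring care is the direction of the final division: it preserves the inequality precisely because $2\alpha - 1 > 0$, i.e. $\alpha > \tfrac12$, which is exactly the regime in which the stated bound is meaningful (for $\alpha = 1$ it recovers the fact that a strong local minimum of a submodular function is a global minimum up to $2\varepsilon$, consistent with the remark in \cref{sec:prelim}).
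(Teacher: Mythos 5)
Your proof is correct and follows essentially the same route as the paper's: combine the two strong-local-minimality inequalities for $\hat{X} \cap X^\star$ and $\hat{X} \cup X^\star$ with the $\alpha$-submodular inequality applied to the pair $(\hat{X}, X^\star)$, then rearrange and divide by $2\alpha - 1$. The only differences are cosmetic — you apply the $\alpha$-submodular inequality first and substitute the local-minimality bounds into it, whereas the paper sums the local-minimality bounds first and then invokes $\alpha$-submodularity — and your explicit remark that the final division requires $2\alpha - 1 > 0$ merely makes precise a condition the paper leaves implicit.
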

\begin{proof}
Let $X^*$ be an optimal solution. Since $\hat{X}$ is an $\varepsilon$-strong local minimum of $F$, we have $F(\hat{X}) \leq F(\hat{X} \cup X^*) + \varepsilon$ and $F(\hat{X}) \leq F(\hat{X} \cap X^*) + \varepsilon$. Hence, 
\begin{align*}
2 F(\hat{X}) &\leq F(\hat{X} \cup X^*) + F(\hat{X} \cap X^*) + 2 \varepsilon\\
2 F(\hat{X}) &\leq \frac{1}{\alpha} ( F(\hat{X}) + F(X^*)) + 2 \varepsilon\\
F(\hat{X}) &\leq \frac{1}{2 \alpha -1} ( F(X^*) + 2 \varepsilon \alpha).
\end{align*}
\end{proof}
\cref{prop:localMinApproxSub} applies to the solutions returned by CDCA with integral iterates $x^k$ and CDCAR on the DS problem \eqref{eq:DS}, with $\varepsilon = \epsilon'$. Moreover,  when $F$ is submodular, we have $\alpha = 1$, then any $\varepsilon$-strong local minimum is a $2 \varepsilon$-global minimum of $F$ in this case. In particular, if $H$ is modular, DCA and CDCA with integral iterates $x^k$, DCAR, and CDCAR, all converge to a $2 \epsilon'$-global minimum of $F$. This holds for the DCA variants since by \cref{thm:convergence}-\ref{itm:convergence}, DCA converges to an $(\epsilon + \epsilon_x, 0)$-critical point of $g - h$, and when $H$ is modular, $h$ is differentiable, hence any $(\epsilon + \epsilon_x, 0)$-critical point of $g - h$ is also an $(\epsilon + \epsilon_x)$-strong critical point, and by \cref{corr:LocMinCDCA} it is also an $\epsilon'$-strong local minimum of $F$ if it is integral.

\begin{proposition}\label{prop:localMinWDRSub} Given $F = G - H$, if $G$ is submodular and $H$ is $\beta$-weakly DR-supermodular, then for any $\varepsilon \geq 0$, any $\varepsilon$-strong local minimum $\hat{X}$ of $F$ satisfies $F(\hat{X}) \leq  G(X^*)  - \beta H(X^*) + 2 \varepsilon$, where $X^*$ is a minimizer of $F$.
\end{proposition}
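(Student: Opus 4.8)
The plan is to mirror the proof of \cref{prop:localMinApproxSub}, but to replace the single $\alpha$-submodularity inequality used there by two separate ingredients applied to the two pieces of $F=G-H$: ordinary submodularity of $G$ on the $G$-part, and $\beta$-weak DR-supermodularity of $H$ (via the characterization \eqref{eq:WDRsup}) on the $H$-part.

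First I would let $X^*$ be a minimizer of $F$ and apply the $\varepsilon$-strong local minimality of $\hat{X}$ (\cref{def:localmin}) to the two comparison sets $\hat{X}\cap X^* \subseteq \hat{X}$ and $\hat{X}\cup X^* \supseteq \hat{X}$, which gives $F(\hat{X}) \leq F(\hat{X}\cap X^*) + \varepsilon$ and $F(\hat{X}) \leq F(\hat{X}\cup X^*) + \varepsilon$. Adding these two inequalities and writing $F=G-H$ yields $2F(\hat{X}) \leq [G(\hat{X}\cap X^*) + G(\hat{X}\cup X^*)] - [H(\hat{X}\cap X^*) + H(\hat{X}\cup X^*)] + 2\varepsilon$.

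Next I would bound the two bracketed quantities from opposite sides. For the $G$-terms I apply submodularity of $G$, namely $G(\hat{X}\cap X^*) + G(\hat{X}\cup X^*) \leq G(\hat{X}) + G(X^*)$; substituting and cancelling one copy of $G(\hat{X})$ leaves $F(\hat{X}) \leq G(X^*) + H(\hat{X}) - H(\hat{X}\cap X^*) - H(\hat{X}\cup X^*) + 2\varepsilon$. For the $H$-terms the key move is to invoke \eqref{eq:WDRsup} with the arguments \emph{swapped}, i.e.\ with $A = X^*$ and $B = \hat{X}$, and then multiply through by $\beta$; this produces $\beta H(X^*) + H(\hat{X}) \leq \beta H(\hat{X}\cap X^*) + H(\hat{X}\cup X^*)$, equivalently $H(\hat{X}) - H(\hat{X}\cup X^*) \leq \beta H(\hat{X}\cap X^*) - \beta H(X^*)$. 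Feeding this into the previous display collapses the right-hand side to $G(X^*) - (1-\beta) H(\hat{X}\cap X^*) - \beta H(X^*) + 2\varepsilon$.

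It then remains only to discard the residual term $-(1-\beta) H(\hat{X}\cap X^*)$, which is where normalization and monotonicity of $H$ enter: since $H$ is normalized and the monotonicity type of $H$ is tied to the sign of $1-\beta$ (as recorded after \cref{def:WDR}), the product $(1-\beta) H(\hat{X}\cap X^*)$ is nonnegative, so dropping it only weakens the bound and yields exactly $F(\hat{X}) \leq G(X^*) - \beta H(X^*) + 2\varepsilon$. I expect the main obstacle to be bookkeeping rather than depth: one must pick the correct orientation of \eqref{eq:WDRsup} (the swapped form $A=X^*,\,B=\hat{X}$, since the naive symmetric combination of the two local-minimality inequalities forces $\beta=1$ and fails otherwise), and then verify that the leftover $(1-\beta)H(\hat{X}\cap X^*)$ carries the right sign across all monotonicity regimes.
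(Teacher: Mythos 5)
Your proof is correct and is essentially the paper's own argument: both apply $\varepsilon$-strong local minimality at $\hat{X}\cap X^*$ and $\hat{X}\cup X^*$, use submodularity of $G$ on the union/intersection pair, invoke \eqref{eq:WDRsup} with $A=X^*$, $B=\hat{X}$ (the same orientation the paper uses in \cref{prop:WDR-WSub}), and dispose of the leftover term via the sign relation between $\beta$ and the monotonicity of the normalized $H$ — the paper hides this last step inside the inequality $H(\hat{X}\cap X^*)\leq\tfrac{1}{\beta}H(\hat{X}\cap X^*)$, which is equivalent to your condition $(1-\beta)H(\hat{X}\cap X^*)\geq 0$. The only difference is bookkeeping order (you cancel $G(\hat{X})$ before applying the $H$-inequality), not substance.
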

\begin{proof}
Since $\hat{X}$ is an $\varepsilon$-strong local minimum of $F$, we have $F(\hat{X}) \leq F(\hat{X} \cup X^*) + \varepsilon$ and $F(\hat{X}) \leq F(\hat{X} \cap X^*) + \varepsilon$.
By \cref{prop:WDR-WSub}, $H$ satisfies $ \frac{1}{\beta} H(\hat{X}) + H(X^*) \leq \frac{1}{\beta} H(\hat{X} \cup X^*) + H(\hat{X} \cap X^*) \leq \frac{1}{\beta} (H(\hat{X} \cup X^*) + H(\hat{X} \cap X^*))$. Hence, 
\begin{align*}
2 F(\hat{X}) &\leq F(\hat{X} \cup X^*) + F(\hat{X} \cap X^*) + 2 \varepsilon \\
2 F(\hat{X}) &\leq (G(\hat{X}) + G(X^*)) - ( H(\hat{X}) + \beta H(X^*)) + 2 \varepsilon \\
F(\hat{X}) &\leq  G(X^*) - \beta H(X^*) + 2 \varepsilon.
\end{align*}
\end{proof}
\looseness=-1 \Cref{prop:localMinWDRSub} again applies to the solutions returned by CDCA with integral iterates $x^k$ and CDCAR on the DS problem \eqref{eq:DS}, with $\varepsilon = \epsilon'$.
This guarantee matches the one provided in \citep[Corollary 1]{Halabi20} in this case (though the result therein does not require $H$ to be submodular), which is shown to be optimal \citep[Theorem 2]{Halabi20}. 

The following proposition shows that a similar result to \cref{prop:localMinWDRSub} holds under a weaker assumption (recall from \cref{corr:LocMinCDCA} that if $\hat{X}$ is an $\varepsilon$-strong local minimum of $F$ then $\1_{\hat{X}}$ is an $(\varepsilon, 0)$-critical point of $g - h$). 

\begin{proposition}\label{prop:localMinWDRSub-weaker}
Given $F = G - H$ where $G$ is submodular and $H$ is $\beta$-weakly DR-supermodular, $f= g -h$ as defined in \eqref{eq:DS-DCdecomposition}, $\varepsilon \geq 0$, let $\hat{x}$ be an $(\varepsilon, 0)$-critical point of $g - h$, with $\hat y \in \p_\varepsilon g(\hat{x}) \cap \p h(\hat{x})$, where $\hat y - \rho \hat x$ is computed as in \cref{prop:LEproperties}-\ref{itm:greedy}.
Then $\hat{X} = \round(\hat{x})$ satisfies $F(\hat{X}) \leq  G(X^*)  - \beta H(X^*) + \varepsilon'$, where $X^*$ is a minimizer of $F$, and $\varepsilon' = \sqrt{2 \rho d \varepsilon}$ if $\varepsilon \leq \tfrac{\rho d}{2}$ and $\tfrac{\rho d}{2} + \varepsilon $ otherwise.
\end{proposition}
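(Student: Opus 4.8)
The plan is to push the $(\varepsilon,0)$-criticality of $\hat x$ for the regularized decomposition through to an approximate subgradient statement for the \emph{unregularized} Lov\'asz extension, and then to combine the explicit greedy form of $\hat y - \rho\hat x$ with the weak DR-supermodularity of $H$. First I would set $\bar g := g_L + \delta_{[0,1]^d}$ and $\bar y := \hat y - \rho\hat x$. Since $G$ is submodular, $g_L$ is convex by \cref{prop:LEproperties}-\ref{itm:conv}, so $\bar g$ is convex with $\dom \bar g = [0,1]^d$ of diameter $D=\sqrt d$. Because $\hat y \in \p_\varepsilon g(\hat x)$ with $g = \bar g + \tfrac{\rho}{2}\|\cdot\|^2$, applying \cref{lem:eps-subdiff-reg} with $\Phi=\bar g$ gives $\bar y \in \p_{\varepsilon'}\bar g(\hat x)$ with exactly the stated $\varepsilon'$ (here $\sqrt{2\rho\varepsilon}\,D = \sqrt{2\rho d\varepsilon}$). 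By hypothesis $\bar y$ is the greedy vector from \cref{prop:LEproperties}-\ref{itm:greedy} for a decreasing order $\sigma$ of $\hat x$, i.e.\ $\bar y_{\sigma(k)} = H(\sigma(k)\mid S^\sigma_{k-1})$; crucially, the identity $h_L(\hat x) = \ip{\hat x}{\bar y}$ then holds by the very definition of the Lov\'asz extension (\cref{def:LE}), requiring no submodularity of $H$.

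Next I would use the rounding guarantee \cref{prop:LEproperties}-\ref{itm:round}, taking the same $\sigma$, to obtain $F(\hat X) \le f_L(\hat x) = g_L(\hat x) - h_L(\hat x)$. Testing the $\varepsilon'$-subgradient inequality for $\bar g$ at the feasible point $\1_{X^*}\in[0,1]^d$ yields, via \cref{prop:LEproperties}-\ref{itm:extension}, $G(X^*) = g_L(\1_{X^*}) \ge g_L(\hat x) + \ip{\bar y}{\1_{X^*}-\hat x} - \varepsilon'$. Rearranging and substituting $\ip{\hat x}{\bar y} = h_L(\hat x)$ and $\ip{\1_{X^*}}{\bar y} = \bar y(X^*)$ gives $g_L(\hat x) - h_L(\hat x) \le G(X^*) - \bar y(X^*) + \varepsilon'$, hence $F(\hat X) \le G(X^*) - \bar y(X^*) + \varepsilon'$.

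It remains to show $\bar y(X^*) \ge \beta H(X^*)$, which is the crux. I would order the elements of $X^*$ as $j_1,\dots,j_p$ following $\sigma$ and write $\bar y_{j_t} = H(j_t \mid B_t)$ with $B_t = S^\sigma_{\sigma^{-1}(j_t)-1}$ the set of all elements preceding $j_t$. Setting $A_t = \{j_1,\dots,j_{t-1}\} \subseteq B_t$ with $j_t\notin B_t$, the $\beta$-weak DR-supermodularity of $H$ (\cref{def:WDR}) gives $H(j_t\mid B_t) \ge \beta\, H(j_t\mid A_t)$ for each $t$. Summing over $t$ and telescoping, using $H(\emptyset)=0$, yields $\bar y(X^*) = \sum_t H(j_t\mid B_t) \ge \beta \sum_t H(j_t\mid A_t) = \beta H(X^*)$. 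Combining this with the previous display gives $F(\hat X) \le G(X^*) - \beta H(X^*) + \varepsilon'$, completing the proof.

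The main obstacle is precisely this last lower bound on $\bar y(X^*)$: the standard base-polyhedron estimate only gives the wrong direction $\bar y(X^*)\le H(X^*)$, and moreover $H$ is not assumed submodular here, so I cannot appeal to $B(H)$ at all. Instead I must exploit that $\bar y$ is the \emph{greedy} vector together with the DR-supermodular marginal comparison, telescoped along the chain induced by $\sigma$. Everything else is a routine chaining of the rounding inequality, the definitional Lov\'asz identity for $h_L$, and the regularization transfer lemma.
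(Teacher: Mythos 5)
Your proof is correct, and its skeleton coincides with the paper's: both transfer the $(\varepsilon,0)$-criticality of $\hat x$ from $g$ to the unregularized $g_L + \delta_{[0,1]^d}$ via \cref{lem:eps-subdiff-reg} with $D=\sqrt d$, test the resulting $\varepsilon'$-subgradient inequality at $\1_{X^*}$, and finish with the rounding bound of \cref{prop:LEproperties}-\ref{itm:round}. The one place you genuinely diverge is the supermodularity step: the paper invokes \citep[Lemma 1]{Halabi20}, which asserts that the greedy vector $\bar y = \hat y - \rho\hat x$ is a global $\beta$-approximate supergradient of $h_L$ at $\hat x$, namely $\beta h_L(x) \leq h_L(\hat x) + \ip{\bar y}{x - \hat x}$ for all $x \in \R^d$, and combines this with the subgradient inequality for $g_L$. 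You instead prove from scratch exactly the instance of that lemma needed at $x = \1_{X^*}$: thanks to the definitional identity $h_L(\hat x) = \ip{\hat x}{\bar y}$ (valid for the greedy vector irrespective of any submodularity of $H$), the cited inequality evaluated at $\1_{X^*}$ is equivalent to $\bar y(X^*) \geq \beta H(X^*)$, which is precisely what your telescoping of the $\beta$-weakly DR-supermodular marginal comparison along the order $\sigma$ delivers (using $H(\emptyset)=0$). So your argument is self-contained where the paper's delegates to an external lemma; the trade-off is that the cited lemma yields the supergradient inequality at every point, whereas your pointwise version suffices for this statement. You are also right that the base-polyhedron route is unavailable here: $H$ is not assumed submodular, and even when it is, $\bar y \in B(H)$ only gives $\bar y(X^*) \leq H(X^*)$, the wrong direction.
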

\begin{proof}
Since $\hat y \in \p_\varepsilon g(\hat{x})$, we have by \cref{lem:eps-subdiff-reg} that $\hat y  - \rho \hat x \in \p_{\varepsilon'} (g_L + \delta_{[0,1]^d})(\hat x)$. Hence, for all $x \in [0,1]^d$
\begin{equation}\label{eq:gSubgrad}
g_L(x) \geq g_L(\hat x) + \ip{\hat y - \rho \hat x}{x - \hat x} - \varepsilon'.
\end{equation}
Since $H$ is $\beta$-weakly DR-supermodular and $\hat y - \rho \hat x$ is computed as in \cref{prop:LEproperties}-\ref{itm:greedy}, we have by \citep[Lemma 1]{Halabi20}, for all $x \in \R^d$, 
\begin{equation}\label{eq:hSupergrad}
- \beta h_L(x) \geq -h_L(\hat x) - \ip{\hat y - \rho \hat x}{x - \hat x}.
\end{equation}
Combining \eqref{eq:gSubgrad} and \eqref{eq:hSupergrad}, we obtain
\[g_L(x)- \beta h_L(x)  \geq g_L(\hat x)-h_L(\hat x) - \varepsilon'.\]
In particular, taking $x^* = \1_{X^*}$, we have by \cref{prop:LEproperties}-\ref{itm:extension},\ref{itm:round},
\[G(X^*) - \beta H(X^*) = g_L(x^*)- \beta h_L(x^*) \geq f_L(\hat x) - \varepsilon' \geq F(\hat X) - \varepsilon'.\]
\end{proof}
\Cref{prop:localMinWDRSub-weaker} applies to the solution returned by any variant of DCA and CDCA (including ones with non-integral iterates $x^k$) on the DS problem \eqref{eq:DS}, with $\varepsilon = \epsilon + \epsilon_x, \varepsilon' = \epsilon'$. In particular, if $H$ is modular ($\beta = 1$), they all obtain an $\epsilon'$-global minimum of $F$.

\mtodo{Note that this result does not contradict \cref{ex:localMinRound}, since here we have the extra assumption that $H$ is $\beta$-weakly DR-supermodular. Note also that it is still possible for $\hat{X}$ satisfying the guarantee in \cref{prop:localMinWDRSub} to not be an $\epsilon'$-local minimum of $F$, if there exists $X'$ such that $|X' \Delta \hat X| = 1$ and $F(X') < F(X^*) + (1-  \beta) H(X^*)$.}

\mtodo{How are weak submodularity and weak supermodularity related to $\alpha$-submodularity and $\beta$-supermodularity. I tried to check if either conditions imply the other, but it does not seem to be the case. If not, can we provide counter-examples?}

\subsection{Approximately supermodular functions}

We consider special cases of the DS problem \eqref{eq:DS} where $F$ is approximately supermodular.
In \cref{sec:DCA}, we showed that DCA with integral iterates $x^k$ and DCAR converge to an $\epsilon'$-local minimum of $F$ when $F(X^k) - F(X^{k+1}) \leq \epsilon$, with $\epsilon'$ defined in \eqref{eq:epsilon'}.
The following proposition shows that approximate local minima of a supermodular function are also approximate strong local minima. 
 
\begin{proposition}[Lemma 3.3 in \cite{Feige2011}]\label{prop:localMinStrong}
If $F$ is a supermodular function, then for any $\varepsilon \geq 0$, any $\varepsilon$-local minimum of $F$ is also an $\varepsilon d$-strong local minimum of $F$.
\end{proposition}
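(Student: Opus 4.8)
The plan is to verify the strong-local-minimality inequality $F(X)\le F(Y)+\epsilon d$ separately for the two cases $Y\supseteq X$ and $Y\subseteq X$. In each case I would write the relevant difference of objective values as a telescoping sum of marginal gains and bound each summand using supermodularity together with the appropriate half of the $\epsilon$-local minimality hypothesis. The first thing to record is that $F$ supermodular is equivalent to $-F$ submodular, so the marginal gains of $F$ are \emph{non-decreasing}: $F(i\mid A)\le F(i\mid B)$ whenever $A\subseteq B$ and $i\notin B$. Meanwhile the two defining conditions of an $\epsilon$-local minimum read $F(i\mid X)\ge -\epsilon$ for all $i\in V\setminus X$ (additions) and $F(i\mid X\setminus i)\le \epsilon$ for all $i\in X$ (deletions).

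For the case $Y\supseteq X$, I would enumerate $Y\setminus X=\{i_1,\dots,i_m\}$ with $m\le d$ and expand
\[
F(Y)-F(X)=\sum_{j=1}^m F\big(i_j\mid X\cup\{i_1,\dots,i_{j-1}\}\big).
\]
Since $X\subseteq X\cup\{i_1,\dots,i_{j-1}\}$, monotonicity of the marginal gains gives $F(i_j\mid X\cup\{i_1,\dots,i_{j-1}\})\ge F(i_j\mid X)\ge -\epsilon$, so $F(Y)-F(X)\ge -m\epsilon\ge -\epsilon d$, which rearranges to $F(X)\le F(Y)+\epsilon d$.

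For the case $Y\subseteq X$, I would enumerate $X\setminus Y=\{i_1,\dots,i_m\}$ with $m\le d$, set $X_j=X\setminus\{i_1,\dots,i_j\}$ (so $X_0=X$ and $X_m=Y$), and expand
\[
F(X)-F(Y)=\sum_{j=1}^m F(i_j\mid X_j).
\]
Here $X_j\subseteq X\setminus i_j$ with $i_j\notin X\setminus i_j$, so the increasing-gains property yields $F(i_j\mid X_j)\le F(i_j\mid X\setminus i_j)\le\epsilon$, whence $F(X)-F(Y)\le m\epsilon\le\epsilon d$, again giving $F(X)\le F(Y)+\epsilon d$. Combining the two cases proves the claim.

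The argument is short, so the only points requiring care are bookkeeping ones rather than a genuine obstacle: one must get the direction of the marginal-gain inequality correct (supermodularity gives \emph{increasing}, not diminishing, gains), and one must pair each case with the correct half of the local-minimality definition, namely $F(i\mid X)\ge-\epsilon$ for the additions in the case $Y\supseteq X$ and $F(i\mid X\setminus i)\le\epsilon$ for the deletions in the case $Y\subseteq X$. The step that turns $\epsilon$ into $\epsilon d$ is simply bounding the number of telescoped terms by $m\le d$.
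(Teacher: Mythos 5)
Your proof is correct and follows essentially the same route as the paper's: in each of the two cases ($Y\subseteq X$ and $Y\supseteq X$) you telescope the difference of objective values into marginal gains, bound each marginal via the increasing-gains property of supermodular functions by a marginal at $X$ itself, and then apply the corresponding half of the $\epsilon$-local-minimality definition, losing a factor of at most $d$ from the number of terms. The only cosmetic difference is that you peel elements off $X$ one at a time in the $Y\subseteq X$ case, whereas the paper builds up from $X'$ by adding them, which is the same telescoping sum read in reverse.
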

\begin{proof}
The proof follows in a similar way to \citep[Lemma 3.3]{Feige2011}, we include it for completeness.
Given an $\varepsilon$-local minimum $X$ of $F$, for any $X' \subseteq X$, let $X \setminus X' = \{i_1, \cdots, i_k\}$, then
\begin{align*}
F(X) - F(X') &= \sum_{\ell=1}^{k} F(i_{\ell} \mid X' \cup \{i_1, \cdots, i_{\ell-1} \} ) \\ 
&\leq \sum_{\ell=1}^{k} F(i_{\ell} \mid X \setminus i_{\ell})\\
&\leq d \varepsilon
\end{align*}
We can show in a similar way that $F(X) \leq F(X') + d \varepsilon$ for any $X' \supseteq X$.
\end{proof}

The following proposition relates the approximate strong local minima of an approximately supermodular function to its global minima.

\begin{proposition}\label{prop:LocalMinWSup}
If $F$ is a non-positive $\beta$-supermodular function, then for any $\varepsilon \geq 0$, any $\varepsilon$-strong local minimum $\hat{X}$ of $F$ satisfies $\min\{F(\hat{X}), F(V \setminus \hat{X})\} \leq \tfrac{1}{3 \beta^2} \min_{X \subseteq V} F(X) + \tfrac{2}{3} \varepsilon$. In addition, if $F$ is also symmetric, then $\hat{X}$ satisfies $F(\hat{X}) \leq \tfrac{1}{2 \beta} \min_{X \subseteq V} F(X) + \varepsilon$.
\end{proposition}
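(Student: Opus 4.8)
The plan is to adapt the local-search analysis of Feige, Mirrokni and Vondrák for unconstrained submodular maximization — the $\beta=1$ case quoted as \citep[Theorem 3.4]{Feige2011} — to the relaxed factor $\beta$, working directly with $F$ on the minimization side. Write $X^*$ for a global minimizer and $F^\star = F(X^*)$; note first that $F$ being non-positive forces $\beta \geq 1$ (put $A=B=X$ in \cref{def:WSub} to get $(\beta-1)F(X)\le 0$). Testing strong local minimality of $\hat{X}$ against its two natural neighbours $\hat{X}\cap X^*\subseteq \hat{X}$ and $\hat{X}\cup X^*\supseteq \hat{X}$ records
\[ F(\hat{X}) \leq F(\hat{X} \cap X^*) + \varepsilon, \qquad F(\hat{X}) \leq F(\hat{X} \cup X^*) + \varepsilon. \]

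The engine is two applications of $\beta$-supermodularity to carefully chosen pairs. Applying it to $A=\hat{X}\cap X^*$, $B=X^*\setminus\hat{X}$ (union $X^*$, intersection $\emptyset$) gives $\beta\bigl(F(\hat{X}\cap X^*)+F(X^*\setminus\hat{X})\bigr)\le F(X^*)+F(\emptyset)\le F^\star$, using $F(\emptyset)\le 0$. Applying it to $A=\hat{X}\cup X^*$, $B=V\setminus\hat{X}$ (union $V$, intersection $X^*\setminus\hat{X}$) gives $\beta\bigl(F(\hat{X}\cup X^*)+F(V\setminus\hat{X})\bigr)\le F(V)+F(X^*\setminus\hat{X})\le F(X^*\setminus\hat{X})$, using $F(V)\le 0$. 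Substituting the second into the first to cancel the intermediate term $F(X^*\setminus\hat{X})$ yields
\[ \beta F(\hat{X} \cap X^*) + \beta^2 F(\hat{X} \cup X^*) + \beta^2 F(V \setminus \hat{X}) \leq F^\star. \]
Feeding in the two local-minimality bounds, then lower-bounding $F(\hat{X})$ and $F(V\setminus\hat{X})$ by $m:=\min\{F(\hat{X}),F(V\setminus\hat{X})\}$ and collecting terms, gives $(\beta+2\beta^2)\,m \leq F^\star + (\beta+\beta^2)\varepsilon$.

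The only delicate step is then the constant simplification. Dividing through and using $F^\star\le 0$ with $\beta+2\beta^2\le 3\beta^2$ for $\beta\ge 1$ turns the leading coefficient into $\tfrac{1}{3\beta^2}$ (the larger coefficient only makes the non-positive $F^\star$ term smaller), while $\tfrac{1+\beta}{1+2\beta}\le \tfrac23$ for $\beta\ge 1$ handles the $\varepsilon$ term, delivering $m\le \tfrac{1}{3\beta^2}F^\star+\tfrac23\varepsilon$. For the symmetric case I would run a shorter variant: apply $\beta$-supermodularity once to $A=\hat{X}\cup X^*$, $B=(V\setminus\hat{X})\cup X^*$, whose union is $V$ and intersection is $X^*$, so $\beta\bigl(F(\hat{X}\cup X^*)+F((V\setminus\hat{X})\cup X^*)\bigr)\le F(V)+F^\star\le F^\star$; symmetry rewrites $F((V\setminus\hat{X})\cup X^*)=F(\hat{X}\setminus X^*)$, and both $F(\hat{X}\cup X^*)$ and $F(\hat{X}\setminus X^*)$ are $\ge F(\hat{X})-\varepsilon$ by local minimality (superset and subset), giving $2\beta(F(\hat{X})-\varepsilon)\le F^\star$ and hence $F(\hat{X})\le \tfrac{1}{2\beta}F^\star+\varepsilon$. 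The main obstacle is purely bookkeeping: selecting the set pairs so that $F(X^*\setminus\hat{X})$ cancels, and tracking signs so that $\beta\ge 1$ and $F^\star\le 0$ push every constant in the favourable direction.
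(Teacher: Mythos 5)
Your proof is correct and follows essentially the same route as the paper's: strong local minimality tested at $\hat{X}\cap X^*$ and $\hat{X}\cup X^*$, together with $\beta$-supermodularity applied to the same two pairs $(\hat{X}\cap X^*,\, X^*\setminus\hat{X})$ and $(\hat{X}\cup X^*,\, V\setminus\hat{X})$; your substitution bookkeeping even yields the slightly tighter intermediate bound $(\beta+2\beta^2)\,m \leq F^\star + (\beta+\beta^2)\varepsilon$ before relaxing to the stated constants. The symmetric case is just the complement-dual of the paper's argument (supermodularity on a pair with union $V$ and intersection $X^*$, rather than union $X^*$ and intersection $\emptyset$) and is equally valid.
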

\begin{proof}
This proposition generalizes \citep[Theorem 3.4]{Feige2011}. The proof follows in a similar way.
Let $X^*$ be an optimal solution. Since $\hat{X}$ is an $\varepsilon$-strong local minimum of $F$, we have $F(\hat{X}) \leq F(\hat{X} \cup X^*) + \varepsilon$ and $F(\hat{X}) \leq F(\hat{X} \cap X^*) + \varepsilon$. Hence, 
\begin{align*}
2 F(\hat{X}) + F(V \setminus \hat{X}) &\leq F(\hat{X} \cap X^*) +  F(\hat{X} \cup X^*) + F(V \setminus \hat{X}) + 2 \varepsilon \\
&\leq  \tfrac{1}{\beta} ( F(\hat{X} \cap X^*) +  F(X^* \setminus \hat{X}) + F(V)) + 2 \varepsilon \\
&\leq  \tfrac{1}{\beta^2} (F(X^*) + F(\emptyset) + F(V)) + 2 \varepsilon.
\end{align*}
If $F$ is also symmetric then 
\begin{align*}
2 F(\hat{X}) &\leq F(\hat{X} \cap X^*) +  F(\hat{X} \cup (V \setminus X^*))  + 2 \varepsilon \\
&= F(\hat{X} \cap X^*) +  F((V \setminus \hat{X}) \cap  X^*)  + 2 \varepsilon \\
&= \tfrac{1}{\beta} ( F(X^*) +  F(\emptyset))  + 2 \varepsilon.
\end{align*}
\end{proof}
\cref{prop:localMinApproxSub} applies to the solutions returned by CDCA with integral iterates $x^k$ and CDCAR on the DS problem \eqref{eq:DS}, with $\varepsilon = \epsilon'$.
Moreover, when $F$ is non-positive supermodular, we have $\beta=1$, then 
the solutions returned by CDCA with integral iterates $x^k$ and CDCAR satisfy $\min\{F(\hat{X}), F(V \setminus \hat{X})\} \leq \tfrac{1}{3} F^\star + \tfrac{2}{3} \epsilon'$ and $F(\hat{X}) \leq \tfrac{1}{2} F^\star + \epsilon'$ if $F$ is symmetric; and by \Cref{prop:localMinStrong} 
the solutions returned by DCA with integral iterates $x^k$ and DCAR satisfy $\min\{F(\hat{X}), F(V \setminus \hat{X})\} \leq \tfrac{1}{3} F^\star + \tfrac{2}{3} \epsilon' d$ and $F(\hat{X}) \leq \tfrac{1}{2} F^\star + \epsilon' d$ if $F$ is symmetric.
These guarantees match the ones for the deterministic local search provided in \citep[Theorem 3.4]{Feige2011}
, which are optimal for symmetric functions \citep[Theorem 4.5]{Feige2011}, but not for general non-positive supermodular functions, where a $1/2$-approximation guarantee can be achieved \citep[Theorem 4.1]{Buchbinder2012}.

The non-positivity assumption in \cref{prop:LocalMinWSup} is necessary as demonstrated by the following example.

\begin{example}
Let $V = \{1,\cdots, 4\}, \alpha > 0, G(X) = 2 \alpha |X|$, and $H: 2^V \to \R$ be a set cover function defined as $H(X) = \alpha  |\bigcup_{i \in X} U_i|$, where $U_i = \{1, \cdots, i\}$. Then $G, H$ are submodular functions, and $F$ is supermodular but not non-positive, since $F(V) = 4 \alpha  > 0$. Consider a solution $\hat{X} = \{2\}$,$F(\hat{X}) = - \alpha (d - 4) = 0, F(V \setminus \hat{X}) = 2 \alpha$ and $\hat{X}$ is a strong local minimum of $F$ since adding or removing any number of elements yields the same objective or worse. On the other hand, the minimum is $\min_{X \subseteq V} F(X) = - 2 \alpha$, achieved at $X^* = \{4\}$, which is arbitrarily better than $\min\{F(\hat{X}), F(V \setminus \hat{X})\}$.
\end{example}


\end{document}